\definecolor{purple}{rgb}{0.6,0,1.0}
\newcommand{\wontfix}[1]{\wontfixcomment{#1}}
\def\Reg{\ensuremath{\text{\rm Reg}}}
\xpatchcmd{\proof}{\itshape}{\normalfont\proofnameformat}{}{}
\newcommand{\proofnameformat}{\bfseries}
\DeclarePairedDelimiter{\abs}{\lvert}{\rvert} %
\DeclarePairedDelimiter{\brk}{[}{]}
\DeclarePairedDelimiter{\crl}{\{}{\}}
\DeclarePairedDelimiter{\prn}{(}{)}
\DeclarePairedDelimiter{\nrm}{\|}{\|}
\DeclarePairedDelimiter{\tri}{\langle}{\rangle}
\let\Pr\undefined
\DeclareMathOperator{\En}{\mathbb{E}}
\DeclareMathOperator{\Pr}{Pr}
\newcommand{\ls}{\ell}
\newcommand{\veps}{\varepsilon}
\newcommand{\ldef}{\vcentcolon=}
\newcommand{\wh}[1]{\widehat{#1}}
\def\ddefloop#1{\ifx\ddefloop#1\else\ddef{#1}\expandafter\ddefloop\fi}
\def\ddef#1{\expandafter\def\csname bb#1\endcsname{\ensuremath{\mathbb{#1}}}}
\def\ddefloop#1{\ifx\ddefloop#1\else\ddef{#1}\expandafter\ddefloop\fi}
\def\ddef#1{\expandafter\def\csname b#1\endcsname{\ensuremath{\mathbf{#1}}}}
\def\ddef#1{\expandafter\def\csname c#1\endcsname{\ensuremath{\mathcal{#1}}}}
\def\ddef#1{\expandafter\def\csname h#1\endcsname{\ensuremath{\widehat{#1}}}}
\def\ddef#1{\expandafter\def\csname hc#1\endcsname{\ensuremath{\widehat{\mathcal{#1}}}}}
\def\ddef#1{\expandafter\def\csname t#1\endcsname{\ensuremath{\widetilde{#1}}}}
\def\ddef#1{\expandafter\def\csname tc#1\endcsname{\ensuremath{\widetilde{\mathcal{#1}}}}}
\renewcommand{\paragraph}[1]{\par\textbf{#1}\hspace{5pt}}
\renewcommand{\defeq}{\ldef}
\newcommand{\pinv}{\dagger}
\newcommand{\subg}{\mathsf{subG}}
\newcommand{\sube}{\mathsf{subE}}
\newcommand{\subG}{\subg}
\newcommand{\subE}{\sube}
\newcommand{\linucb}{\textsf{LinUCB}\xspace}
\newcommand{\modelselection}{\textsf{ModCB}\xspace}
\newcommand{\mainalgo}{\modelselection}
\newcommand{\mainalgolong}{Model Selection for Contextual Bandits}
\newcommand{\oracle}{\textsf{Oracle}\xspace}
\newcommand{\iltcb}{\textsf{ILOVETOCONBANDITS}\xspace}
\newcommand{\iltcbshort}{\textsf{ILTCB}\xspace}
\newcommand{\estimateresidual}{\textsf{EstimateResidual}\xspace}
\def\expix{\textsf{Exp4-IX}}
\newcommand{\algcomment}[1]{\textcolor{blue}{\footnotesize{\texttt{\textbf{//
          #1}}}}}
\newcommand{\algcommentbig}[1]{\textcolor{blue}{\footnotesize{\texttt{\textbf{/*
          #1~*/}}}}}
\newcommand{\cIbar}{\overline{\cI}}
\newcommand{\expexp}{\kappa}
\newcommand{\currentalg}{\textsf{\expix}}
\newcommand{\Picompact}{\Pi}
\newcommand{\trn}{\top}
\newcommand{\psdgt}{\succ}
\newcommand{\cov}{\Sigma}
\newcommand{\empcov}{\wh{\cov}}
\newcommand{\eigmin}{\lambda_{\mathrm{min}}}
\newcommand{\eigmax}{\lambda_{\mathrm{max}}}
\newcommand{\betas}{\beta^{\star}}
\newcommand{\pis}{\pi^{\star}}
\newcommand{\Otilde}{\tilde{O}}
\newcommand{\specnorm}{2}
\newcommand{\expone}{(1)}
\newcommand{\exptwo}{(2)}
\renewcommand{\Reg}{\mathrm{Reg}}
\newcommand{\ms}{m^{\star}}
\newcommand{\mh}{\wh{m}}
\newcommand{\Ls}{L^{\star}}
\newcommand{\pisq}{\pi^{\mathrm{sq}}}
\renewcommand{\mb}[1]{\boldsymbol{#1}}
\newcommand{\Tmin}{T^{\mathrm{min}}}
\newcommand{\version}{arxiv}
\title{{\huge Model selection for contextual bandits}}
\date{}
\author{%
  Dylan J. Foster\\
 {\small Massachusetts Institute of Technology}\\
  {\small dylanf@mit.edu}
  \and
  Akshay Krishnamurthy\\
  {\small Microsoft Research NYC}\\
  {\small akshay@cs.umass.edu}
  \and
  Haipeng Luo\\
  {\small University of Southern California}\\
  {\small haipengl@usc.edu}
}
\begin{document}

\maketitle

\begin{abstract}

We introduce the problem of model selection for contextual bandits, where a
learner must adapt to the complexity of the optimal policy while balancing exploration and exploitation. Our main result is a new model selection guarantee for linear contextual bandits. We work in the stochastic realizable setting with a sequence of nested linear policy classes of dimension $d_1 < d_2 < \ldots$,
where the $m^\star$-th class contains the optimal policy, and we design an
algorithm that achieves $\otil\prn{T^{2/3}d^{1/3}_{m^\star}}$
regret \emph{with no prior knowledge} of the optimal dimension
$d_{m^\star}$. The algorithm also achieves regret $\otil\rbr{T^{3/4} + \sqrt{Td_{m^\star}}}$,
which is optimal for $d_{\ms}\geq{}\sqrt{T}$. This is the first
contextual bandit model selection result with non-vacuous regret for
all values of $d_{m^\star}$, and to the best of our knowledge is the first positive result of this type for any online learning setting with partial information. The
core of the algorithm is a new estimator for the gap in the best loss
achievable by two linear policy classes, which we show admits a
convergence rate faster than the rate required to learn the parameters for either class.
\end{abstract}

\section{Introduction}
% !TEX root = paper.tex

Model selection is the fundamental statistical task of choosing a
hypothesis class using data. The choice of hypothesis
class modulates a tradeoff between approximation error and estimation
error, as a small class can be learned with less data, but may have
worse asymptotic performance than a richer class. In the classical statistical learning setting, model selection
algorithms provide the following luckiness guarantee: If the class of
models decomposes as a nested sequence
$\Fcal_1 \subset \Fcal_2\subset\cdots\Fcal_m\subset\Fcal$, the sample
complexity of the algorithm scales with the statistical complexity of
the smallest subclass $\Fcal_{m^\star}$ containing the true model,
even though $m^\star$ is not known in advance. Such guarantees date back to Vapnik's structural risk
minimization principle and are by now well-known~\citep{vapnik1982estimation,vapnik1992principles,DevroyeLugosi96,birge1998minimum,shawe1998structural,lugosi1999adaptive,Koltchinskii2001,bartlett2002model,massart2007concentration}.
%\footnote{Many
% of these results provide more general agnostic model selection
%  guarantees without
%  assuming any subclass contains the true model. In this paper we
%  focus on the simpler realizable/well-specified setting.} 
In practice, one may use cross-validation---the de-facto model selection procedure---to decide
whether to use, for example,  a linear model, a decision tree, or a neural network. That cross-validation appears in virtually every machine learning pipeline highlights the
necessity of model selection for successful ML deployments.

We investigate model selection in contextual bandits, a
simple interactive learning setting. Our main question is:
  \textit{Can model selection guarantees be achieved in contextual
    bandit learning, where a learner must balance exploration and
    exploitation to make decisions online?}

Contextual bandit learning is more challenging than
statistical learning on two fronts: First, decisions must be made
online without seeing the entire dataset, and second, the
learner's actions influence what data is observed (``bandit feedback''). 
Between these extremes is full-information online learning, where the learner does
not have to deal with bandit feedback, but still makes decisions online. Even in this simpler setting, model selection is challenging, since the learner must attempt to identify the appropriate model class while making
irrevocable decisions and incurring regret. Nevertheless, several prior
works on so-called parameter-free online learning \citep{mcmahan2013minimax,orabona2014simultaneous,koolen2015second,luo2015achieving,foster2017parameter,cutkosky2017online}
provide algorithms for online model selection with guarantees analogous to those in statistical learning. With bandit feedback, however, the learner must carefully balance exploration and exploitation, which presents a substantial challenge for model selection. At an intuitive level, the reason is that different hypothesis
classes require different amounts of exploration, but either over- or
under-exploring can incur significant regret (A detailed discussion requires a formal setup and is deferred
to~\pref{sec:preliminaries}).  At this point, it suffices to say that
prior to this work, we are not aware of any adequate model selection guarantee that
adapts results from statistical learning to any online learning setting with partial
information. 

We provide a new model selection
guarantee for the linear stochastic contextual
bandit setting~\citep{chu2011contextual,abbasi2011improved}.  We consider a sequence of
feature maps into $d_1 < d_2 < \ldots < d_M$ dimensions and assume
that the losses are linearly related to the contexts via the
$m^\star$-th feature map, so that the optimal policy is a
$d_{m^\star}$-dimensional linear policy. We design an
algorithm that achieves $\otil\prn{T^{2/3}d^{1/3}_{m^\star}}$
regret to this optimal policy over $T$ rounds, \emph{with no prior
  knowledge of $d_{\ms}$}. As this bound has no dependence on the maximum
dimensionality $d_M$, we say that the algorithm adapts to the
complexity of the optimal policy.  
%% While the linear setting is a natural and
%% important one \citep{chu2011contextual,abbasi2011improved}, 
All prior approaches suffer linear regret for non-trivial values of $d_{\ms}$, whereas the regret of our algorithm is sublinear whenever $d_{\ms}$ is such that the problem is
learnable. Our algorithm can also be tuned to achieve
$\tilde{O}\rbr{T^{3/4} + \sqrt{Td_{m^\star}}}$ regret, which matches
the optimal rate when $d_{\ms}\geq{}\sqrt{T}$.

At a technical level, we design a sequential test to determine
whether the optimal square loss for a large linear class is
substantially better than that of a smaller linear class. We show that this
test has \emph{sublinear} sample complexity: while learning a
near-optimal predictor in $d$ dimensions requires at least $\Omega(d)$ labeled examples,
we can estimate the improvement in \emph{value} of the optimal loss using only $O(\sqrt{d})$
examples, analogous to so-called variance estimation results in
statistics \citep{dicker2014variance,kong2018estimating}. Crucially, this
implies that we can test whether or not to
use the larger class without over-exploring for the smaller
class.

%%% Local Variables:
%%% mode: latex
%%% TeX-master: "paper"
%%% End:

\section{Preliminaries}
\label{sec:preliminaries}
% !TEX root = paper.tex

We work in the stochastic contextual bandit setting \citep{langford2008epoch,beygelzimer2011contextual,agarwal2014taming}. The setting is defined by a context space $\Xcal$, a finite action space $\Acal \defeq
\{1,\ldots,K\}$ and a distribution $\Dcal$ supported over $(x,\ell)$
pairs, where $x \in \Xcal$ and $\ell \in \RR^{\cA}$ is a loss
vector. The learner interacts with nature for $T$ rounds, where in
round $t$: (1) nature samples $(x_t,\ell_t) \sim \Dcal$, (2) the learner
observes $x_t$ and chooses action $a_t$, (3) the learner observes
$\ell_t(a_t)$. The goal of the learner is to choose actions to
minimize the cumulative loss. 

Following several prior works~\citep{chu2011contextual,abbasi2011improved,agarwal2012contextual,russo2013eluder,li2017provably},
we study a variant of the contextual bandit setting where the learner has access
to a class of regression functions
$\Fcal: \Xcal \times \Acal \to \RR$ containing the Bayes optimal regressor
\begin{equation}
\label{eq:realizable}
f^\star(x,a) \ldef \EE\sbr{\ell(a) \mid x}\quad \forall x,a.
\end{equation}
We refer to this assumption ($f^{\star}\in\cF$) as \emph{realizability}. For each regression function $f$
we define the induced policy $\pi_f(x) \ldef \argmin_{a} f(x,a)$. Note that $\pis\ldef{}\pi_{f^\star}$
is the globally optimal policy, and chooses the best action
on every context. We measure performance via regret to $\pis$:
\begin{align*}
\Reg \ldef \sum_{t=1}^T\ell_t(a_t) -
\sum_{t=1}^T\ell_t(\pis(x_t)).
\end{align*}
Low regret is tractable here due to the realizability assumption, and it is well known that the
optimal regret is $\tilde{\Theta}\rbr{\sqrt{T \cdot \textrm{comp}(\Fcal)}}$, where
$\textrm{comp}(\Fcal)$ measures the statistical complexity of $\Fcal$. For example, 
$\textrm{comp}(\Fcal) = \log |\Fcal|$ for finite classes, and $\textrm{comp}(\cF)=d$
for $d$-dimensional linear classes \citep{agarwal2012contextual,chu2011contextual}.\footnote{We suppress dependence on $K$ and logarithmic dependence on $T$ for
  this discussion.}

\paragraph{Model selection for contextual bandits.}
We aim for refined guarantees
that scale with the complexity of the optimal regressor $f^\star$
rather than the worst-case complexity of the class $\cF$. To this end, we assume that $\Fcal$ is
structured as a nested sequence of classes
$\Fcal_1\subset\Fcal_2\subset\ldots\subset\Fcal_M = \Fcal$, and we
define $m^\star \defeq \min\cbr{m: f^\star \in \Fcal_m}$. The model selection
problem for contextual bandits asks: \vspace{-0.5em}
\begin{center}
\emph{Given that $m^\star$ is not known in advance, can
  we achieve regret scaling as
  $\tilde{O}\prn{\sqrt{T\cdot \textnormal{comp}(\Fcal_{m^\star})}}$, rather than the less
  favorable $\tilde{O}\prn{\sqrt{T\cdot \textnormal{comp}(\Fcal)}}$?}
\end{center}\vspace{-.5em}
A slightly weaker model selection problem is to achieve $\tilde{O}\rbr{T^{\alpha}\cdot\textnormal{comp}(\Fcal_{m^\star})^{1-\alpha}}$ for some $\alpha \in
[\nicefrac{1}{2},1)$, again without knowing $m^\star$. Crucially, the exponents on
$T$ and $\textnormal{comp}(\Fcal_{m^\star})$ sum to one, implying that we can achieve
sublinear regret whenever $\textnormal{comp}(\Fcal_{m^\star})$ is sublinear in $T$, which is
precisely whenever the optimal model class is learnable. This implies that the bound, in spite of having worse dependence on $T$, adapts to the complexity of the optimal class with no prior knowledge.

We achieve this type of guarantee for linear contextual bandits. We assume that each regressor class
$\Fcal_m$ consists of linear functions of the form
\begin{align*}
\Fcal_m \defeq \cbr{(x,a) \mapsto \inner{\beta}{\phi^m(x,a)} \mid \beta \in \RR^{d_m}},
\end{align*}
where $\phi^m: \Xcal \times \Acal \to \RR^{d_m}$ is a fixed feature
map. To obtain a nested sequence of classes, and to ensure the
complexity is monotonically increasing, we assume that $d_1 < d_2 <
\ldots, d_M = d$ and that for each $m$, the feature map $\phi^m$ contains
the map $\phi^{m-1}$ as its first $d_{m-1}$ coordinates.\footnote{This
  is without loss of generality in a certain quantitative sense, since
  we can concatenate features without significantly
  increasing the complexity of
  $\Fcal_m$. See~\pref{cor:model_selection_non_nested}.} If
  $m^\star$ is the smallest feature map that realizes the optimal
  regressor, we can write \[f^\star(x,a)
  = \tri{\beta^\star,\phi^{m^\star}(x,a)},\]
 where $\beta^\star \in \RR^{d_{m^\star}}$ is the optimal coefficient
  vector.
%% We express the optimal regressor $f^\star(x,a)
%%   = \inner{\beta^\star}{\phi^{m^\star}(x,a)} 
In this setup, the optimal rate if $\ms$ is known is
$\Otilde(\sqrt{Td_{\ms}})$, obtained by \linucb{}
\citep{chu2011contextual}.\footnote{Regret scaling as $\tilde{O}(\sqrt{dT})$
  is optimal for the finite action setting we work in. Results for the
  infinite action case, where regret scales as
  $\tilde{\Theta}(d\sqrt{T})$, are incomparable to ours.} Our main
result achieves both $\otil\prn{T^{2/3}d^{1/3}_{m^\star}}$ regret
(i.e., $\alpha=2/3$) and $\otil\rbr{T^{3/4} +
  \sqrt{Td_{m^\star}}}$ regret without knowing $\ms$ in advance. 
\paragraph{Related work.}
The model selection guarantee we seek is straightforward
for full information online learning and statistical learning. A simple
strategy for the former setting is to use a low-regret online learner
for each sub-class $\Fcal_m$ and aggregate these base learners with a
master \textsf{Hedge} instance~\citep{freund1997decision}. Other
strategies include parameter-free methods
like \textsf{AdaNormalHedge}~\citep{luo2015achieving}
and \textsf{Squint}~\citep{koolen2015second}. Unfortunately, none of
these methods appear to adequately handle bandit feedback. For example, the
regret bounds of parameter-free methods do not depend on the so-called ``local
norms'', which are essential for achieving $\sqrt{T}$-regret in the
bandit setting via the usual importance weighting approach \citep{auer2002nonstochastic}. See~\pref{app:calculations} for further discussion.

In the bandit setting, two approaches we are aware of also fail:
the \textsf{Corral} algorithm of~\citet{agarwal2017corralling}, and an
adaptive version of the classical $\epsilon$-greedy
strategy~\citep{langford2008epoch}. Unfortunately, both algorithms
require tuning parameters in terms of the unknown index $m^\star$, and
naive tuning gives a guarantee of the form
$\tilde{O}(T^{\alpha}\textrm{comp}(\Fcal_{m^\star})^\beta)$ where $\alpha
+ \beta > 1$. For example, for finite classes \textsf{Corral} gives
regret $\sqrt{T} \log |\Fcal_{m^\star}|$. This guarantee is quite weak, since it is vacuous when $\log |\Fcal_{m^\star}| =
\Theta(\sqrt{T})$ even though such a class admits sublinear
regret if $\ms$ were known in advance (see \pref{app:calculations}). The conceptual takeaway from these
examples is that model selection for contextual bandits appears to
require new algorithmic ideas, even when we are satisfied with
$O\prn{T^{\alpha}\textrm{comp}(\Fcal_{m^\star})^{1-\alpha}}$-type rates.

Several recent papers have developed adaptive guarantees for
various contextual bandit settings. These include: (1)
adaptivity to easy data, where the optimal policy achieves low
loss~\citep{allenberg2006hannan,agarwal2017open,lykouris2017small,allen2018make},
(2) adaptivity to smoothness in settings with
continuous action
spaces~\citep{locatelli2018adaptivity,krishnamurthy2019contextual},
and (3) adaptivity in non-stationary environments, where distribution
drift parameters are
unknown~\citep{luo2017efficient,cheung2018learning,auer2018adaptively,chen2019new}. The
latter results can be cast as model selection with
appropriate nested classes of \emph{time-dependent} policies, but
these results are incomparable to our own, since they are specialized to the
non-stationary setting.

Interestingly, for multi-armed (non-contextual) bandits, several lower bounds
demonstrate that model selection is \emph{not} possible. The simplest
of these results is Lattimore's pareto
frontier~\citep{lattimore2015pareto}, which states that for
multi-armed bandits, if we want to ensure $O(\sqrt{T})$ regret against a single fixed arm
instead of the usual $O(\sqrt{KT})$ rate, we must incur $\Omega(K\sqrt{T})$ regret to the remaining $K-1$
arms. This precludes a model selection guarantee of the form
$\sqrt{T \cdot\textrm{comp}(\Acal)}$ since for bandits, the statistical
complexity is simply the number of arms.
Related lower bounds are known for Lipschitz
bandits~\citep{locatelli2018adaptivity,krishnamurthy2019contextual}. Our
results show that model selection \emph{is}
possible for contextual bandits, and thus highlight an important gap between the two settings.

In concurrent work,~\citet{chatterji2019osom} studied a similar model
selection problem with two classes, where the first class consists of
all $K$ constant policies and the second is a $d$-dimensional linear
class. They obtain logarithmic regret to the first class and
$O(\sqrt{Td})$ regret to the second, but their assumptions on the
context distribution are strictly stronger than our own. A detailed discussion is deferred
to the end of the section.

\paragraph{Technical preliminaries and assumptions.}
For a matrix $A$, $A^{\pinv}$ denotes the pseudoinverse and
$\nrm*{A}_{\specnorm}$ denotes the spectral norm. $I_{d}$ denotes the
identity matrix in $\bbR^{d\times{}d}$ and $\nrm*{\cdot}_{p}$ denotes
the $\ls_p$ norm. We use non-asymptotic big-$O$ notation, and use $\Otilde$ to hide terms logarithmic in $K$,
$d_{M}$, $M$, and $T$.

For a real-valued random variable $z$, we use the following notation
to indicate if $z$ is subgaussian or subexponential,
following \cite{vershynin2012introduction}:
\begin{equation}
z\sim\subG(\sigma^2) \Leftrightarrow \sup_{p\geq{}1}\crl{p^{-1/2}\prn{\En\abs*{z}^{p}}^{1/p}} \leq{}
\sigma,
\quad
z\sim\subE(\lambda) \Leftrightarrow \sup_{p\geq{}1}\crl{p^{-1}\prn{\En\abs*{z}^{p}}^{1/p}} \leq{}
\lambda.
\end{equation}
%% \begin{equation}
%% \sup_{p\geq{}1}\crl{p^{-1/2}\prn{\En\abs*{z}^{p}}^{1/p}} \leq{}
%% \sigma,\label{eq:subgaussian}
%% \end{equation}
%% and is said to be $\lambda$-subexponential if
%% \begin{equation}
%% \sup_{p\geq{}1}\crl{p^{-1}\prn{\En\abs*{z}^{p}}^{1/p}} \leq{}
%% \lambda.\label{eq:subexponential}
%% \end{equation}
%% We denote the first case by $z\sim\subG(\sigma^{2})$ and the second
%% case by $z\sim\subE(\lambda)$. 
When $z$ is a random variable in $\bbR^{d}$, we write
$z\sim\subG_d(\sigma^{2})$ if $\tri*{\theta,z}\sim\subG(\sigma^{2})$
for all $\nrm*{\theta}_{2}=1$ and $z\sim\subE_d(\lambda)$ if
$\tri*{\theta,z}\sim\subE(\lambda)$ for all
$\nrm*{\theta}_{2}=1$. These definitions are equivalent to
many other familiar definitions for subgaussian/subexponential random
variables; see \pref{app:subgaussian}.

We assume that for each $m$ and  $a\in\cA$,
$\phi^m(x,a) \sim \subG(\tau_m^2)$ under $x\sim\cD$. Nestedness
implies that $\tau_1 \leq \tau_2\leq \ldots$, and we define $\tau=\tau_{M}$. We also assume that
$\ell(a) - \EE[\ell(a)\mid x] \sim \subG(\sigma^2)$ for all $x\in\cX$ and $a\in\cA$. Finally, we assume
that $\nrm*{\beta^\star} \leq B$. To keep notation clean, we use the convention that $\sigma\leq{}\tau$ and $B\leq{}1$, which ensures that
$\ls(a)\sim\subG(4\tau^{2})$.

We require a lower bound on the eigenvalues of the second
moment matrices for the feature
vectors.
For each $m$,
define
$\cov_m\defeq\frac{1}{K}\sum_{a\in\cA}\En_{x\sim\cD}\brk*{\phi^{m}(x,a)\phi^{m}(x,a)^{\trn}}$. We
let $\gamma_m^{2}\ldef\eigmin(\cov_m)$, where $\eigmin(\cdot)$ denotes
the smallest eigenvalue; nestedness implies
$\gamma_1\geq{}\gamma_2\geq{}\ldots$. We assume $\gamma_m\geq{}\gamma>0$ for all
$m$, and our regret bounds scale inversely proportional to $\gamma$.

Note that prior linear contextual bandit
algorithms \citep{chu2011contextual,abbasi2011improved} do not require
lower bounds on the second moment matrices. As discussed
earlier, the work of \cite{chatterji2019osom} obtains stronger model
selection guarantees in the case of two classes, but their result
requires a lower bound on
$\eigmin\prn*{\En\brk*{\phi(x,a)\phi(x,a)^{\top}}}$ for all
actions. Previous work suggests that advanced exploration is not
needed under such assumptions \citep{bastani2017mostly, kannan2018smoothed,
  raghavan2018externalities}, which considerably simplifies the
problem.\footnote{It appears that exploration is still required for linear contextual bandits under our average
  eigenvalue assumption. Consider the case $d=2$ and $\beta^\star =
  (1/2,1)$. Suppose there are four actions, and that at the first
  round, $\phi(x,\cdot)=\crl*{ e_1, -e_1, e_2, -e_2}$. We can ensure  that with probability $1/2$, the
  first action played will be one of the first two. At this point a
  greedy strategy will always choose $e_1$, but the average context distribution has minimum eigenvalue $1$. } As such, the result should be seen as complementary to our
own. Whether model selection can be achieved without some type of eigenvalue condition is an important open question.

%%% Local Variables:
%%% mode: latex
%%% TeX-master: "paper"
%%% End:

\section{Model selection for linear contextual bandits}
\label{sec:model_selection_main}

We now present our algorithm for model selection in
linear contextual bandits, \mainalgo (``\mainalgolong{}''). Pseudocode
is displayed in~\pref{alg:main}. The algorithm maintains an ``active'' policy class index $\mh \in [M]$, which it updates
over the $T$ rounds starting from $\mh=1$.  The algorithm updates $\mh$
only when it can prove that $\mh \ne m^\star$, which is achieved
through a statistical test called \estimateresidual{} (\pref{alg:estimator}). When $\mh$ is not being updated, the algorithm
operates as if $\mh=m^\star$ by running a low-regret contextual bandit
algorithm with the policies induced by $\Fcal_{\mh}$; we call this
policy class $\Pi_m
\defeq \cbr{x \mapsto \argmin_{a \in \Acal} \inner{\beta}{\phi^m(x,a)} \mid
  \nrm*{\beta}_2 \leq \tau/\gamma}$.\footnote{The norm constraint $\tau/\gamma$ guarantees that $\Pi_m$ contains parameter vectors
  arising from a certain square loss minimization problem under our
  assumption that $\nrm*{\betas}_{2}\leq{}1$; see \pref{prop:beta_basics}.}
Note that the low-regret algorithm we run for $\Pi_{\mh}$ cannot based on
realizability, since $\Fcal_{\mh}$ will
not contain the true regressor $f^{\star}$ until we reach $\ms$. This rules out the usual
linear contextual bandit algorithms such as \linucb{}. Instead we use a variant
of \expix~\citep{neu2015explore}, which is an agnostic
contextual bandit algorithm and does not depend on realizability. To deal with infinite classes, unbounded losses, and other
technical issues, we require some simple modifications to
\expix; pseudocode and analysis are deferred to~\pref{app:cb}.

\begin{algorithm}[t]
\begin{algorithmic}
\STATE \textbf{input:}
\begin{itemize}[leftmargin=2em]
\item Feature maps $\crl*{\phi^{m}(\cdot,\cdot)}_{m\in\brk{M}}$, where
  $\phi^{m}(x,a)\in\bbR^{d_m}$, and time $T\in\bbN$.
\item Subgaussian parameter $\tau>0$, second moment parameter $\gamma>0$.
\item Failure probability $\delta\in(0,1)$, exploration parameter $\expexp\in(0,1)$, confidence parameters. $C_1,C_2>0$.
\end{itemize}
\STATE \textbf{definitions:}
\begin{itemize}[leftmargin=2em]
%% \item Define
%%   $\Picompact_m=\crl*{x\mapsto\argmin_{a\in\cA}\tri*{\beta,\phi^{m}(x,a)}\mid{}\nrm*{\beta}_{2}\leq{}\tau/\gamma}$. 
\item Define $\delta_0=\delta/10M^{2}T^2$ and $\mu_{t}=(K/t)^{\expexp}\wedge{}1$.
\item Define $\alpha_{m,t} = C_1\cdot\Bigl(\frac{\tau^{6}}{\gamma^{4}}\cdot\frac{d_m^{1/2}\log^{2}(2d_m/\delta_0)}{K^{\expexp}t^{1-\expexp}}
    + \frac{\tau^{10}}{\gamma^{8}}\cdot{}\frac{d_m\log(2/\delta_0)}{t}\Bigr)$. %\dfcomment{if we change $\mu_t$ this needs to update.}
\item Define $T_{m}^{\mathrm{min}} =
  C_2\cdot{}\prn*{\frac{\tau^{4}}{\gamma^{2}}\cdot{}d_m\log(2/\delta_0)+\log^{\frac{1}{1-\kappa}}(2/\delta_0)
  + K}+1$.
\end{itemize}
\STATE \textbf{initialization:}
\begin{itemize}[leftmargin=2em]
\item $\mh\gets{}1$. \algcomment{Index of candidate policy class.}
\item $\currentalg_{1}\gets\expix(\Picompact_1, T, \delta_0)$.
\item $S\gets\crl*{\emptyset}$. \algcomment{Times at which uniform
    exploration takes place.}
\end{itemize}
\FOR{$t=1,\ldots,T$}
\STATE Receive $x_t$.
\STATE \textbf{with probability $1-\mu_t$}
\STATE ~~~~~~Feed $x_t$ into $\currentalg_{\mh}$ and
take $a_t$ to be the predicted action. 
\STATE ~~~~~~Update $\currentalg_{\mh}$ with
$(x_t, a_t, \ls_t(a_t))$.
\STATE \textbf{otherwise}
\STATE ~~~~~~Sample $a_t$ uniformly from $\cA$ and let $S\gets{}S\cup\crl*{t}$.
\STATE \algcommentbig{Test whether we should move on to a larger policy
  class.}
% \STATE \textbf{for each $i>\mh$ do} \\
% \STATE ~~~~~~$\empcov_{i}\gets\frac{1}{K}\sum_{a\in\cA}\sum_{s=1}^{t}\phi^{i}(x_{s},a)
% \phi^{i}(x_{s},a)^{\trn}$. \algcomment{Empirical second moment.}
% \STATE ~~~~~~$\wh{\cE}_{\mh,i}\gets\estimateresidual(\crl*{(\phi^{i}(x_{s},a_{s}),\ls_{s}(a_{s})}_{s\in{}S},
%   \empcov_{\mh}, \empcov_{i})$. \algcomment{Gap estimate.}
% \STATE \textbf{for each $i>\mh$ do} \\
\STATE $\empcov_{i}\gets\frac{1}{K}\sum_{a\in\cA}\sum_{s=1}^{t}\phi^{i}(x_{s},a)
\phi^{i}(x_{s},a)^{\trn}$ for each $i\geq\mh$. \algcomment{Empirical
  second moment.}
\STATE $H_i \gets \crl{(\phi^{i}(x_{s},a_{s}),\ls(a_s))}_{s\in{}S}$ for each $i>\mh$.
\STATE $\wh{\cE}_{\mh,i}\gets\estimateresidual\prn*{H_i,
  \empcov_{\mh}, \empcov_{i}}$ for each $i>\mh$. \algcomment{Gap estimate.}

% be the empirical second
%     moment matrix formed from $\crl*{\phi^{i}(x_{t'},a)}_{t'\leq{}t}$,
%     and let $\empcov_{i}=\frac{1}{K}\sum_{a\in\cA}\empcov_{i,a}$.
\IF{there exists $i>\mh$ such that
  $\wh{\cE}_{\mh,i}\geq{}2\alpha_{i,t}$ and $t\geq{}\Tmin_i$}
\STATE Let $\mh$ be the smallest such $i$. Re-initialize $\currentalg_{\mh}\gets\expix(\Picompact_{\mh}, T, \delta_{0})$.
\ENDIF
\ENDFOR
% \STATE Define
% \[
% \wh{R}=\wh{D}^{\pinv}-\empcov^{\pinv}\quad\text{where}\quad
% \wh{D} = \left(\begin{array}{ll}
% \empcov_1&0_{d_1\times{}d_2}\\
% 0_{d_2\times{}d_1}&0_{d_2\times{}d_2}
% \end{array}
% \right).%,\quad\text{and}\quad{}.
% \]
% \STATE Return estimate
% \[
% \wh{\cE}=\frac{1}{{n\choose2}}\sum_{i=1}^{n}\sum_{j<i}\tri*{\empcov^{1/2}\wh{R}x_iy_i,\empcov^{1/2}\wh{R}x_jy_j}.
% \]
\end{algorithmic}
\caption{\mainalgo (\mainalgolong)}
\label{alg:main}
\end{algorithm}

\subsection{Key idea: Estimating prediction error with sublinear
  sample complexity}
Before stating the main result, we elaborate on the statistical test
(\estimateresidual{}) used in \pref{alg:main}. \estimateresidual{} estimates an
upper bound on the gap between the best-in-class loss for two policy
classes $\Pi_i$ and $\Pi_j$, which we define as $\Delta_{i,j} \defeq
\Ls_{i}-\Ls_j$, where $\Ls_i\defeq\min_{\pi\in\Pi_i}L(\pi)$. At each round, \pref{alg:main} uses \estimateresidual{} to estimate the gap $\Delta_{\mh,i}$ for all $i>\mh$. If the estimated
gap is sufficiently large for some $i$, the algorithm sets $\mh$ to the smallest
such $i$ for the
next round. This approach is based on the following
observation: For all $m\geq\ms$, $\Ls_{m}=\Ls_{\ms}$. Hence, if
$\Delta_{\mh,i}>0$, it must be the case that $\mh\neq{}\ms$, and we
should move on to a larger class.

The key challenge is to estimate $\Delta_{\mh,i}$ while ensuring low
regret. Naively, we could use uniform exploration and find a policy in
$\Pi_i$ that has minimal empirical loss, which gives an estimate of
$\Ls_i$.  Unfortunately, this requires tuning the exploration
parameter in terms of $d_i$ and would compromise the regret if $\mh
=\ms$. Similar tuning issues arise with other approaches and are discussed further in \pref{app:calculations}.

We do not estimate the gaps $\Delta_{i,j}$ directly, but instead estimate an upper bound motivated by the realizability
assumption. For each $m$, define
 \begin{equation}
   \label{eq:square_loss_min}
\betas_m\defeq\argmin_{\beta\in\bbR^{d_m}}\frac{1}{K}\sum_{a\in\cA}\En_{x\sim\cD}\prn*{\tri{\beta,\phi^{m}(x,a)}-\ls(a)}^{2},
 \end{equation}
and define\footnote{In
    \pref{app:main} we show that $\betas_m$ and consequently
    $\cE_{i,j}$ are always uniquely defined.}
\begin{equation}
  \label{eq:square_loss_gap}
  \cE_{i,j}\defeq\frac{1}{K}\sum_{a\in\cA}\En_{x\sim\cD}\prn*{\tri{\betas_{i},\phi^{i}(x,a)}-\tri{\betas_j,\phi^{j}(x,a)}}^{2}.
\end{equation}
We call $\cE_{i,j}$ the \emph{square loss gap} and we
call $\Delta_{i,j}$ the \emph{policy gap}. A key lemma driving these
definitions is that the square loss gap upper bounds the policy gap.
\begin{lemma}
\label{lem:sq_translation}
For all $i \in [M]$ and $j \geq \ms$, $\Delta_{i,j} \leq
\sqrt{4K\cE_{i,j}}$. Furthermore, if $i,j \geq \ms$ then $\cE_{i,j} = 0$.
\end{lemma}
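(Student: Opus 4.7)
The plan is to handle the two claims separately, both exploiting realizability together with the eigenvalue assumption $\gamma_m\ge\gamma>0$. For the second claim ($\cE_{i,j}=0$ when $i,j\ge m^\star$), the key step is to show that for every $m\ge m^\star$, $\tri*{\beta^\star_m,\phi^m(x,a)}=f^\star(x,a)$ almost surely. By realizability and nestedness of the feature maps, padding $\beta^\star$ with zeros produces $\bar\beta_m\in\bbR^{d_m}$ with $\tri*{\bar\beta_m,\phi^m(x,a)}=f^\star(x,a)$ almost surely; this $\bar\beta_m$ attains the Bayes squared risk $\En[\mathrm{Var}(\ell(a)\mid x)]$ and so minimizes the quadratic in \eqref{eq:square_loss_min}. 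Strict convexity of that quadratic, which follows from $\gamma_m>0$, gives uniqueness of the minimizer, so $\beta^\star_m=\bar\beta_m$. For any $i,j\ge m^\star$ both linear predictors inside $\cE_{i,j}$ then coincide with $f^\star$ almost surely, and the claim follows.

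For the first claim, fix $i\in[M]$ and $j\ge m^\star$. The uniqueness argument above yields $\pi_{\beta^\star_j}=\pi^\star$, and since $\|\beta^\star_j\|_2\le\tau/\gamma$ by \pref{prop:beta_basics}, $\pi^\star\in\Pi_j$, so $L^\star_j=L(\pi^\star)$. The same norm bound for $\beta^\star_i$ gives $\pi_{\beta^\star_i}\in\Pi_i$, so $L^\star_i\le L(\pi_{\beta^\star_i})$. Writing $g(x,a)=\tri*{\beta^\star_i,\phi^i(x,a)}$, the remaining step is a standard regression-to-decision translation: for each $x$, with $\hat a=\pi_g(x)$ and $a^\star=\pi^\star(x)$, adding and subtracting $g$ at both actions and discarding the nonpositive term $g(x,\hat a)-g(x,a^\star)$ yields the pointwise inequality $f^\star(x,\hat a)-f^\star(x,a^\star)\le 2\max_{a\in\cA}\lvert g(x,a)-f^\star(x,a)\rvert$. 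Taking expectations, bounding $\max_a h(a)^2\le\sum_a h(a)^2$, and applying Jensen's inequality for $\sqrt{\cdot}$ then gives
\[
\Delta_{i,j}\;\le\;2\,\En_x\!\sqrt{\textstyle\sum_{a}(g(x,a)-f^\star(x,a))^2}\;\le\;2\sqrt{\textstyle\En_x\sum_{a}(g(x,a)-f^\star(x,a))^2}\;=\;\sqrt{4K\cE_{i,j}}.
\]

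I do not expect any serious obstacle. The only non-routine step is the uniqueness of $\beta^\star_m$ for $m\ge m^\star$, which is immediate from the strict convexity afforded by $\gamma>0$; the rest is the familiar ``$2\max_a$'' argmin-vs-argmin bound combined with Cauchy--Schwarz over the $K$ actions to convert an $L^\infty$-over-$\cA$ quantity into the $L^2$ quantity $\cE_{i,j}$.
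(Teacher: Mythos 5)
Your proof is correct and follows essentially the same route as the paper: zero-padding $\beta^\star$ and invoking uniqueness (the paper via first-order conditions with $\Sigma_m\succ 0$, you via Bayes-optimality plus strict convexity — equivalent) to get $\pi^{\mathrm{sq}}_j=\pi^\star$ and $\cE_{i,j}=0$ for $i,j\ge m^\star$, then the standard argmin-vs-argmin ``$2\max_a$'' step followed by Jensen and summing over the $K$ actions, exactly as in the paper's Lemma~\ref{lem:square_loss_translation} and Proposition~\ref{prop:beta_basics}. One trivial slip: the norm bound $\nrm{\betas_i}_2\le\tau/\gamma$ ensuring $\pi_{\betas_i}\in\Pi_i$ comes from Proposition~\ref{prop:norm_bounds}, not Proposition~\ref{prop:beta_basics}.
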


With realizability, the square loss gap behaves similar to the
policy gap: it is non-zero whenever the latter is non-zero, and $\ms$ has zero gap to all $m \geq \ms$. Therefore, we seek estimators for the square loss gap
$\cE_{\mh,i}$ for $i>\mh$. Observe that $\cE_{\mh,i}$ depends on the optimal predictors
$\betas_{\mh},\betas_i$ in the two classes. A natural approach to estimate
$\cE_{\mh,i}$ is to solve regression problems over both classes to
estimate the predictors, then plug them into the expression for $\cE$;
we call this the \emph{plug-in approach}. As this relies on linear
regression, it gives an $O(\nicefrac{d_i}{n})$ error rate for estimating
$\cE_{\mh,i}$ from $n$ uniform exploration samples. Unfortunately, since the error scales linearly with the size of the
larger class, we must over-explore to ensure low error, and this
compromises the regret if the smaller class is optimal.

As a key technical contribution, we design more efficient estimators
for the square loss gap $\cE_{\mh,i}$. We work in the following slightly more general gap estimation setup: we receive pairs $\{(x_s,y_s)\}^n_{s=1}$ i.i.d. from a distribution $\Dcal \in \Delta(\RR^d \times \RR)$, where
$x \sim \subG(\tau^2)$ and $y \sim \subG(\sigma^2)$. We partition the feature space into $x = (x^{(1)}, x^{(2)})$, where $x^{(1)} \in \RR^{d_1}$, and define
\begin{align*}
\beta^\star \defeq \argmin_{\beta \in \RR^d} \EE\rbr{\inner{\beta}{x}-y}^2, \qquad \beta_1^\star \defeq \argmin_{\beta \in \RR^{d_1}} \EE \rbr{\tri{\beta,x^{(1)}} - y}^2.
\end{align*}
These are, respectively, the optimal linear predictor and the optimal
linear predictor restricted to the first $d_1$ dimensions. The square loss gap for the two predictors is defined as
$\cE \defeq \EE\rbr{\inner{\beta^\star}{x}
- \tri{\beta_1^\star,x^{(1)}}}^2$.  Our problem of estimating
$\cE_{\mh,i}$ clearly falls into this general setup if we uniformly
explore the actions for $n$ rounds, then set $\{x_s\}^n_{s=1}$ to be
the features obtained through the feature map $\phi^i$ and
$\{y_s\}^n_{s=1}$ to be the observed losses.

\begin{algorithm}[t]
\begin{algorithmic}
\STATE \textbf{input:} Examples
$\{(x_s,y_s)\}_{s=1}^n$, second moment matrix estimates
$\empcov\in\bbR^{d\times{}d}$ and $\empcov_1\in\bbR^{d_1\times{}d_1}$.
\STATE Define $d_2 = d - d_1$ and
\[
\wh{R}=\wh{D}^{\pinv}-\empcov^{\pinv},\quad\text{where}\quad
\wh{D} = \left(\begin{array}{ll}
\empcov_1&0_{d_1\times{}d_2}\\
0_{d_2\times{}d_1}&0_{d_2\times{}d_2}
\end{array}
\right).%,\quad\text{and}\quad{}.
\]
\STATE Return estimator
\[
\wh{\cE}=\frac{1}{{n\choose2}}\sum_{s<t}\tri*{\empcov^{1/2}\wh{R}x_s y_s,\empcov^{1/2}\wh{R}x_t y_t}.
\]
\end{algorithmic}
\caption{\estimateresidual}
\label{alg:estimator}
\end{algorithm}

%% \hldelete{
%% In addition to the $n$ labeled samples, our estimator uses $m$
%% unlabeled samples $x_1,\ldots,x_m$ to estimate second moment matrices,
%% $\Sigma = \EE \sbr{x x^\top}$ and $\Sigma_1 = \EE\sbr{x^{(1)}x^{(1)T}}$.
%% The pseudocode for the estimator is displayed in~\pref{alg:estimator},
%% and the following theorem gives main sample complexity guarantee
%% }
The pseudocode for our estimator \estimateresidual is displayed in~\pref{alg:estimator}.
In addition to the $n$ labeled samples, it takes as input two empirical second moment matrices $\empcov$ and $\empcov_1$ constructed via an extra set of $m$ i.i.d. unlabeled samples; these serve as estimates for
$\Sigma \defeq \EE \sbr{x x^\top}$ and $\Sigma_1 \defeq \EE\sbr{x^{(1)}x^{(1)\top}}$.
The intuition 
%% behind the estimator $\wh{\cE}$ 
is that one can write
the square loss gap as $\cE=\nrm*{\Sigma^{1/2}R\En\brk*{xy}}_{2}^{2}$
where $R\in\bbR^{d\times{}d}$ is a certain function of $\Sigma$ and $\Sigma_1$.
%% , with $\empcov$ and $\empcov_1$ replaced by $\Sigma$ and $\Sigma_1$ 
%(see the proof of~\pref{thm:residual_est_main}).
\estimateresidual simply replaces the second moment matrices
with their empirical counterparts and then uses the labeled examples to
estimate the weighted norm of $\En\brk*{xy}$ through a U-statistic.
The main guarantee for the estimator is as follows.
\begin{theorem}
\label{thm:residual_est_main}
Suppose we take $\empcov$ and $\empcov_1$ to be the empirical
second moment matrices formed from $m$ i.i.d. unlabeled samples. Then when $m \geq
C(d + \log(2/\delta))\tau^4/\eigmin(\cov)$, \estimateresidual, given $n$ labeled samples, guarantees that with probability at least $1-\delta$,
\begin{align}
\label{eq:residual_est_main}
  \abs*{\wh{\cE}-\cE} \leq{} \frac{1}{2}\cE 
+
O\prn*{\frac{\sigma^{2}\tau^{4}}{\eigmin^{2}(\cov)}\cdot\frac{d^{1/2}\log^{2}(2d/\delta)}{n}
+ 
\frac{\tau^{6}}{\eigmin^{4}(\cov)}\cdot\frac{d\log(2/\delta)}{m}\cdot \nrm*{\En\brk*{xy}}_{2}^{2}
}.
\end{align}
\end{theorem}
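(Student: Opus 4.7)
The plan is to decompose the error into two pieces: a U-statistic deviation conditional on the empirical second moment matrices, and a plug-in error arising from using $\empcov, \empcov_1$ in place of $\cov, \cov_1$. The starting point is a closed form for $\cE$. From the first-order optimality conditions for \eqref{eq:square_loss_min}, $\betas = \cov^{\pinv}\mu$ where $\mu \defeq \En[xy]$, and the zero-padded version $(\beta_1^{\star},0)\in\bbR^d$ equals $D^{\pinv}\mu$, where $D$ is the population analogue of $\wh D$ (block diagonal with $\cov_1$). Consequently,
\[
\cE = \nrm*{\cov^{1/2}(\cov^{\pinv}-D^{\pinv})\mu}_2^2 = \nrm*{\cov^{1/2} R \mu}_2^{2}, \qquad R \defeq D^{\pinv}-\cov^{\pinv}.
\]

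Next, condition on the $m$ unlabeled samples so that $A\defeq\empcov^{1/2}\wh R$ is fixed. Then $\wh \cE$ is a degree-two U-statistic in the labeled data with kernel $h(z_1,z_2)=\tri*{Ax_1 y_1, Ax_2 y_2}$, so $\En[\wh\cE\mid\text{unlabeled}] = \nrm*{A\mu}_2^{2}$. I would then bound
\[
\abs*{\wh\cE-\cE} \leq \abs*{\wh\cE - \nrm*{A\mu}_2^{2}} + \abs*{\nrm*{A\mu}_2^{2} - \nrm*{\cov^{1/2}R\mu}_2^{2}}.
\]

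For the U-statistic term, I would compute the Hoeffding decomposition variances $\zeta_1=\mathrm{Var}(\En[h\mid z_1])$ and $\zeta_2=\mathrm{Var}(h)$. Using that $x\sim\subG_d(\tau^{2})$ and $y\sim\subG(\sigma^{2})$, one obtains $\zeta_1\lesssim\sigma^{2}\tau^{2}\nrm*{A}_{2}^{2}\nrm*{A\mu}_2^{2}$ (via Cauchy-Schwarz on $\langle A\mu, Axy\rangle$) and $\zeta_2\lesssim d\cdot\sigma^{4}\tau^{4}\nrm*{A}_{2}^{4}$ (using subexponential moments of the bilinear form). A Bernstein-type concentration inequality for second-order U-statistics of subexponential kernels then gives $|\wh\cE-\nrm*{A\mu}_2^{2}|\lesssim\sqrt{\zeta_1\log(1/\delta)/n}+\sqrt{\zeta_2}\log^{2}(1/\delta)/n$ with probability at least $1-\delta$. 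Applying AM-GM to the first summand absorbs a $\frac{1}{4}\nrm*{A\mu}_2^{2}$ into the multiplicative error $\frac{1}{2}\cE$ and leaves an additive term of order $\sigma^{2}\tau^{2}\nrm*{A}_{2}^{2}\log(1/\delta)/n$, while the $\zeta_2$ summand contributes the $d^{1/2}\log^{2}(2d/\delta)/n$ rate. Combined with the operator-norm bound $\nrm*{A}_{2}^{2}\lesssim\tau^{2}/\eigmin^{2}(\cov)$ (which follows from matrix concentration for $\empcov$ under the assumed $m$), this matches the first error term in~\eqref{eq:residual_est_main}.

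For the plug-in term, I would apply a standard subgaussian covariance concentration bound to conclude that $m\gtrsim(d+\log(1/\delta))\tau^{4}/\eigmin(\cov)$ suffices to ensure both $\nrm*{\empcov-\cov}_{2}$ and $\nrm*{\wh D - D}_{2}$ are at most $\eigmin(\cov)/2$ with high probability. In that event all pseudoinverses reduce to honest inverses, so I can bound $\nrm*{\wh R - R}_{2}$ and $\nrm*{\empcov^{1/2}-\cov^{1/2}}_{2}$ at the $\sqrt{d\log(1/\delta)/m}$ rate (times appropriate powers of $\tau^{2}/\eigmin(\cov)$), and expand
\[
\nrm*{A\mu}_2^{2}-\nrm*{\cov^{1/2}R\mu}_2^{2} = \tri*{(A-\cov^{1/2}R)\mu,(A+\cov^{1/2}R)\mu}.
\]
Bounding the operator norm factors yields a contribution of order $\tau^{6}\eigmin^{-4}(\cov)\cdot d\log(1/\delta)/m\cdot\nrm*{\mu}_2^{2}$, giving the second term. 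The main technical obstacle is obtaining the $\sqrt{d}$ (rather than $d$) scaling for the U-statistic: this requires carefully exploiting the bilinear structure of the kernel and using a Hanson–Wright/Bernstein inequality with the variance $\zeta_2$ tracked in operator/Frobenius norm rather than naively in terms of the size of the summand. The plug-in bookkeeping, while tedious, should follow standard matrix perturbation arguments once the event that $\empcov\approx\cov$ is established.
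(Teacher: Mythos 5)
Your overall architecture matches the paper's: the same closed form $\cE=\nrm*{\cov^{1/2}R\,\En\brk*{xy}}_2^2$, the same conditioning on the unlabeled samples to split the error into a U-statistic deviation around $\nrm*{\empcov^{1/2}\wh{R}\mu}_2^2$ plus a plug-in bias, the same subexponential treatment of the kernel with AM-GM absorbing the non-degenerate part into the multiplicative $\tfrac12\cE$, and the same whitened covariance concentration to certify $\empcov\approx\cov$. (The paper proves its U-statistic bound from scratch via decoupling, conditional Bernstein, and a maximal inequality rather than citing a generic Hoeffding-decomposition Bernstein bound, but your route there is sound in spirit and your variance computations $\zeta_1,\zeta_2$ give the right $\sqrt{d}/n$ scaling.)

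The genuine gap is in your plug-in step. If you bound $\tri*{(A-\cov^{1/2}R)\mu,\ (A+\cov^{1/2}R)\mu}$ purely ``by operator norm factors,'' you get an error that is \emph{linear} in the covariance estimation error $\veps\asymp\frac{\tau^{2}}{\eigmin(\cov)}\sqrt{d/m}$, i.e.\ of order $\sqrt{d/m}\cdot\nrm*{\mu}_2^2$, not the claimed $\frac{d}{m}\cdot\nrm*{\mu}_2^2$. To recover the quadratic-in-$\veps$ rate you must peel off $\nrm*{\cov^{1/2}R\mu}_2=\sqrt{\cE}$ from the second factor and absorb the resulting cross term $\sqrt{\cE}\cdot\nrm*{(A-\cov^{1/2}R)\mu}_2$ into the multiplicative $\tfrac12\cE$ slack via AM-GM --- exactly what the paper does when it expands $\tri*{\wh{R}\empcov\wh{R}\mu,\mu}-\tri*{R\cov R\mu,\mu}$ into three terms and applies Cauchy--Schwarz against $\sqrt{\cE}$. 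A second, related issue: your factorization forces you to control $\nrm*{\empcov^{1/2}-\cov^{1/2}}_{\specnorm}$, and the available square-root perturbation bounds ($\nrm*{A^{1/2}-B^{1/2}}_{\specnorm}\le \nrm*{A-B}_{\specnorm}/\eigmin^{1/2}$ or $\le\nrm*{A-B}_{\specnorm}^{1/2}$) lose either a condition-number factor (yielding $\tau^{8}/\eigmin^{5}(\cov)$ rather than the stated $\tau^{6}/\eigmin^{4}(\cov)$) or the $\veps^{2}$ rate altogether. The paper sidesteps this by never comparing matrix square roots: it works with the quadratic forms directly and inserts $\cov^{\pm 1/2}$ weights inside Cauchy--Schwarz so that only $\wh{R}-R$, $\empcov-\cov$, and whitened quantities such as $\cov^{-1/2}\empcov\cov^{-1/2}-I$ appear. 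Your plan is fixable along these lines, but as written it does not yield the stated $m$-dependence and condition-number powers.
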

To compare with the plug-in approach, we focus on the dependence
between $d$ and $n$. When \estimateresidual is applied within \mainalgo{} we have plenty of
unlabeled data, so the dependence on $m$ is less important. The
dominant term in~\pref{thm:residual_est_main} is
$\tilde{O}(\nicefrac{\sqrt{d}}{n})$, a significant improvement over
the $\tilde{O}(\nicefrac{d}{n})$ rate for the plug-in
estimator. In particular, the dependence on the larger ambient
dimension is much
milder: we can achieve constant error with $n \asymp \sqrt{d}$, or in
other words the estimator has \emph{sublinear} sample complexity. This property is crucial for our model selection result. The result
generalizes and is inspired by the variance estimation method of Dicker
\citep{dicker2014variance,kong2018estimating}, which obtains a rate of
$O\prn*{\nicefrac{\sqrt{d}}{n}+\nicefrac{1}{\sqrt{n}}}$ to estimate the optimal
square loss $\min_{\beta\in\bbR^{d}}\En\prn*{\tri*{\beta,x}-y}^{2}$
when the second moments are known. By estimating the square
loss \emph{gap} instead of the loss itself, we avoid the
$\nicefrac{1}{\sqrt{n}}$ term, which is critical for achieving
$\tilde{O}(T^{2/3}d_{\ms}^{1/3})$ regret.%  Estimating the gap also
% leads to the favorable $O(\nicefrac{d}{m})$ dependence on the unlabeled examples
% in
% \pref{eq:residual_est_main}, which is also crucial to ensure that the
% contribution of unknown second moment to the regret of \mainalgo{} is
% at most $O(\sqrt{Td_{\ms}})$.

\subsection{Main result}

Equipped with \estimateresidual, we can now sketch the approach behind \mainalgo in a bit more detail. Recall that the algorithm maintains an index $\mh$
denoting the current guess for $m^\star$. We run \expix\xspace over
the induced policy class $\Pi_m$, mixing in some additional uniform
exploration (with probability $\mu_t$ at round $t$). We use all of the data to estimate the second moment
matrices of all classes, and we pass only the exploration data into
the subroutine \estimateresidual. We check if there exists some $i > \mh$ such that the estimated gap satisfies $\wh{\cE}_{\mh,i}\geq{}2\alpha_{i,t}$ and $t\geq{}\Tmin_i$ which---based on the deviation bound in~\pref{thm:residual_est_main}---implies that $\cE_{\mh,i} > 0$ and thus $\mh \neq \ms$.
If this is the case, we advance $\mh$ to the smallest such $i$, and if not, we
continue with our current guess. This leads to the following guarantee.
\begin{theorem}
\label{thm:model_selection_main}
When $C_1$ and $C_2$ are sufficiently large absolute constants and
$\kappa=1/3$, \mainalgo{} guarantees that with probability at least
$1-\delta$,
\begin{equation}
  \label{eq:mainbound1}
 \Reg \leq{} \Otilde\prn*{
  \frac{\tau^{4}}{\gamma^{3}}\cdot
  (T\ms)^{2/3}(Kd_{\ms})^{1/3}\log(2/\delta)}.  
\end{equation}
When $\kappa=1/4$, \mainalgo{} guarantees that with probability at
least $1-\delta$,
\begin{equation}
  \label{eq:mainbound2}
\Reg
\leq {}
\Otilde\prn*{
\frac{\tau^{3}}{\gamma^{2}}\cdot{}K^{1/4}(T\ms)^{3/4}\log(2/\delta)
+ \frac{\tau^{5}}{\gamma^{4}}\cdot{}\sqrt{K(T\ms)d_{\ms}}\log(2/\delta)}.  
\end{equation}
\end{theorem}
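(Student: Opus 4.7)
The plan is to prove \pref{thm:model_selection_main} by (i) establishing a high-probability good event on which both \estimateresidual{} and the \expix{} instances behave as stated; (ii) showing that $\mh$ increases monotonically and never overshoots $\ms$; (iii) bounding the length of each epoch during which $\mh$ is constant in terms of the underlying square-loss gap; and (iv) summing the per-epoch regret contributions. The good event $\cG$ is the intersection of: (a) the conclusion of \pref{thm:residual_est_main} with deviation $\alpha_{i,t}$ holding for every $t\in[T]$ and every pair $\mh<i$; (b) the eigenvalue lower bound $\eigmin(\empcov_i)\geq \gamma^2/2$ once $t\geq \Tmin_i$, which is why $\Tmin_i$ contains a $\tau^4 d_i\log(1/\delta_0)/\gamma^2$ term; (c) a Bernstein bound giving $\abs*{S\cap[t]}\gtrsim K^\kappa t^{1-\kappa}$ for $t\geq \Tmin_i$, so that \pref{thm:residual_est_main} can be applied with the advertised sample size; and (d) the \expix{} variant from \pref{app:cb} attaining its regret guarantee on each of the at most $\ms$ epochs. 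The factor $10M^2T^2$ built into $\delta_0$ absorbs a polynomial union bound over $t$, $\mh$, and $i$.

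On $\cG$, monotonicity of $\mh$ is immediate from the algorithm, and I claim $\mh\leq \ms$ always: by \pref{lem:sq_translation}, $\cE_{m,i}=0$ whenever $m\geq\ms$, so event (a) forces $\wh{\cE}_{m,i}\leq \alpha_{i,t}$, violating the advancement condition $\wh{\cE}\geq 2\alpha_{i,t}$. For $m<\ms$, if $\mh=m$ at round $t\geq \Tmin_{\ms}$ with $\cE_{m,\ms}\geq 6\alpha_{\ms,t}$, event (a) gives $\wh{\cE}_{m,\ms}\geq \tfrac12 \cE_{m,\ms}-\alpha_{\ms,t}\geq 2\alpha_{\ms,t}$, so the test fires for $i=\ms$. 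Writing $r_m$ for the first round at which $\mh=m$ (and $r_{\ms+1}:=T+1$), solving for the smallest such $t$ via the dominant term $\alpha_{\ms,t}\asymp \tau^6\sqrt{d_{\ms}}/(\gamma^4 K^\kappa t^{1-\kappa})$ yields
\begin{equation*}
r_{m+1}-r_m \;\lesssim\; \Tmin_{\ms} \;+\; \Bigl(\tfrac{\tau^6\sqrt{d_{\ms}}}{\gamma^4 K^\kappa \cE_{m,\ms}}\Bigr)^{\!1/(1-\kappa)} \wedge T
\end{equation*}
for each $m<\ms$.

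Per epoch I would decompose the regret into (I) an $O(\tau)$ loss on each of the $\sum_{t\in\text{epoch}}\mu_t$ uniform-exploration rounds, summing over epochs to $\Otilde(\tau K^\kappa T^{1-\kappa})$; (II) the agnostic regret of the \expix{} variant against $\pi^\star_m \in \Pi_m$ on the epoch, which from \pref{app:cb} is $\Otilde(\sqrt{Kd_m(r_{m+1}-r_m)})$ and sums over the at most $\ms$ epochs to $\Otilde(\sqrt{K\ms d_{\ms} T})$ by Cauchy--Schwarz; and (III) the bias $\Delta_{m,\ms}(r_{m+1}-r_m)\leq \sqrt{4K\cE_{m,\ms}}\,(r_{m+1}-r_m)$ supplied by \pref{lem:sq_translation}. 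Substituting the epoch-length bound into (III) and maximising over $\cE_{m,\ms}\in(0,\infty)$ (worst case at the cap $\cE_{m,\ms}\asymp\sqrt{d_{\ms}}/(K^\kappa T^{1-\kappa})$) gives $\Otilde(K^{(1-\kappa)/2} d_{\ms}^{1/4} T^{(1+\kappa)/2})$ per epoch and at most $\Otilde(\ms K^{(1-\kappa)/2} d_{\ms}^{1/4} T^{(1+\kappa)/2})$ in total. Choosing $\kappa=1/3$ and consolidating the three contributions yields \eqref{eq:mainbound1}; choosing $\kappa=1/4$ rebalances (I) against (II)--(III), in which regime (I) and (II) dominate, producing \eqref{eq:mainbound2}.

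The main obstacle I anticipate is step (i). \pref{thm:residual_est_main} is invoked on a data-dependent subset of rounds (the exploration set $S\cap[t]$), and the empirical covariances $\empcov_i$ passed to \estimateresidual{} are computed from the algorithm's adaptive sample path. Establishing simultaneous concentration over all $t$, $\mh$, $i$, and all realized sample sizes will likely require a martingale or stopping-time argument, possibly coupled with a virtual schedule that independently marks each round for exploration with probability $\mu_t$; care must also be taken to verify (b), since the eigenvalue condition implicitly relies on having enough genuinely uniform samples. Once $\cG$ is nailed down, the remainder is bookkeeping driven by \pref{lem:sq_translation} and the sufficient condition for the test to fire.
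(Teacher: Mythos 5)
Your overall architecture matches the paper's: a high-probability good event covering the estimator, the exploration-set size, and the \expix{} instances; an epoch decomposition indexed by the successive values of $\mh$; a bound relating epoch length (equivalently, the value of $\cE_{\mh,\ms}$ at the switch time) to $\alpha_{\ms,t}$; and a final summation with $\kappa\in\{1/3,1/4\}$. Your epoch-length argument is just the contrapositive of the paper's "the test did not fire at $t=T_{k+1}-2$, hence $\cE_{\mh_k,\ms}\leq 6\alpha_{\ms,t}$" step, and gives the same $K^{(1-\kappa)/2}d_{\ms}^{1/4}T^{(1+\kappa)/2}$ per-epoch contribution. Your worry about adaptivity in step (i) is also largely moot: the matrices $\empcov_i$ are built from all contexts and all actions (no dependence on the chosen actions), and conditioned on $\abs{S_t}$ the labeled exploration samples are i.i.d., so the paper gets by with plain union bounds.

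However, there is a genuine gap in step (ii): the claim that $\mh\leq\ms$ always is false, and your justification does not prove it. The argument "$\cE_{m,i}=0$ for $m\geq\ms$, so the test cannot fire" only shows that \emph{once} $\mh\geq\ms$ the algorithm never advances further; it says nothing about a jump from some $\mh=m<\ms$ directly to an index $i>\ms$. Such a jump is entirely possible because $\cE_{m,i}=\cE_{m,\ms}>0$ for every $i\geq\ms$, so on the good event the test can fire at $i>\ms$ while failing at $\ms$ itself whenever $\tfrac{2}{3}\alpha_{i,t}\leq\cE_{m,\ms}<6\alpha_{\ms,t}$, i.e.\ roughly whenever $\alpha_{i,t}\leq 9\alpha_{\ms,t}$ (and the "smallest such $i$" rule does not save you, since the test at $\ms$ may simply not fire). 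This matters because your bound on term (II) uses $d_{\mh_k}\leq d_{\ms}$, and without it the \expix{} regret in the final epoch scales with $\sqrt{KTd_{\mh_N}}$, which could dwarf $\sqrt{KTd_{\ms}}$. The paper closes exactly this hole with a dedicated overshoot argument: if $\mh_N=j>\ms$, then at the switch time $\wh{\cE}_{\mh,\ms}(t)\leq 2\alpha_{\ms,t}$ while $\wh{\cE}_{\mh,j}(t)\geq 2\alpha_{j,t}$, and since $\cE_{\mh,j}=\cE_{\mh,\ms}$ (\pref{prop:beta_basics}) the good event forces $\alpha_{j,t}\leq 9\alpha_{\ms,t}$, which after comparing the two terms of $\alpha_{m,t}$ yields $d_j=O(d_{\ms})$. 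You need this (or an equivalent) argument. A secondary, purely quantitative point: bounding each epoch's gap contribution by its worst case over $\cE$ with the cap $T$ and then multiplying by $\ms$ gives $\ms\,T^{(1+\kappa)/2}d_{\ms}^{1/4}$, which is weaker in the $\ms$ dependence than the stated $(T\ms)^{(1+\kappa)/2}d_{\ms}^{1/4}$; the paper recovers the stated form by keeping $\abs{\cI_k}^{(1+\kappa)/2}$ per epoch and applying Jensen over the at most $\ms$ epochs.
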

A few remarks are in order
\begin{itemize}
\item 
The two stated bounds are incomparable in general. Tracking only
dependence on $T$ and $d_{m^\star}$, the first is
$\tilde{O}(T^{2/3}d_{m^\star}^{1/3})$ while the latter is
$\tilde{O}(T^{3/4} + \sqrt{Td_{m^\star}})$. The former is better when
$d_{m^\star} \leq T^{1/4}$. There is a more general Pareto frontier
that can be explored by choosing $\kappa\in[1/3,1/4]$, but no choice for
$\kappa$ dominates the others for all values of $d_{\ms}$.
% \item 
% The result indeed gives a desirable model selection guarantee, as the bounds
% scale only with $d_{m^\star}$ and have no constraint on the maximum
% dimensionality $d_M$. As such the algorithm adapts to favorable instances
% where the optimal model is relatively simple, measured via
% $d_{m^\star}$. This means the algorithm obtains sublinear regret for
% all possible values of $d_{\ms}$, even when $d_{M}$ is arbitrarily
% close to linear in $T$.
\item 
Recall that if had we known $d_{m^\star}$ in advance, we have could simply
run \linucb to achieve $\tilde{O}\prn{\sqrt{Td_{m^\star}}}$
regret. The bound~\pref{eq:mainbound2} matches this oracle rate when
$d_{m^\star} > \sqrt{T}$, but otherwise our guarantee is slightly
worse than the oracle rate. Nevertheless, both bounds are $o(T)$
whenever the oracle rate is $o(T)$ (that is, when $d_{m^\star} = o(T)$), so the algorithm
has sublinear regret whenever the optimal model class
is \emph{learnable}. It remains open whether there is a model
selection algorithm that can match the oracle rate for all values of $d_{m^{\star}}$ simultaneously.
\item We have not optimized dependence on the condition
  number $\tau/\gamma$ or the logarithmic factors.
\item If the individual distribution parameters
  $\crl*{\tau_{m}}_{m\in\brk*{M}}$ and
  $\crl*{\gamma_{m}}_{m\in\brk*{M}}$ are known, the algorithm can be modified
  slightly so that regret scales in terms of $\tau_{\ms}$ and
  $\gamma_{\ms}^{-1}$. However the current model, in which we assume access only to uniform upper and lower bounds on these parameters, is more realistic.
\end{itemize}
\ifthenelse{\equal{\version}{neurips}}{Additionally, \pref{app:experiment} contains a validation experiment, where we demonstrate that \mainalgo{} compares favorably to \linucb{} in simulations.}{}

\paragraph{Improving the dependence on $\ms$.}
\pref{thm:model_selection_main} obtains the desired model selection
guarantee for linear classes, but the bound includes a polynomial
dependence on the optimal index $\ms$. This contrasts the logarithmic
dependence on $\ms$ that can be obtained through structural risk
minimization in statistical learning
\citep{vapnik1992principles}. However, this $\mathrm{poly}(\ms)$
dependence can be replaced by a single $\log(T)$ factor with a simple preprocessing step:
Given feature maps $\crl*{\phi^{m}(\cdot,\cdot)}_{m\in\brk*{M}}$ we
construct a new collection of maps
$\crl*{\bar{\phi}^{m}(\cdot,\cdot)}_{m\in\brk*{\bar{M}}}$, where
$\bar{M}\leq{}\log{}T$, as follows.
First, for $i=1,\ldots,\log{}T$, take $\bar{\phi}^{i}$ to be the largest feature map $\phi^{m}$ for which $d_{m}\leq{}e^{i}$.
Second, remove any duplicates.
%\begin{itemize}
%  \item For $i=1,\ldots,\log{}T$, take $\bar{\phi}^{i}$ to be the
%    largest feature map $\phi^{m}$ for which $d_{m}\leq{}e^{i}$.
%  \item Remove any duplicates.
%\end{itemize}
%% \hledit{
This preprocessing reduces the number of feature maps to at most
$\log(T)$ while ensuring that a map of dimension $O(d_{\ms})$ that contains $\phi^{\ms}$ is always available. Specifically, the preprocessing step yields the following improved regret bounds.
%%}
\begin{theorem}
\label{thm:model_selection_log}
%% With the preprocessing step described above, when $C_1$ and $C_2$ are
%% sufficiently large absolute constants, 
\mainalgo{} with preprocessing guarantees that
with probability at least $1-\delta$,
\begin{align*}
 \Reg \leq{} \left\{\begin{array}{ll}
\Otilde\prn*{
  \frac{\tau^{4}}{\gamma^{3}}\cdot
  T^{2/3}(Kd_{\ms})^{1/3}\log(2/\delta)},\quad\quad&\kappa=1/3.\\
~&\\
\Otilde\prn*{
\frac{\tau^{3}}{\gamma^{2}}\cdot{}K^{1/4}T^{3/4}\log(2/\delta)
+
  \frac{\tau^{5}}{\gamma^{4}}\cdot{}\sqrt{KTd_{\ms}}\log(2/\delta)},\quad\quad&\kappa=1/4.
\end{array}\right. 
\end{align*}
\end{theorem}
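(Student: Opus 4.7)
The plan is to reduce the improved theorem to a direct application of \pref{thm:model_selection_main} on the preprocessed sequence. All the heavy lifting has already been done, and the preprocessing is a purely structural transformation that shrinks $M$ while only inflating $d_{\ms}$ by a constant factor.

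First, I would verify that the preprocessed collection $\crl*{\bar{\phi}^{i}}_{i\in[\bar{M}]}$ is a legitimate input to \mainalgo{}. Nestedness is inherited: for $i_1<i_2$, $\bar{\phi}^{i_1}$ is chosen as some $\phi^{m_1}$ with $d_{m_1}\leq e^{i_1}\leq e^{i_2}$, while $\bar{\phi}^{i_2}$ is the largest $\phi^{m_2}$ with $d_{m_2}\leq e^{i_2}$, so $m_1\leq m_2$, and thus $\bar{\phi}^{i_1}$ forms the first $d_{m_1}$ coordinates of $\bar{\phi}^{i_2}$ by nestedness of the original sequence. Removing duplicates preserves this. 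Since each $\bar{\phi}^{i}$ coincides with some original $\phi^{m}$, the uniform subgaussian parameter $\tau$ and uniform second-moment lower bound $\gamma$ transfer verbatim. Finally, $\bar{M}\leq \log T$ by construction.

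Next, I would identify the effective realizing index $\bar{m}^{\star}$ for the new sequence. Letting $i^{\star}=\lceil\log d_{\ms}\rceil$, the map $\bar{\phi}^{i^{\star}}$ is the largest $\phi^{m}$ with $d_{m}\leq e^{i^{\star}}$; since $d_{\ms}\leq e^{i^{\star}}$, $\phi^{\ms}$ is one such candidate, so $\bar{\phi}^{i^{\star}}$ has dimension at least $d_{\ms}$ and contains $\phi^{\ms}$ as a prefix. Hence $f^{\star}\in\bar{\Fcal}_{i^{\star}}$, and letting $\bar{m}^{\star}$ denote the corresponding index after deduplication, we have $\bar{m}^{\star}\leq i^{\star}\leq \log d_{\ms}+1\leq \log T+1$ (we may assume $d_{\ms}\leq T$, else the claimed bounds are vacuously $O(T)$). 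Moreover, $d_{\bar{m}^{\star}}\leq e^{i^{\star}}\leq e\cdot d_{\ms}$.

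Finally, I would apply \pref{thm:model_selection_main} to the preprocessed sequence. The bound for $\kappa=1/3$ gives regret
\[
\Otilde\prn*{\tfrac{\tau^{4}}{\gamma^{3}}\cdot (T\bar{m}^{\star})^{2/3}(Kd_{\bar{m}^{\star}})^{1/3}\log(2/\delta)},
\]
and substituting $\bar{m}^{\star}\leq\log T+1$ (which contributes only a $(\log T)^{2/3}$ factor, absorbed into $\Otilde$) together with $d_{\bar{m}^{\star}}= O(d_{\ms})$ yields the stated bound. The $\kappa=1/4$ case follows identically. There is no conceptual obstacle; the mildly delicate point to check is that the $\log T$ blow-up in $M$ does not interact badly with the confidence scheduling—since $\delta_{0}=\delta/10M^{2}T^{2}$ only enters through $\log(1/\delta_0)$, which is polylogarithmic in $T$, this is hidden inside $\Otilde$. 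The only substantive verification is that preprocessing simultaneously preserves nestedness, subgaussianity, the eigenvalue lower bound, and realizability with $d_{\bar{m}^{\star}}=O(d_{\ms})$, which is exactly what the construction was designed to do.
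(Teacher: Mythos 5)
Your proposal is correct and follows essentially the same route as the paper's proof: identify the surviving preprocessed map $\bar{\phi}^{i}$ with $e^{i-1}\leq d_{\ms}\leq e^{i}$, note that it contains $\phi^{\ms}$ so realizability (and the subgaussian/eigenvalue assumptions) transfer, observe the new realizing index is at most $\log T$ and the new dimension is at most $e\cdot d_{\ms}$, and then invoke \pref{thm:model_selection_main} with the $\mathrm{poly}(\ms)$ factor absorbed into $\Otilde$. Your additional checks (nestedness of the deduplicated sequence and the harmlessness of the $\delta_0$ schedule) are details the paper leaves implicit, but they do not change the argument.
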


\paragraph{Non-nested feature maps.}
As a final variant, we note that the algorithm easily extends to the
case where feature maps are not nested. Suppose we have non-nested
feature maps $\crl*{\phi^{m}}_{m\in\brk*{M}}$, where
$d_1\leq{}d_2\leq{}\ldots\leq{}d_{M}$; note that the inequality is no
longer strict. In this case, we can obtain a nested collection by
concatenating $\phi^{1},\ldots,\phi^{m-1}$ to the map $\phi^{m}$ for
each $m$. This process increases the dimension of the optimal map from
$d_{\ms}$ to at most $d_{\ms}\ms$, so we have the following corollary.
\begin{corollary}
\label{cor:model_selection_non_nested}
For non-nested feature maps, \mainalgo{} with preprocessing guarantees that
with probability at least $1-\delta$,
\begin{align*}
 \Reg \leq{} \left\{\begin{array}{ll}
\Otilde\prn*{
  \frac{\tau^{4}}{\gamma^{3}}\cdot
  T^{2/3}(Kd_{\ms}\ms)^{1/3}\log(2/\delta)},\quad\quad&\kappa=1/3.\\
~&\\
\Otilde\prn*{
\frac{\tau^{3}}{\gamma^{2}}\cdot{}K^{1/4}T^{3/4}\log(2/\delta)
+
  \frac{\tau^{5}}{\gamma^{4}}\cdot{}\sqrt{KTd_{\ms}\ms}\log(2/\delta)},\quad\quad&\kappa=1/4.
\end{array}\right. 
\end{align*}
\end{corollary}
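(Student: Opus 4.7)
The plan is to reduce the non-nested setting to the nested setting treated in Theorem~\ref{thm:model_selection_log} by concatenating the feature maps. Given non-nested feature maps $\crl*{\phi^{m}}_{m\in\brk*{M}}$ with $d_1\leq d_2\leq\ldots\leq d_M$, I would construct a nested collection $\crl*{\bar{\phi}^{m}}_{m\in\brk*{M}}$ via
\[
\bar{\phi}^{m}(x,a) \defeq \prn*{\phi^{1}(x,a),\phi^{2}(x,a),\ldots,\phi^{m}(x,a)},
\]
which has ambient dimension $\bar{d}_m=\sum_{i=1}^{m}d_i\leq m\cdot d_m$. By construction the first $\bar{d}_{m-1}$ coordinates of $\bar{\phi}^{m}$ coincide with $\bar{\phi}^{m-1}$, so the new collection is strictly nested in the sense required by Section~\ref{sec:preliminaries}.

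Next I would verify that realizability transfers to the new collection at an index no larger than $\ms$. Since $f^{\star}(x,a)=\tri*{\betas,\phi^{\ms}(x,a)}$, the same function lies in $\bar{\Fcal}_{\ms}$: take the coefficient vector $(0,\ldots,0,\betas)\in\bbR^{\bar{d}_{\ms}}$, where the zero blocks correspond to $\phi^{1},\ldots,\phi^{\ms-1}$. Writing $\bar{m}^{\star}$ for the smallest index at which $f^{\star}$ is realized under the new collection, we conclude $\bar{m}^{\star}\leq\ms$ and
\[
\bar{d}_{\bar{m}^{\star}}\leq \bar{d}_{\ms}\leq \ms\cdot d_{\ms}.
\]
The subgaussian and eigenvalue assumptions on the new collection follow from the originals up to factors that can be absorbed into the hidden constants: for any unit $\theta\in\bbR^{\bar{d}_m}$, Cauchy--Schwarz gives $\tri*{\theta,\bar{\phi}^{m}(x,a)}\sim\subG(O(m\tau^{2}))$, and one either assumes the minimum-eigenvalue condition directly on the concatenated covariance $\bar{\cov}_m$ or observes that the relevant $\gamma$ in the final bound refers to this concatenated quantity.

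Finally, I would apply Theorem~\ref{thm:model_selection_log} to the nested collection $\crl*{\bar{\phi}^{m}}$ and substitute $\bar{d}_{\bar{m}^{\star}}\leq \ms d_{\ms}$ in place of $d_{\ms}$ in each of the two bounds. For $\kappa=1/3$ this yields the claimed $T^{2/3}(Kd_{\ms}\ms)^{1/3}$ rate, and for $\kappa=1/4$ the term $\sqrt{Td_{\ms}}$ becomes $\sqrt{Td_{\ms}\ms}$ while the $T^{3/4}$ term is unchanged. The main obstacle is really just bookkeeping: transferring the subgaussian parameter from the individual maps $\phi^i$ to the concatenation $\bar{\phi}^m$ and ensuring the eigenvalue parameter for $\bar{\cov}_m$ matches what appears in the bound. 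The combinatorial content of the reduction is the dimension inequality $\bar{d}_{\ms}\leq \ms d_{\ms}$, which is immediate from the definition of the concatenation.
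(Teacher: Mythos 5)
Your proposal is correct and follows essentially the same route as the paper: the paper's own argument is exactly the concatenation reduction, noting that the optimal map's dimension grows from $d_{\ms}$ to at most $d_{\ms}\ms$, after which \pref{thm:model_selection_log} is invoked verbatim. Your additional remarks on transferring realizability (with the zero-padded coefficient vector) and on interpreting the subgaussian and eigenvalue parameters as referring to the concatenated maps make explicit the bookkeeping the paper leaves implicit, but do not change the argument.
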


%%% Local Variables:
%%% mode: latex
%%% TeX-master: "paper"
%%% End:

\subsection{Proof sketch}
\label{sec:sketches}
We now sketch the proof of the main theorem, with the full proof
deferred to \pref{app:main}. We focus on the case where there are just two classes, so $M=2$. We only
track dependence on $T$ and $d_m$, as this preserves the relevant
details but simplifies the argument. The analysis has two cases depending
on whether $f^{\star}$ belongs to $\cF_1$ or $\cF_2$.

First, if $f^\star \in \cF_1$ then
by~\pref{lem:sq_translation} we have that $\cE_{1,2} = 0$. Further,
via \pref{thm:residual_est_main}, we can guarantee that we never
advance to $\wh{m}=2$ with high probability. The result then
follows from the regret guarantee
for \expix{} using policy class $\Pi_1$, and by accounting for uniform exploration.

The more challenging case is when $f^\star \in \cF_2$. Let $\wh{T}$
denote the first round where $\wh{m}=2$, or $T$ if the algorithm
never advances. We can bound regret as
\begin{align*}
\Reg \leq O\rbr{T^{1-\kappa}} + \tilde{O}\rbr{\sqrt{\wh{T}d_1}} + \wh{T}\Delta_{1,2} + \tilde{O}\rbr{\sqrt{(T-\wh{T})d_2}}.
\end{align*}
The four terms correspond to: (1) uniform exploration with probability
$\mu_t \asymp t^{-\kappa}$ in round $t$, (2) the \expix\xspace regret bound
for class $\Pi_1$ until time $\wh{T}$, (3) the policy gap between the
best policy in $\Pi_1$ and the optimal policy $\pi^\star \in \Pi_2$,
and (4) the \expix\xspace bound over class $\Pi_2$ until time $T$. The two
regret bounds (the second and fourth terms) clearly contribute
$O(\sqrt{Td_2})$ to the overall regret, and the first term is controlled by our choice of $\kappa\in\crl*{1/4, 1/3}$. We are left to bound the third term. For this term, observe that in round $\wh{T}-1$, since the algorithm did
not advance, we must have $\wh{\cE}_{1,2} \leq
2\alpha_{2,\wh{T}-1}$. Appealing to~\pref{thm:residual_est_main}, this
implies that $\cE_{1,2} \leq O(\wh{\cE}_{1,2}+\alpha_{2,\wh{T}-1})\leq
O(\alpha_{2,\wh{T}-1})$. Plugging in the definition of $\alpha_{2,t}$
and applying~\pref{lem:sq_translation}, this gives
\begin{align}
\wh{T}\Delta_{1,2} \leq \order\rbr{\wh{T}\sqrt{\Ecal_{1,2}}} \leq \order\rbr{\wh{T}\sqrt{\alpha_{2,\wh{T}-1}}} = \order\rbr{T^{\frac{1+\kappa}{2}}d^{1/4}_2}.\label{eq:proof_sketch_gap}
\end{align}
This establishes the result. In particular, with $\kappa=1/3$ we obtain
$\tilde{O}(T^{2/3}+\sqrt{Td_2} + T^{2/3}d_2^{1/4}) \leq \tilde{O}(T^{2/3}d_2^{1/3})$ regret, and
with $\kappa = 1/4$ we obtain $\tilde{O}(T^{3/4} + \sqrt{Td_2} + T^{5/8}d_2^{1/4}) =
\tilde{O}(T^{3/4} + \sqrt{Td_2})$.\footnote{Note that if $d_2\leq{}\sqrt{T}$
  then $T^{5/8}d_2^{1/4}\leq{}T^{3/4}$, but if $d_2\geq\sqrt{T}$ then $T^{5/8}d_2^{1/4}\leq{}\sqrt{Td_2}$.}

The sublinear estimation rate for \estimateresidual (\pref{thm:residual_est_main}) plays a critical role in this argument.
With the $\tilde{O}(\nicefrac{d}{n})$ rate for the na\"{i}ve plug-in estimator, we could
at best set $\alpha_{t,2} = \nicefrac{d_2}{t^{1-\kappa}}$, but this
would degrade the dimension dependence in \pref{eq:proof_sketch_gap} from $d_2^{1/4}$ to
$\sqrt{d_2}$. Unfortunately, this results in $T^{1-\kappa}
+ \sqrt{d_2}T^{\frac{1+\kappa}{2}}$ regret, which is not a meaningful
model selection result, since there is no choice for $\kappa\in(0,1)$ for
which the exponents on $d_2$ and $T$ sum to one for both terms.

%%% Local Variables:
%%% mode: latex
%%% TeX-master: "paper"
%%% End:

\section{A validation experiment}
\label{sec:experiment}
As an empirical validation, we conducted a synthetic experiment with
\mainalgo. Our implementation is built on top of an open source package
for contextual bandit experimentation, which has been used in several
prior
works~\citep{krishnamurthy2016contextual,foster2018practical,krishnamurthy2018semiparametric}.\footnote{Code
  is publicly available at
  \url{https://github.com/akshaykr/oracle_cb}.} For computational
efficiency, our implementation of \mainalgo uses \iltcb (\cite{agarwal2014taming}, henceforth ``\iltcbshort'') as the base
learner instead of \currentalg, which is also sufficient for our
theoretical guarantees.

\begin{wrapfigure}{r}{0.4\textwidth}
\includegraphics[width=0.4\textwidth]{./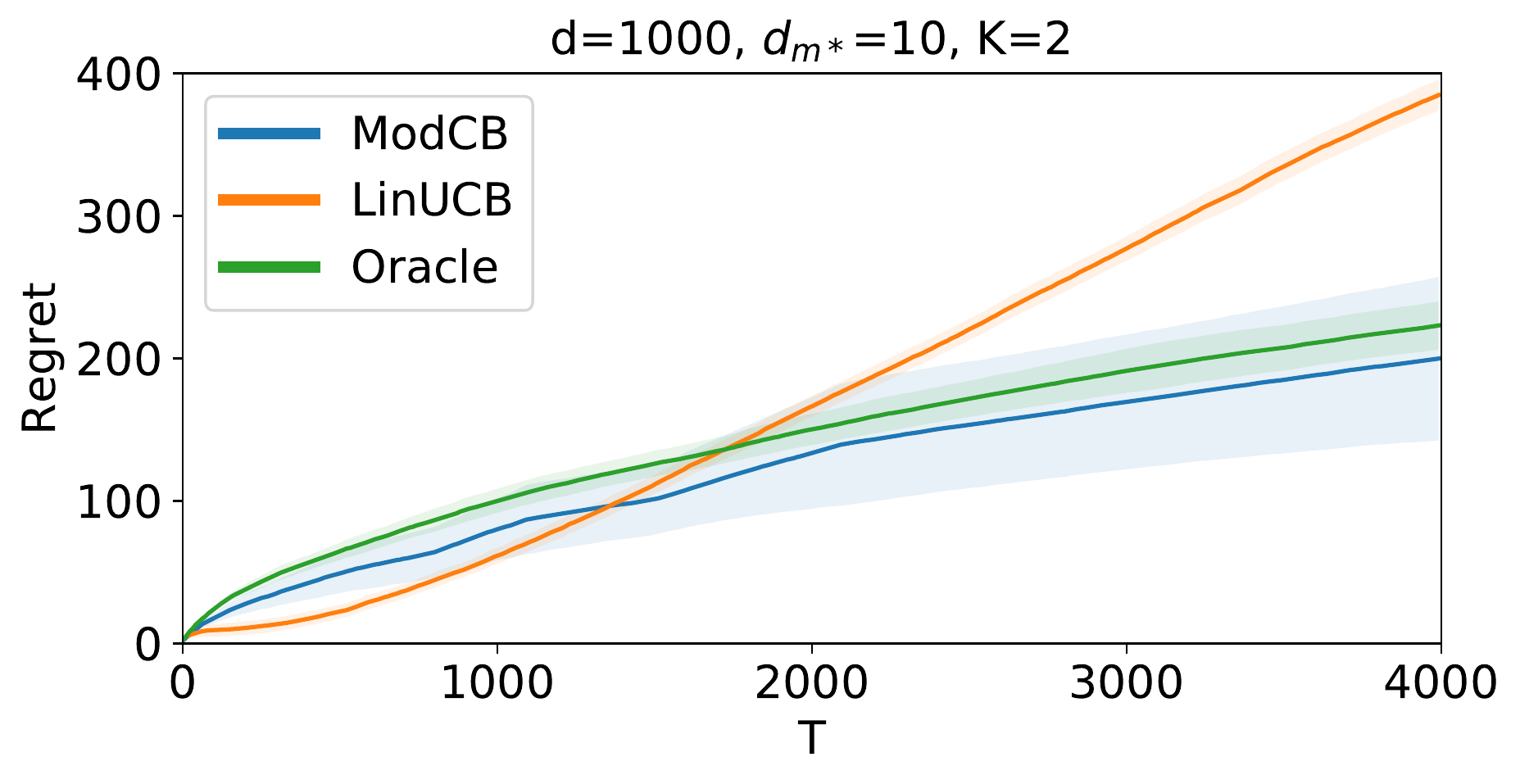}
\vspace{-0.75cm}
\caption{A validation experiment}
\label{fig:experiment}
\end{wrapfigure}
We consider a simple contextual bandit environment with $K=2$ actions,
and where the feature vectors have ambient dimension $d=1000$. We design the reward distribution such that a predictor using the first $d_{m^\star} = 10$
coordinates is realizable. We consider three algorithms: \linucb
operating directly on the ambient dimension, \mainalgo with \iltcbshort as
the base learner, and \iltcbshort with knowledge of $d_{m^\star}$, which we
refer to as \oracle. We tune a single hyperparameter for each
algorithm (the confidence pre-multiplier for \linucb; the uniform
exploration parameter for \mainalgo and \oracle) and visualize the
cumulative regret for the best performing hyperparameter as a
function of the number of rounds $T$. \pref{fig:experiment} displays
the regret curves averaged over 20 replicates with error bands
corresponding to two standard errors.

\mainalgo consistently outperforms \linucb in the experiment, which is
unsurprising since the ambient dimension is much larger than the
target dimension $d_{m^\star}$. \mainalgo has a less favorable dependence on $T$ in comparison to
\linucb, but this does not seem to compromise its performance in this
experiment. Perhaps more surprising is that \mainalgo
outperforms \oracle on average. A deeper inspection reveals that
while \mainalgo typically advances to $d_m > d_{m^\star}$, it sometimes
stays below, where it can learn a near-optimal policy faster
than \oracle.

\section{Discussion}
\label{sec:discussion}
% !TEX root = paper.tex

This paper initiates the study of model selection tradeoffs in
contextual bandits. We provide the first model selection algorithm for
the linear contextual bandit setting, which we show achieves
$\tilde{O}\rbr{T^{2/3}d_{m^\star}^{1/3}}$ when the optimal model is a
$d_{m^\star}$-dimensional linear function. This is the first contextual bandit algorithm that adapts the complexity of the optimal policy class with no prior knowledge, and raises a number of intriguing questions:
\begin{enumerate}
\vspace{-5pt}
\item Is it possible to achieve $\tilde{O}\rbr{\sqrt{Td_{m^\star}}}$ regret in
  our setup? This would show that the price for model selection is
  negligible.
\item Is it possible to generalize our results beyond linear classes?
  Specifically, given regressor classes
  $\cF_1\subset\cF_2\subset\ldots\subset\cF_M$ and assuming the
  optimal model $f^\star$ belongs to $\cF_{m^\star}$ for some unknown
  $m^\star$, is there a contextual bandit algorithm that achieve
  $\tilde{O}\rbr{T^{\alpha} \cdot \textrm{comp}^{1-\alpha}(\cF_{m^\star})}$ regret for
  some $\alpha \in [\nicefrac{1}{2},1)$? More ambitiously, can we
  achieve $\tilde{O}\rbr{\sqrt{T\cdot\textrm{comp}(\cF_{m^\star})}}$? 
\item \ifthenelse{\equal{\version}{neurips}}{%
We have conducted a validation experiment with \mainalgo
(see~\pref{app:experiment}), demonstrating that the algorithm
performs favorably in simulations.}{}
While this synthetic experiment is
  encouraging, \mainalgo may not be immediately useful for practical
  deployments for several reasons, including its reliance on linear
  realizability and its computational complexity. Are there more robust
  algorithmic principles for theoretically-sound and
  practically-effective model selection in contextual bandits?
\item For what classes $\cF$ can we estimate the loss at a
sublinear rate, and is this necessary for contextual bandit model selection? Any sublinear guarantee will lead to non-trivial
model selection guarantees through a strategy similar to
\mainalgo{}. Interestingly, it is already known that for certain
(e.g., sparse linear) classes, sublinear loss estimation is not
possible \citep{verzelen2018adaptive}. On the other hand, positive
results \emph{are} available for certain nonparametric classes \citep{brown2007variance,wang2008effect}.
\vspace{-5pt}
\end{enumerate}
Model selection is instrumental to the success of
ML deployments, yet few positive results exist for partial feedback
settings. We believe these questions are technically challenging and practically
important, and we are hopeful that positive results of the type we provide here will
extend to interactive learning settings beyond contextual bandits. 

%%% Local Variables:
%%% mode: latex
%%% TeX-master: "paper"
%%% End:

\subsection*{Acknowledgements}
We thank Ruihao Zhu for working with us at the early stages of this
project, and for many helpful discussions. AK is supported in part by NSF Award IIS-1763618.
HL is supported by NSF IIS-1755781.

\bibliography{refs}

\vfill
\newpage
\appendix
% !TEX root = paper.tex
\section{Omitted details for~\pref{sec:preliminaries}}
\label{app:calculations}

In this section we provide more detail as to why various natural approaches do not provide a
satisfactory resolution to the model selection problem for contextual
bandits.
We consider a more general setup here, where the learner is given a set of policy classes $\Pi_1, \ldots, \Pi_M$, each of which contains a set of arbitrary mappings from the context space $\Xcal$ to $\cA$.
The (expected) regret to class $m$ is 
\[
\Reg(\Pi_m) \defeq \sum_{t=1}^T\EE\sbr{\ell_t(a_t)} - \min_{\pi \in \Pi_m} \sum_{t=1}^T\EE\sbr{\ell_t(\pi(x_t))}.
\]
Let $\ms \defeq \argmin_m \min_{\pi \in \Pi_m} \sum_{t=1}^T\EE\sbr{\ell_t(\pi(x_t))}$ be the index of the class containing the optimal policy.
The goal is to achieve $\Reg(\Pi_{\ms}) = O\rbr{T^{\alpha}
  \cdot\textrm{comp}^{1-\alpha}(\Pi_{\ms})}$ for some $\alpha \in
[\nicefrac{1}{2},1)$ without knowing $\ms$ ahead of time (ignoring the
dependence on $K$). Our realizable linear setting is clearly a special case of this setup.
It is well-known that the model selection guarantee we desire can be
achieved in the full-information online learning setting,
and below we discuss the difficulties of extending these approaches to the bandit setting, even when $M=2$.
For simplicity we consider finite classes, so the complexity of $\Pi_m$ is measured by $\log|\Pi_m|$.

\subsection{Running \textsf{Hedge} with all policy classes}
The classical contextual bandit algorithm \textsf{Exp4}~\citep{auer2002nonstochastic} is based on the full-information algorithm \textsf{Hedge}~\citep{freund1997decision}. Fix a policy class $\Pi$.
At each time $t$, \textsf{Exp4} computes a distribution $p_t$ over the policies in $\Pi$ using the exponential weight update rule:
\[
p_t(\pi) \propto p_0(\pi)\exp\left(\eta\sum_{\tau<t} \hat{\ell}_\tau(\pi(x_\tau)) \right),
\]
where $\eta$ is a learning rate parameter, $p_0$ is a prespecified prior distribution over the policies, and $\hat{\ell}_\tau$ is the importance-weighted loss estimator, defined as $\hat{\ell}_\tau(a) \defeq \frac{\ell_\tau(a)\II\{a_\tau = a\}}{\sum_{\pi\in\Pi: \pi(x_\tau)=a} p_\tau(\pi)}$ for all $a$.
\textsf{Exp4} ensures for any $\pi^\star \in \Pi$,
\begin{align}
\sum_{t=1}^T\EE\sbr{\ell_t(a_t)} - \sum_{t=1}^T\EE\sbr{\ell_t(\pi^\star(x_t))}
&\leq \frac{\log\rbr{\frac{1}{p_0(\pi^\star)}}}{\eta} + \eta\EE\sbr{\sum_{t=1}^T \sum_{\pi\in\Pi} p_t(\pi)\hat{\ell}_t(\pi(x_t))^2} \label{eq:hedge_local_norm}\\
&\leq \frac{\log\rbr{\frac{1}{p_0(\pi^\star)}}}{\eta} + \eta TK. \notag
\end{align}
Now, consider running this algorithm with $\Pi \defeq \bigcup_{m=1}^M \Pi_m$. 
%where $\Pi_m = \{\pi_f: f \in \Fcal_m\}$.
With a uniform prior and the optimal tuning of $\eta$, this leads the following regret bound for all $m$, which is clearly undesirable:
\[
\Reg(\Pi_m) = O\rbr{\sqrt{TK\log\rbr{\sum_{m=1}^M|\Pi_m|}}}.
\]
On the other hand, if we set $p_0(\pi) \defeq 1/(M|\Pi_m|)$, where $m$ is the least index such that $\pi \in \Pi_m$, then we have that for each $m$,
\[
\Reg(\Pi_m) \leq \frac{\log\rbr{M |\Pi_m|}}{\eta} + \eta TK.
\]
With oracle tuning for $\eta$ this would give $\Reg(\Pi_{m^\star}) = \Reg \leq O(\sqrt{TK \log |\Pi_{m^\star}|})$, as desired. 
The issue, of course, is that tuning requires knowing $\ms$ ahead of time.
One can instead simply set $\eta = 1/\sqrt{TK}$ and obtain for all $m$,
\[
\Reg(\Pi_m) = O\rbr{\sqrt{TK}\log\rbr{M|\Pi_m|}},
\]
which is not a satisfactory model selection guarantee. In particular
this bound is vacuous when $\log |\Pi_{m^\star}| = \Omega(\sqrt{T})$,
even though we could have achieved sublinear regret here if $\ms$ were known.

A natural next step would be to apply an individual learning rate $\eta_m$ for each class separately, with the hope of achieving for all $m$,
\[
\Reg(\Pi_m) = \frac{\log\rbr{M |\Pi_m|}}{\eta_m} + \eta_m TK,
\]
which will then solve the problem by setting $\eta_m = \sqrt{\log(|\Pi_m|)/(TK)}$.
However, we are not aware of any existing approaches that achieve this guarantee.
The closest guarantee (achieved by variants of \textsf{Hedge}~\citep{gaillard2014second, koolen2015second}) is that
for all $m$ and $\pi_m \in \Pi_m$, %% \akcomment{Can we call this $\pi \in \Pi_m$ instead of $\pi^\star$?}
\begin{align*}
\sum_{t=1}^T\EE\sbr{\ell_t(a_t)} - \sum_{t=1}^T\EE\sbr{\ell_t(\pi_m(x_t))}
&\leq \frac{\log\rbr{M|\Pi_m|}}{\eta_m} + \eta_m \EE\sbr{\sum_{t=1}^T (\ell_t(a_t) - \hat{\ell}_t(\pi_m(x_t)))^2} \\
&\leq \frac{\log\rbr{M|\Pi_m|}}{\eta_m} + 2\eta_m T + 2\eta_m \EE\sbr{\sum_{t=1}^T \hat{\ell}_t(\pi_m(x_t))^2}.
\end{align*}
Unfortunately, this does not enjoy the useful local norm term as in~\pref{eq:hedge_local_norm}, and in particular the last term involving the second moment of the loss estimator can be unbounded.
Common fixes such as forcing a small amount of uniform exploration all lead to further tuning issues.

\subsection{Aggregating via \textsf{Corral}}

Next, we briefly describe issues with using \textsf{Corral}~\citep{agarwal2017corralling} to aggregate multiple instances of \textsf{Exp4}.
In short, using Theorem 4 of~\citep{agarwal2017corralling}, one can verify that the algorithm ensures for all $m$,
\[
\Reg(\Pi_m) = \otil\rbr{\frac{M}{\eta} + T\eta - \frac{\rho_m}{\eta} + \sqrt{TK\log(|\Pi_m|)\rho_m}},
\]
for a fixed parameter $\eta$ and a certain data-dependent quantity $\rho_m$.
Using the AM-GM inequality to upper bound the last term by $\eta TK\log(|\Pi_m|) + \rho_m/\eta$, and canceling with the third term, we have $\Reg(\Pi_m) = \otil\rbr{\frac{M}{\eta}+\eta TK\log(|\Pi_m|)}$.
At this point, one can see that the tuning issues similar to those discussed in the previous section appear, and again there is no obvious fix.

\subsection{Adaptive $\epsilon$-greedy}

Here we present a natural adaptation of an $\epsilon$-greedy algorithm
for model selection with two classes $|\Pi_1| \ll |\Pi_2|$. The
algorithm does not achieve a satisfactory model selection guarantee,
but we include the analysis because it demonstrates some of the
difficulties with parameter tuning, even for algorithms based on naive exploration 
where the best rate one could hope to achieve is $O(T^{2/3}\mathrm{comp}(\cF_{\ms})^{1/3})$.

\begin{algorithm}
\begin{algorithmic}
\STATE Inputs: $T$, policy classes $\Pi_1, \Pi_2$ with $|\Pi_1| \leq
|\Pi_2|$.  
\STATE Set $t_1 \defeq \lceil T^{2/3}(K\log |\Pi_1|)^{1/3}\rceil$, $t_2 \defeq \lceil T^{2/3}(K\log |\Pi_2|)^{1/3}\rceil$. 
\STATE Set $\Delta \defeq \Omega\rbr{\rbr{\frac{K}{T \log (T|\Pi_1)}}^{1/3}\sqrt{\log(T|\Pi_2|)}}$.
\FOR{$t=1,\ldots,t_1$} 
\STATE Observe $x_t$ choose $a_t \sim \textrm{Unif}(\Acal)$, observe $\ell_t(a_t)$. 
\ENDFOR 
\STATE Define $\hat{L}_1: \pi \mapsto \sum_{t=1}^{t_1} \ell_t(a_t)\one\{\pi(x_t)=a_t\}$. Set
\begin{align*}
\hat{\pi}_1 \defeq \argmin_{\pi \in \Pi_1} \hat{L}_1(\pi), \qquad 
\hat{\pi}_2 \defeq \argmin_{\pi \in \Pi_2} \hat{L}_1(\pi)
\end{align*}
\IF{$\hat{L}_1(\hat{\pi}_1) \leq \hat{L}_1(\hat{\pi}_2) + \Delta$}
\STATE Use $\hat{\pi}_1$ to select actions for the rest of time.
\ELSE
\STATE Explore uniformly for a total of $t_2$ rounds, let $\hat{\pi}^{(2)}_2 \defeq \argmin_{\pi \in \Pi_2} \sum_{t=1}^{t_2}\ell_t(a_t)\one\{\pi(x_t)=a_t\}$, and use $\hat{\pi}_2^{(2)}$ to select actions for the rest of time.
\ENDIF
\end{algorithmic}
\caption{An adaptive explore-first algorithm.}
\label{alg:eps_greedy}
\end{algorithm}

Pseudocode is displayed in~\pref{alg:eps_greedy}. The algorithm
operates in the stochastic setting, where we have
$(x_t,\ell_t)\sim\Dcal$ on each round for some distribution $\Dcal$.
Define $L(\pi) \defeq \EE_{(x,\ell)\sim \Dcal}\sbr{\ell(\pi(x))}$, and let
$\pi_i^\star \defeq \argmin_{\pi \in \Pi_i} L(\pi)$.  
%Here we instead
%measure pseudo-regret, given by
%\begin{align*}
%\Reg(\Pi) = \sum_{t=1}^T\EE\sbr{\ell(a_t) \mid x_t} - \min_{\pi \in \Pi}T L(\pi).
%\end{align*}
%This helps streamline the analysis, but is otherwise
%inconsequential. 
We assume that losses are bounded in $[0,1]$.

The algorithm consists of an
exploration phase, a statistical test, a second exploration
phase depending on the outcome of the test, and then an exploitation
phase. The intuition is that we first explore for $t_1$ rounds, where
$t_1$ is the optimal hyperparameter choice for the smaller class
$\Pi_1$ (smaller classes require less exploration). Then, we perform a
statistical test to determine if $\Pi_1$ can achieve loss that is
competitive with $\Pi_2$. If this is the case, we simply exploit $\Pi_1$
for the remaining rounds. Otherwise, we continue exploring for a
total of $t_2$ rounds, where $t_2$ is the optimal hyperparameter for
$\Pi_2$. We finish by exploiting with the empirical risk minimizer for
$\Pi_2$.

\begin{proposition}
\label{prop:epsgreedy}
In the stochastic setting, \pref{alg:eps_greedy} achieves the following guarantees simultaneously with probability at least $1-1/T$:
\begin{align*}
\Reg(\Pi_1) \leq \otil\rbr{T^{2/3}(K\log|\Pi_1|)^{1/3}}, \quad\text{and}\quad \Reg(\Pi_2) \leq \otil\rbr{T^{2/3}K^{1/3} \frac{\log^{1/2}|\Pi_2|}{\log^{1/3}|\Pi_1|} }.
\end{align*}
%where $\otil(\cdot)$ suppresses logarithmic dependence on $T$.
\end{proposition}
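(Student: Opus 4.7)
The plan is to combine a uniform concentration bound for the importance-weighted loss estimator with a case analysis on the outcome of the statistical test after the first $t_1$ exploration rounds. Uniform exploration makes $\hat L(\pi) \defeq K\hat L_1(\pi)/t_1$ an unbiased estimator of $L(\pi)$ with per-round variance bounded by $K/t_1$. Bernstein's inequality combined with a union bound over $\Pi_1 \cup \Pi_2$ yields, with probability at least $1 - 1/T$, an event $\mathcal{E}$ on which $|\hat L(\pi) - L(\pi)| \leq \epsilon_1 \defeq \Otilde(\sqrt{K\log(T|\Pi_2|)/t_1})$ simultaneously for all $\pi \in \Pi_1 \cup \Pi_2$. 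An analogous bound with radius $\epsilon_2 = \Otilde(\sqrt{K\log(T|\Pi_2|)/t_2})$ holds for the $t_2$-round estimator used in Case B. A direct calculation shows that $\Delta$ is on the same scale as $\epsilon_1$ up to logarithmic factors, so the test reliably distinguishes gaps of order $\Delta$ from noise.

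If the test passes (Case A), the algorithm commits to $\hat\pi_1$. Standard ERM analysis on $\Pi_1$ gives $L(\hat\pi_1) - L(\pi_1^\star) \leq \Otilde(\sqrt{K\log|\Pi_1|/t_1})$, and substituting $t_1 = \lceil T^{2/3}(K\log|\Pi_1|)^{1/3}\rceil$ yields $\Reg(\Pi_1) \leq t_1 + T(L(\hat\pi_1) - L(\pi_1^\star)) = \Otilde(T^{2/3}(K\log|\Pi_1|)^{1/3})$. For $\Reg(\Pi_2)$, the test inequality together with the ERM properties of $\hat\pi_1$ and $\hat\pi_2$ and uniform concentration give $L(\pi_1^\star) - L(\pi_2^\star) \leq \Otilde(\Delta)$, hence $L(\hat\pi_1) - L(\pi_2^\star) = \Otilde(\Delta)$ and $\Reg(\Pi_2) \leq t_1 + T\cdot \Otilde(\Delta)$, which matches the claim after substituting the stated $\Delta$. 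If the test fails (Case B), the algorithm explores to $t_2$ rounds total and commits to $\hat\pi_2^{(2)}$; the standard explore-then-commit analysis on $\Pi_2$ gives $\Reg(\Pi_2) \leq t_2 + T\cdot 2\epsilon_2 = \Otilde(T^{2/3}(K\log|\Pi_2|)^{1/3})$, which is controlled by the claimed bound in the regime that makes Case B possible. For $\Reg(\Pi_1)$, the test failure plus concentration yields $L(\pi_1^\star) - L(\pi_2^\star) \geq \Omega(\Delta)$, and the identity $\Reg(\Pi_1) = \Reg(\Pi_2) - (T-t_2)(L(\pi_1^\star) - L(\pi_2^\star))$ combined with nonnegativity of regret gives the claimed $\Pi_1$ bound.

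The main technical obstacle will be the calibration of $\Delta$: it must be large enough that test failure certifies a gap $L(\pi_1^\star) - L(\pi_2^\star) = \Omega(\Delta)$ exceeding the concentration noise, yet small enough that the Case A penalty $T\Delta$ does not exceed the claimed $\Reg(\Pi_2)$ bound. Setting $\Delta$ to the union-bounded confidence width from the $t_1$-round exploration, as in the stated formula, achieves both requirements simultaneously and ties the two cases together. A secondary subtlety is that the concentration bound must be uniform over $\Pi_1 \cup \Pi_2$ rather than just $\Pi_1$, since $\hat\pi_2$ depends on the $\Pi_2$ empirical losses; the resulting $\log|\Pi_2|$ factor in $\epsilon_1$ is precisely the source of the $\sqrt{\log|\Pi_2|}$ appearing in the final $\Reg(\Pi_2)$ bound.
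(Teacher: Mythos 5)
Your Case~A reasoning is essentially fine (though note that to get the stated $\Reg(\Pi_1)$ bound you must use a class-specific radius $c\sqrt{K\log(T|\Pi_1|)/t_1}$ for $\Pi_1$, not the single union radius $\epsilon_1\propto\sqrt{\log(T|\Pi_2|)/t_1}$ you defined), but your Case~B treatment of $\Reg(\Pi_1)$ has a genuine gap, and it sits exactly where the paper's proof works differently. First, your calibration claim is wrong: the stated $\Delta=\Omega\bigl((K/(T\log(T|\Pi_1|)))^{1/3}\sqrt{\log(T|\Pi_2|)}\bigr)$ is \emph{not} the union-bounded confidence width at $t_1$ samples; that width is $\epsilon_1\asymp\sqrt{K\log(T|\Pi_2|)/t_1}$, which exceeds $\Delta$ by roughly a $\log^{1/6}(T|\Pi_1|)$ factor. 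Consequently a failed test only certifies $L(\pi_1^\star)-L(\pi_2^\star)\ge\Delta-2\epsilon_1$, which can be negative, not $\Omega(\Delta)$. Second, even granting a certified gap of order $\Delta$, the route ``$\Reg(\Pi_1)\le\Reg(\Pi_2)-T\cdot\Omega(\Delta)$ with $\Reg(\Pi_2)\le\Otilde(T^{2/3}(K\log|\Pi_2|)^{1/3})$'' does not yield the claimed $\Otilde(T^{2/3}(K\log|\Pi_1|)^{1/3})$: writing $a=\log|\Pi_1|$, $b=\log|\Pi_2|$, the subtraction leaves $\Otilde(T^{2/3}K^{1/3}(b^{1/3}-b^{1/2}a^{-1/3}))$, and for, e.g., $b\asymp a^{3/2}$ this is $\Theta(T^{2/3}K^{1/3}a^{1/2})\gg T^{2/3}K^{1/3}a^{1/3}$. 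The $t_2$-phase costs hidden in the Case-B $\Reg(\Pi_2)$ bound are simply not offset by the gap, so this decomposition cannot work in general.

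The paper avoids both problems by bounding Case~B directly: $\Reg(\Pi_1)\le t_2+(T-t_2)\bigl(L(\hat{\pi}_2^{(2)})-L(\pi_1^\star)\bigr)$, expanding $L(\hat{\pi}_2^{(2)})-L(\pi_1^\star)$ via ERM at $t_2$ samples, uniform convergence over $\Pi_2$ at $t_1$ samples, and the failed-test inequality $\hat{L}_1(\hat{\pi}_2)-\hat{L}_1(\hat{\pi}_1)<-\Delta$, so that $-\Delta$ cancels the per-round noise term $4c\sqrt{K\log(|\Pi_2|/\delta)/t_1}$; $\Delta$ is then chosen as essentially that noise level minus the target per-round rate $(K\log(|\Pi_1|/\delta)/T)^{1/3}$, leaving a per-round excess of $O((K\log|\Pi_1|/T)^{1/3})$. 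Relatedly, in Case~A the paper deliberately does \emph{not} use uniform convergence over $\Pi_2$ at $t_1$ samples: it only needs the pointwise bound $|\hat{L}_1(\pi_2^\star)-L(\pi_2^\star)|\le c\sqrt{K\log(1/\delta)/t_1}$ together with $\hat{L}_1(\hat{\pi}_2)\le\hat{L}_1(\pi_2^\star)$, which is why $T\Delta$ (and not $T\epsilon_1$) is the dominant term in $\Reg(\Pi_2)$; your single union-radius event loses a $\log^{1/6}|\Pi_1|$ factor relative to the stated $\log^{1/2}|\Pi_2|/\log^{1/3}|\Pi_1|$ dependence. To repair your argument, adopt the direct Case-B decomposition, use concentration asymmetrically (uniformly over each $\Pi_i$ at its own $\log|\Pi_i|$ scale, plus pointwise control of $\pi_1^\star,\pi_2^\star$), and calibrate $\Delta$ against $\sqrt{K\log(T|\Pi_2|)/t_1}$ rather than treating the stated $\Delta$ as that width.
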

Note that this is \emph{not} a satisfactory model selection guarantee,
since the exponents on $T$ and the policy complexity for $\Pi_2$ do
not sum to $1$ as we would like. Conceptually, if the algorithm
exploits with a fixed policy, it must first determine whether $\Pi_2$
offers much lower loss than $\Pi_1$ so that it can decide which class
to use for exploitation. To make this determination, it must estimate
the optimal loss for both classes. Unfortunately, with too little
exploration data, we will significantly underestimate the loss for
$\Pi_2$, and with too much data we will compromise the regret bound
for $\Pi_1$. We are not aware of an approach to balance these
competing objectives with this style of algorithm.
\begin{proof}[\pfref{prop:epsgreedy}]
Define $\hat{L}_2: \pi \mapsto \sum_{t=1}^{t_2}
\ell_t(a_t)\one\{\pi(x_t)=a_t\}$. We will only use $\hat{L}_2$ in the
event that we continue to explore after the test.  Via Bernstein's
inequality (and using that the deviation is at most $1$), the
following inequalities all hold with probability at least $1-\delta$:
\begin{align*}
\forall j \in [2], \forall \pi \in \Pi_i: \abr{L(\pi) - \hat{L}_j(\pi)} &\leq 
%c_1 \min\rbr{\sqrt{ \frac{K \log (|\Pi_i|/\delta)}{t_j}} + \frac{K \log (|\Pi_i|/\delta)}{t_j},1} 
c \sqrt{\frac{K\log(|\Pi_i|/\delta)}{t_j}},\\
\forall i \in [2]: \abr{L(\pi_i^\star) - \hat{L}_1(\pi_i^\star)} &\leq 
%c_1 \min\rbr{\sqrt{\frac{K \log(1/\delta)}{t_1}} + \frac{K \log (1/\delta)}{t_1},1} 
c\sqrt{\frac{K \log(1/\delta)}{t_1}}.
\end{align*}
Note that the second inequality does not use uniform convergence, and it applies only to each $\pi_i^\star$.
Appealing to the standard explore-first analysis, we know that the
regret bound for $\Pi_1$ is achieved if we exploit with $\hat{\pi}_1$,
and similarly for $\Pi_2$ if we exploit with $\hat{\pi}_2^{(2)}$. We
are left to verify the other two cases.  Let us consider $\Reg(\Pi_2)$
when we only explore for $t_1$ rounds. We have
\begin{align*}
\Reg(\Pi_2) &= t_1 + (T-t_1)\rbr{L(\hat{\pi}_1) - L(\pi_2^\star)}\\
& \leq t_1 + T\rbr{c \sqrt{\frac{K \log (|\Pi_1|/\delta)}{t_1}}  + \Delta + c\sqrt{\frac{K\log(1/\delta)}{t_1}}}\\
& \leq t_1 + 2c\sqrt{\frac{KT\log (|\Pi_1|/\delta)}{t_1}} + T\Delta.
\end{align*}
Here we use the two concentration inequalities above, the fact that
$\hat{L}_1(\pi_2^\star) \geq \hat{L}_1(\hat{\pi}_2)$, and the fact that
the test succeeded. Note that we did not require uniform convergence
over $\Pi_2$ for this argument. Now, let us consider the regret for $\Pi_1$ when we explore for $t_2$ rounds.
\begin{align*}
& \Reg(\Pi_1) \\&=  t_2 + (T-t_2) \rbr{L(\hat{\pi}_2^{(2)}) - L(\pi_1^\star)}\\
& \leq  t_2 + (T-t_2)\rbr{c\sqrt{\frac{K \log (|\Pi_2|/\delta) }{t_2}} + L(\pi_2^\star) - \hat{L}_1(\hat{\pi}_1) + c\sqrt{\frac{K \log(1/\delta)}{t_1}}}\\
& \leq  t_2 + (T-t_2)\rbr{c\sqrt{\frac{K \log (|\Pi_2|/\delta) }{t_2}} + c\sqrt{\frac{K \log(|\Pi_2|/\delta)}{t_1}} + \hat{L}_1(\hat{\pi}_2) - \hat{L}_1(\hat{\pi}_1) + c\sqrt{\frac{K \log(1/\delta)}{t_1}}}\\
& \leq 4c(T-t_2)\sqrt{\frac{K\log(|\Pi_2|/\delta)}{t_1}} - (T-t_2)\Delta,
\end{align*}
where we have used that $t_2$ is lower order than the deviation bound
term, since $t_1$ is much smaller. To obtain the claimed regret bound for $\Pi_1$, we must choose $\Delta$ as
\begin{align*}
\Delta \geq 4c\sqrt{\frac{K \log (|\Pi_2|/\delta)}{t_1}} - \Omega\rbr{(K\log(|\Pi_1|/\delta)/T)^{1/3}} = \Omega\rbr{\rbr{\frac{K}{T\log (|\Pi_1|/\delta)}}^{1/3}\sqrt{\log (|\Pi_2|/\delta)}},
\end{align*}
with $\delta=1/T$.
%but this gives the stated, weak, regret bound for $\Pi_2$, when we take $\delta = 1/T$.
\end{proof}

\section{Preliminaries for main results}
% \dfcomment{
% \begin{itemize}
% \item Subgaussian/subexponential random variables and their properties.
% \item Policy elimination pseudocode and theorem (we need the generalization for subexponential random variables / pseudodimension).
% \item Likewise with LinUCB.
% \end{itemize}
% }
This section consists of self-contained technical results used to
prove the main theorem.
\subsection{Properties of subgaussian and subexponential random
  variables}
\label{app:subgaussian}
Here we state some standard facts about subgaussian and subexponential
random variables that will be used throughout the analysis. The reader
may consult e.g., \cite{vershynin2012introduction}, for proofs.

Note that if $z\sim\subG_d(\tau^{2})$ then we clearly have
$\tri*{z,\theta}\sim\subG(\tau^{2}\nrm*{\theta}_{2}^{2})$ and likewise
if $z\sim\subE_d(\lambda)$ then
$\tri*{z,\theta}\sim\subE(\lambda\nrm*{\theta}_{2})$. Furthermore, if
$z\sim\subg_d(\sigma^2)$, then $\En\brk*{zz^{\trn}}\preceq \sigma^{2}\cdot{}I_{d}$.

\begin{proposition}
  \label{prop:subgaussian_equivalence}
There exist universal constants $c_1,c_2,c_3>0$ such that for any random variable $z\sim\subG(\sigma^{2})$, the following hold:
\begin{itemize}
\item
  $\Pr(\abs*{z}>t)\leq{}2e^{-\frac{t^{2}}{c_1\sigma^{2}}}\quad\forall{}t\geq{}0$.
\item $\En\brk{\exp\prn{c_2\abs*{z}^{2}/\sigma^{2}}}\leq{}2$.
\item If $\En\brk*{z}=0$, $\En\brk*{\exp\prn*{sz}}\leq{}\exp\prn*{c_3s^{2}\sigma^{2}}\quad\forall{}s\in\bbR^{d}$.
\end{itemize}
Moreover, there is some universal constant $c_4$ such that if any of
the above properties hold, then
$z\sim\subG(c_4\sigma^{2})$
\end{proposition}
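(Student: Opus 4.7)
The plan is to prove the equivalence by going around a cycle of implications: starting from the moment-based definition of $\subG(\sigma^{2})$, I would derive (1), then (1) $\Rightarrow$ (2), then (2) $\Rightarrow$ (3) under the centering hypothesis $\En\brk{z}=0$, and finally show that each of (1), (2), (3) individually implies the moment bound with a possibly worsened universal constant $c_{4}$. All four links are classical for sub-Gaussian variables; the real work is only in propagating absolute constants.

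First I would obtain (1) from the moment bound via Markov's inequality applied to $\abs{z}^{p}$: $\Pr(\abs{z}>t) \leq \En\abs{z}^{p}/t^{p} \leq (p^{1/2}\sigma/t)^{p}$, then optimize in $p$ with the choice $p \asymp t^{2}/\sigma^{2}$ for $t \geq \sqrt{e}\sigma$ and use the trivial bound $\Pr \leq 1$ otherwise, yielding a Gaussian tail with some universal $c_{1}$. Next, (1) $\Rightarrow$ (2) follows from the layer-cake identity $\En\exp(\lambda z^{2}) = 1 + \int_{0}^{\infty}2\lambda t\,e^{\lambda t^{2}}\Pr(\abs{z}>t)\,dt$, with the tail from (1) plugged in and $\lambda = c_{2}/\sigma^{2}$ chosen so that $c_{2} < 1/c_{1}$, making the resulting Gaussian integral at most~$1$ and the MGF-of-square at most~$2$.

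For (2) $\Rightarrow$ (3), assume $\En\brk{z} = 0$ and symmetrize: if $z'$ is an independent copy of $z$, Jensen gives $\En e^{sz} \leq \En e^{s(z-z')}$, and symmetry of $z-z'$ reduces this to $\En\cosh(s(z-z'))$; then $\cosh(u) \leq e^{u^{2}/2}$ combined with $(z-z')^{2} \leq 2z^{2}+2z'^{2}$ and \Holder{} reduces the problem to bounding $\En e^{\alpha z^{2}}$ for $\alpha \lesssim s^{2}$. Combined with (2) and an interpolation $\En e^{\alpha z^{2}} \leq 2^{\alpha\sigma^{2}/c_{2}}$ valid for $\alpha \leq c_{2}/\sigma^{2}$, this yields $\En e^{sz} \leq e^{c_{3}s^{2}\sigma^{2}}$ with a universal $c_{3}$. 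The reverse direction is then handled by the standard chain: (3) $\Rightarrow$ (1) via the Chernoff bound $\Pr(z>t) \leq \inf_{s>0}e^{-st + c_{3}s^{2}\sigma^{2}} = e^{-t^{2}/(4c_{3}\sigma^{2})}$ (and similarly for $-z$); (1) $\Rightarrow$ moment bound by $\En\abs{z}^{p} = \int_{0}^{\infty}pt^{p-1}\Pr(\abs{z}>t)dt$ evaluated via $u = t^{2}/(c_{1}\sigma^{2})$ and Stirling; and (2) $\Rightarrow$ moment bound by expanding the power series of $\exp(c_{2}z^{2}/\sigma^{2})$ under Markov to get $\En z^{2k} \leq 2\,k!(\sigma^{2}/c_{2})^{k}$ and applying Stirling.

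The main obstacle is bookkeeping in the step (2) $\Rightarrow$ (3): the symmetrization and \Holder{} route mildly inflates constants, so one must either restrict to small $s^{2}$ and use interpolation to reach all $s$, or bypass symmetrization by expanding $\En e^{sz} = 1 + \sum_{k\geq 2}s^{k}\En\brk{z^{k}}/k!$, using even-moment bounds from (2), controlling odd moments by Cauchy--Schwarz, and resumming. A minor subtlety worth flagging is that (3) is stated only under $\En\brk{z}=0$, whereas the moment-based definition of $\subG(\sigma^{2})$ does not require centering; but an uncentered subgaussian variable satisfies $\abs{\En\brk{z}} \leq \sigma$, so any recentering cost can be absorbed into $c_{4}$ in the converse direction, keeping the stated equivalence quantitatively consistent.
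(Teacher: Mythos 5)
Your proposal is correct: the paper itself gives no proof of this proposition, deferring to \citet{vershynin2012introduction}, and your cycle of implications (moments $\Rightarrow$ tail $\Rightarrow$ square-MGF $\Rightarrow$ MGF under centering, plus the individual converses via Chernoff, layer-cake integration, and power-series/Stirling bounds) is precisely the standard argument from that reference, with the constant bookkeeping and the centering caveat for the MGF property handled appropriately.
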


\begin{proposition}
  \label{prop:subexponential_equivalence}
There exist universal constants $c_1,c_2,c_3>0$ such that for any random variable $z\sim\subE(\lambda)$, the following hold:
\begin{itemize}
\item
  $\Pr(\abs*{z}>t)\leq{}2e^{-\frac{t}{c_1\lambda}}\quad\forall{}t\geq{}0$.
% \item
  % $\sup_{p\geq{}1}\crl{\prn{\En\abs*{z}^{p}}^{1/p}/p}\leq{}c_2\cdot\lambda$.
\item If $\En\brk*{z}=0$, $\En\brk*{\exp\prn*{sz}}\leq{}\exp\prn*{c_2s^{2}\lambda^{2}}\quad\forall{}s:\abs*{s}\leq{}\frac{1}{c_3\lambda}$.
\end{itemize}
Moreover, there is some universal constant $c_4$ such that if any of
the above properties hold, then
$z\sim\subE(c_4\lambda)$
\end{proposition}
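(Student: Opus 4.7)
The plan is to establish a cycle of implications among the three characterizations: the moment definition (which the proposition takes as the defining property $\sup_{p\geq 1}p^{-1}(\En|z|^p)^{1/p}\leq \lambda$), the tail bound, and the MGF bound. I would show $(1)\Rightarrow(2)\Rightarrow(3)$ in the forward direction and $(2)\Rightarrow(1)$ for the ``moreover'' clause; combined with $(3)\Rightarrow(2)$, which follows from a standard Chernoff argument, this yields equivalence up to absolute constants. The arguments closely parallel the subgaussian case in \pref{prop:subgaussian_equivalence}, with the only substantive differences being the slower tail decay and the restricted range of $s$ in the MGF bound.

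For $(1)\Rightarrow(2)$, I would apply Markov's inequality to $|z|^p$ to obtain $\Pr(|z|>t)\leq \En|z|^p/t^p \leq (p\lambda/t)^p$ for any $p\geq 1$, then optimize by taking $p=t/(e\lambda)$ (valid when $t\geq e\lambda$) to yield $\Pr(|z|>t)\leq e^{-t/(e\lambda)}$; the factor of $2$ in the target tail bound absorbs the trivial estimate $\Pr(|z|>t)\leq 1$ in the small-$t$ regime. For $(2)\Rightarrow(3)$ under the centering hypothesis $\En[z]=0$, I would expand $\En[\exp(sz)]=1+\sum_{k\geq 2}s^k\En[z^k]/k!$, derive moment bounds of the form $\En|z|^k\leq 2(c_1\lambda)^k k!$ by integrating the tail bound via $\En|z|^k=\int_0^\infty kt^{k-1}\Pr(|z|>t)\,dt$, and then sum the resulting geometric series $\sum_{k\geq 2}(c_1\lambda|s|)^k$, which converges whenever $|s|$ is below a threshold of order $1/\lambda$. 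The inequality $1+u\leq e^u$ then converts the resulting polynomial bound into the exponential form $\exp(c_2 s^2 \lambda^2)$.

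For the reverse direction $(2)\Rightarrow(1)$ needed by the ``moreover'' clause, I would again integrate the tail bound to obtain $\En|z|^p\leq 2p(c_1\lambda)^p\Gamma(p)$ and invoke $\Gamma(p)^{1/p}=O(p)$ via Stirling to conclude $(\En|z|^p)^{1/p}\leq c_4 p\lambda$, hence $\sup_{p\geq 1}p^{-1}(\En|z|^p)^{1/p}\leq c_4\lambda$. The main obstacle is not any single calculation but rather the bookkeeping of absolute constants through the cycle, together with two specific points that demand care: first, the centering assumption is essential in $(2)\Rightarrow(3)$ to kill the linear Taylor term (without it one direction fails in general); and second, unlike the subgaussian case, the MGF bound only holds in a restricted range $|s|\leq 1/(c_3\lambda)$, which must be tracked carefully when converting back to a tail bound via Chernoff and then optimizing $s$. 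As this is a standard textbook result and the authors explicitly refer the reader to \cite{vershynin2012introduction}, I would not optimize the numerical constants but simply verify that universal $c_1,c_2,c_3,c_4$ suffice.
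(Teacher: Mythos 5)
The paper gives no proof of this proposition---it is stated as a standard fact with a pointer to Vershynin's notes---and your plan reproduces exactly that standard argument (Markov's inequality on $\abs*{z}^p$ optimized over $p$ for the tail bound, tail-integration of moments plus a geometric series for the MGF under centering, and Chernoff together with Stirling-type moment bounds for the converse directions), all of which is correct. The only spot needing a touch of care is the small-$t$ regime of the tail bound, where the factor $2$ alone does not quite cover the window between $c_1\lambda\log 2$ and $e\lambda$ if one insists on $c_1=e$; enlarging the universal constant $c_1$ slightly fixes this, which is exactly the constant bookkeeping you already flag as not worth optimizing.
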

\begin{proposition}
  \label{prop:subgaussian_closure}
There is a universal constant $c>0$ such that the following hold:
\begin{itemize}
% \item If $z_1,\ldots,z_n$ are independent mean-zero random variables
% with $z_i\sim{}\subG(\tau_i^{2})$, then
% $\sum_{i=1}^{n}z_i\sim{}\subG(\sum_{i=1}^{n}\tau_i^{2})$.
\item If $z\sim\subG(\sigma^2)$, $z-\En\brk*{z}\sim\subG(4\sigma^2)$,
  and if $z\sim\subE(\lambda)$, $z-\En\brk*{z}\sim\subE(2\lambda)$.
\item If $z\sim\subg(\sigma^2)$, then
  $z^{2}-\En\brk*{z^{2}}\sim\sube(c\cdot\sigma^{2})$.
\item If $z_1\sim\subg(\sigma^2_1)$ and $z_2\sim\subg(\sigma_2^2)$, then $z_1z_2-\En\brk*{z_1z_2}\sim{}\sube(c\cdot\sigma_1\sigma_2)$.
\end{itemize}
\end{proposition}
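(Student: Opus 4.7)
The plan is to reduce each bullet to a direct calculation with the $L^{p}$-moment characterizations of $\subG$ and $\subE$ given at the start of the paper. These definitions are equivalent to $(\En\abs*{z}^{p})^{1/p} \leq \sigma\sqrt{p}$ and $(\En\abs*{z}^{p})^{1/p} \leq \lambda p$ respectively, for all $p\geq 1$, which makes centering, squaring, and products very clean to handle.

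For the first bullet (centering), I would apply Minkowski's inequality in $L^{p}$ to get $(\En\abs*{z - \En\brk*{z}}^{p})^{1/p} \leq (\En\abs*{z}^{p})^{1/p} + \abs*{\En\brk*{z}}$, and then use Jensen's inequality to obtain $\abs*{\En\brk*{z}} \leq \En\abs*{z} \leq (\En\abs*{z}^{p})^{1/p}$. Therefore $(\En\abs*{z - \En\brk*{z}}^{p})^{1/p} \leq 2(\En\abs*{z}^{p})^{1/p}$, so dividing through by $\sqrt{p}$ or $p$ respectively doubles the effective parameter. For $\subG$ the parameter is $\sigma^{2}$, which after doubling $\sigma$ becomes $4\sigma^{2}$; for $\subE$ the parameter is $\lambda$, which becomes $2\lambda$.

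For the second bullet (squaring), I would observe that $(\En\abs*{z^{2}}^{p})^{1/p} = \bigl((\En\abs*{z}^{2p})^{1/(2p)}\bigr)^{2}$, and then apply the $\subG$ moment bound at $2p \geq 2 \geq 1$ to get $(\En\abs*{z}^{2p})^{1/(2p)} \leq \sigma\sqrt{2p}$. Squaring yields $(\En\abs*{z^{2}}^{p})^{1/p} \leq 2\sigma^{2} p$, which is exactly the $\subE(2\sigma^{2})$ condition. Applying the centering result from the first bullet to $z^{2}$ then gives $z^{2} - \En\brk*{z^{2}} \sim \subE(4\sigma^{2})$, so the constant $c$ in the statement can be taken as $4$ (or any larger absolute constant).

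For the third bullet (products), I would apply Cauchy--Schwarz pointwise and in expectation to obtain $\En\abs*{z_{1}z_{2}}^{p} \leq \sqrt{\En\abs*{z_{1}}^{2p}\cdot\En\abs*{z_{2}}^{2p}}$, so $(\En\abs*{z_{1}z_{2}}^{p})^{1/p} \leq (\En\abs*{z_{1}}^{2p})^{1/(2p)}(\En\abs*{z_{2}}^{2p})^{1/(2p)} \leq 2\sigma_{1}\sigma_{2}\,p$. This shows $z_{1}z_{2}\sim\subE(2\sigma_{1}\sigma_{2})$, and centering via the first bullet completes the claim with an absolute constant factor. The argument is essentially mechanical and I do not anticipate an obstacle; the only minor point of care is checking that substituting $p \mapsto 2p$ is valid under the constraint $p\geq 1$, which is immediate.
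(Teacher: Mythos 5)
Your proposal is correct: the moment-characterization argument (Minkowski plus Jensen for centering, substituting $2p$ for squares, and Cauchy--Schwarz for products, none of which needs independence of $z_1,z_2$) establishes all three bullets with the exact constants $4\sigma^2$ and $2\lambda$ and an absolute constant $c$. The paper itself gives no proof, deferring to \cite{vershynin2012introduction}, and your argument is precisely the standard one invoked there, so there is nothing to add.
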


\begin{proposition}[Bernstein's inequality for subexponential random variables]
\label{prop:bernstein_subexponential}
Let $z_1,\ldots,z_n$ be independent mean-zero random variables with
$z_i\sim\subE(\lambda)$ for all $i$, and let
$Z=\sum_{i=1}^{n}z_i$. Then for some universal constant $c>0$,
\[
\Pr\prn*{\abs*{Z}\geq{}t}\leq{}2\exp\prn*{
-c\prn*{\frac{t^{2}}{\lambda^2n}\wedge\frac{t}{\lambda}}
}.
\]
 In particular, with probability at least $1-\delta$, $\abs*{Z} \leq{} O\prn*{\sqrt{n\lambda^{2}\log(2/\delta)} + \lambda\log(2/\delta)}$.
\end{proposition}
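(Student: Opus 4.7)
The final statement is the standard Bernstein inequality for subexponential random variables, and the plan is to prove it via the Chernoff method using the moment generating function bound from Proposition~\ref{prop:subexponential_equivalence}.

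First, I would invoke Proposition~\ref{prop:subexponential_equivalence} to conclude that for each independent mean-zero $z_i \sim \subE(\lambda)$ we have $\En[\exp(s z_i)] \leq \exp(c_2 s^2 \lambda^2)$ whenever $|s| \leq 1/(c_3 \lambda)$. By independence this gives $\En[\exp(s Z)] \leq \exp(c_2 n s^2 \lambda^2)$ in the same range of $s$. Applying Markov's inequality to $\exp(sZ)$ yields, for any $0 \leq s \leq 1/(c_3 \lambda)$,
\[
\Pr(Z \geq t) \;\leq\; \exp\!\prn*{-st + c_2 n s^2 \lambda^2}.
\]

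Next I would optimize over $s$ in two regimes. The unconstrained minimizer of $-st + c_2 n s^2 \lambda^2$ is $s^{\star} = t/(2 c_2 n \lambda^2)$, yielding the bound $\exp(-t^2/(4 c_2 n \lambda^2))$, and this choice is feasible exactly when $t \leq 2 c_2 n \lambda / c_3$ (the subgaussian regime). Otherwise I would set $s = 1/(c_3 \lambda)$, the largest admissible value. A short calculation then shows that in this subexponential regime the linear term dominates, so the exponent is at most $-t/(2 c_3 \lambda)$. Combining the two cases and folding constants gives
\[
\Pr(Z \geq t) \;\leq\; \exp\!\prn*{-c\,\prn*{\tfrac{t^{2}}{\lambda^{2} n} \wedge \tfrac{t}{\lambda}}}
\]
for some universal $c>0$. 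Applying the identical argument to $-Z$ (which is also a sum of mean-zero $\subE(\lambda)$ variables) and taking a union bound produces the stated two-sided inequality.

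For the ``in particular'' clause I would set the right-hand side equal to $\delta$ and invert. Requiring $c \min\{t^2/(\lambda^2 n),\, t/\lambda\} \geq \log(2/\delta)$ is equivalent to $t \geq \sqrt{n \lambda^2 \log(2/\delta)/c}$ \emph{and} $t \geq \lambda \log(2/\delta)/c$, so taking $t$ to be the sum of the two thresholds suffices, yielding $|Z| \leq O\!\prn*{\sqrt{n \lambda^{2} \log(2/\delta)} + \lambda \log(2/\delta)}$ with probability at least $1 - \delta$.

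The argument is essentially textbook; the only minor subtlety is handling the constraint $|s| \leq 1/(c_3 \lambda)$ inherited from Proposition~\ref{prop:subexponential_equivalence}, since this is what forces the case split into the Gaussian-like tail $\exp(-ct^2/(\lambda^2 n))$ and the exponential tail $\exp(-ct/\lambda)$. Once that case split is handled cleanly, the rest is bookkeeping of universal constants.
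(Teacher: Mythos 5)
Your proof is correct and complete: the Chernoff/MGF argument via \pref{prop:subexponential_equivalence}, the case split forced by the constraint $\abs*{s}\leq 1/(c_3\lambda)$ (quadratic regime vs.\ boundary choice $s=1/(c_3\lambda)$), the reflection to $-Z$ for the two-sided bound, and the inversion for the high-probability form are all sound. The paper itself does not prove this proposition---it is stated as a standard fact with a pointer to \cite{vershynin2012introduction}---and your argument is exactly the textbook proof found in that reference, so there is nothing to reconcile.
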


\begin{lemma}
\label{lem:subexponential_max}
Let $z_{1},\ldots{}z_{n}$ be i.i.d. draws of a mean-zero random variable
$z\sim\subE_{d}(\lambda)$, and let $Z_{i}=\sum_{t=1}^{i}z_t$. Then
with probability at least $1-\delta$,
\[
\max_{i\in\brk*{n}}\nrm*{Z_i}_{2}
\leq{} O\prn*{\lambda\sqrt{dn\log(2d/\delta)} + \lambda{}d^{1/2}\log(2d/\delta)}.
\]
\end{lemma}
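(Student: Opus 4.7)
The plan is to reduce the vector bound to a coordinate-wise bound, then handle the running maximum over $i$ for each coordinate with a maximal inequality, and finally union bound over the $d$ coordinates.

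First I would use $\|Z_i\|_2 \leq \sqrt{d}\cdot\max_{j\in[d]} |Z_i^{(j)}|$, so it suffices to prove that with probability at least $1-\delta$,
\[
\max_{i\in[n]}\max_{j\in[d]} |Z_i^{(j)}| \leq O\prn*{\lambda\sqrt{n\log(2d/\delta)} + \lambda\log(2d/\delta)}.
\]
By the definition of $\subE_d(\lambda)$, choosing $\theta = e_j$ shows that each $z_t^{(j)}$ is a mean-zero $\subE(\lambda)$ random variable, so $Z_i^{(j)} = \sum_{t\leq i}z_t^{(j)}$ is a partial sum of i.i.d. mean-zero subexponentials to which Bernstein's inequality (\pref{prop:bernstein_subexponential}) applies. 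In particular, for any fixed $i\leq n$,
\[
\Pr\prn*{|Z_i^{(j)}| > t} \leq 2\exp\prn*{-c\min\crl*{t^2/(n\lambda^2),\, t/\lambda}},
\]
uniformly in $i$ (since $i\leq n$).

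Next I would convert this into a bound on the running maximum by invoking Etemadi's maximal inequality, which states that for an independent sequence, $\Pr(\max_{i\leq n}|Z_i^{(j)}| > 3t) \leq 3\max_{i\leq n}\Pr(|Z_i^{(j)}|>t)$. Combining with the previous display gives
\[
\Pr\prn*{\max_{i\leq n}|Z_i^{(j)}| > 3t} \leq 6\exp\prn*{-c\min\crl*{t^2/(n\lambda^2),\,t/\lambda}}.
\]
Setting the right-hand side equal to $\delta/d$ and inverting yields $3t = O(\lambda\sqrt{n\log(2d/\delta)} + \lambda\log(2d/\delta))$. A union bound over $j\in[d]$ then controls $\max_j\max_i |Z_i^{(j)}|$ at the same rate with probability $1-\delta$, and multiplying through by the $\sqrt{d}$ factor from the $\ell_\infty$-to-$\ell_2$ conversion delivers the claimed bound.

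The main (mild) obstacle is handling the running maximum over $i$ cleanly. A direct attempt via Doob's submartingale inequality applied to $\exp(sZ_i^{(j)})$ is inconvenient in the subexponential regime because the moment generating function is only controlled for $|s|\leq 1/(c\lambda)$, which would force a piecewise optimization and complicate the combined tail. Etemadi's inequality bypasses this by reducing the running-max tail to the worst fixed-time tail at the cost of only a universal constant, after which Bernstein produces exactly the mixed $\sqrt{n\log(d/\delta)}$ and $\log(d/\delta)$ scaling that the statement requires.
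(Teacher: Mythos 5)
Your proof is correct, but it handles the crux of the lemma---the running maximum over $i$---by a genuinely different route than the paper. The paper runs a Chernoff--Doob argument: it observes that $X_i = e^{s\nrm*{Z_i}_2}$ is a nonnegative submartingale, applies Doob's maximal inequality, bounds the moment generating function via $\nrm*{\cdot}_2 \leq \sqrt{d}\,\nrm*{\cdot}_\infty$ and the coordinate-wise subexponential MGF (valid only for $s \leq 1/(c\lambda\sqrt{d})$), and then performs exactly the piecewise optimization $s \propto \frac{t}{d\lambda^2 n}\wedge\frac{1}{\lambda\sqrt{d}}$ that you anticipated would be awkward. You instead do the $\ell_2$-to-$\ell_\infty$ reduction at the level of events, apply Bernstein (\pref{prop:bernstein_subexponential}) per coordinate at each fixed time (correctly noting the fixed-time tail is worst at $i=n$), and dispose of the running maximum with Etemadi's inequality, which applies since the $z_t$ are independent and costs only universal constants; a union bound over the $d$ coordinates and the $\sqrt{d}$ conversion factor then reproduce the stated $\lambda\sqrt{dn\log(2d/\delta)} + \lambda d^{1/2}\log(2d/\delta)$ bound exactly. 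Your version buys simplicity: it avoids the submartingale/Jensen step, the restricted-MGF bookkeeping, and the piecewise choice of $s$, reusing a proposition the paper already states. The paper's version buys self-containedness (only Doob plus the MGF bound, no appeal to Etemadi) and a template that extends to martingale-type dependence, which Etemadi's inequality---requiring independence---would not cover; for the i.i.d. setting of this lemma, however, both arguments are equally valid and yield the same rate.
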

\begin{proof}[\pfref{lem:subexponential_max}]
  We first claim that for any $s>0$, the sequence $X_i\ldef{}e^{s\nrm*{Z_{i}}_{2}}$
  is a non-negative submartingale. Indeed, using Jensen's inequality,
  we have
\[
\En\brk*{X_i\mid{}z_1,\ldots,z_{i-1}} =
\En\brk*{e^{s\nrm*{\sum_{t=1}^{i}z_t}_{2}}\mid{}z_1,\ldots,z_{i-1}}
\geq{} \En\brk*{e^{s\nrm*{\sum_{t=1}^{i-1}z_t}_{2}}\mid{}z_1,\ldots,z_{i-1}}=X_{i-1}.
\]
Thus, applying the Chernoff method along with Doob's
  maximal inequality, we have
  \[
  \Pr\prn*{\max_{i\in\brk*{n}}\nrm*{Z_i}_{2}>t} \leq{}
  \En\brk*{\exp\prn*{s\nrm*{\sum_{t=1}^{n}z_t}_{2}-st}}.
  \]
  We apply the bound
  \[
  \En\brk*{\exp\prn*{s\nrm*{\sum_{t=1}^{n}z_t}_{2}}}
  \leq{}
  \En\brk*{\exp\prn*{s\sqrt{d}\nrm*{\sum_{t=1}^{n}z_t}_{\infty}}}
  \leq{}
  \sum_{k=1}^{d}\En\brk*{\exp\prn*{s\sqrt{d}\abs*{\tri*{\sum_{t=1}^{n}z_t,e_k}}}},
  \]
where $e_k$ is the $k$th standard basis vector.  Since $z_t\sim\subE_d(\lambda)$ and $\crl*{z_t}$ are
  independent, the latter quantity is bounded by
  \[
  2d\cdot\exp\prn*{Cs^{2}d\lambda^{2}n},
  \]
  so long as $s\leq{}1/c\lambda{}\sqrt{d}$, for absolute constants
  $C,c>0$. We set
  $s\propto\frac{t}{d\lambda^{2}n}\wedge\frac{1}{\lambda{}\sqrt{d}}$
  to conclude
  \[
  \Pr\prn*{\max_{i\in\brk*{n}}\nrm*{Z_i}_{2}>t}\leq{}
  2d\cdot{}e^{-C\prn*{\frac{t^{2}}{d\lambda^{2}n}\vee\frac{t}{\lambda{}\sqrt{d}}}}.
  \]
  Or in other words, with probability at least $1-\delta$,
\begin{align*}
\max_{i\in\brk*{n}}\nrm*{Z_i}_{2}
\leq{} O\prn*{\lambda\sqrt{dn\log(2d/\delta)} + \lambda{}d^{1/2}\log(2d/\delta)}.\tag*\qedhere
\end{align*}
\end{proof}

\subsection{Second moment matrix estimation}
In this section we give some standard results for the rate at which
the empirical correlation matrix approaches the population correlation
matrix for subgaussian random variables.
Let $x\sim\subG_d(\tau^{2})$ be a subgaussian random variable and let
$\Sigma=\En\brk*{xx^{\trn}}$. Let $x_1,\ldots,x_n$ be i.i.d. draws from
$x$, and let $\empcov=\frac{1}{n}\sum_{t=1}^{n}x_tx_t^{\trn}$.
\begin{proposition}%[Restatement of \cite{vershynin2012introduction}, Theorem 5.39]
\label{prop:isotropic_covariance_error}
Suppose $x\sim{}\subg_d(\tau^2)$ and $\Sigma=I$. Then with probability at least $1-\delta$,
\begin{equation}
\label{eq:isotropic_covariance_error}
1-\veps \leq{} \eigmin^{1/2}(\empcov) \leq{} \eigmax^{1/2}(\empcov) \leq{} 1+ \veps,
\end{equation}
where $\veps \leq{} 50\tau^{2}\sqrt{\frac{d+\log(2/\delta)}{n}}$.
\end{proposition}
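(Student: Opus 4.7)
The plan is to prove the equivalent operator-norm bound $\nrm*{\empcov - I}_{2} \leq \veps'$ for an $\veps' \asymp \tau^{2}\sqrt{(d+\log(2/\delta))/n}$, and then convert it into the stated square-root eigenvalue bounds using elementary inequalities. The operator norm bound follows from the classical $\veps$-net argument combined with Bernstein's inequality for subexponential sums (\pref{prop:bernstein_subexponential}), applied pointwise to $u^{\trn}(\empcov - I)u$.

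\paragraph{Main steps.}
First I would reduce the task to a uniform control problem using the variational identity
\[
\nrm*{\empcov - I}_{2} \;=\; \sup_{u\in S^{d-1}} \abs*{u^{\trn}\empcov u - 1} \;=\; \sup_{u\in S^{d-1}} \abs*{\tfrac{1}{n}\sum_{t=1}^{n}\tri*{u,x_t}^{2} - 1}.
\]
For fixed $u$ with $\nrm*{u}_{2}=1$, $\tri*{u,x_t}\sim\subG(\tau^{2})$ by definition of $\subG_{d}(\tau^{2})$, so by \pref{prop:subgaussian_closure} the centered squares $\tri*{u,x_t}^{2} - 1$ are i.i.d.\ mean-zero $\sube(c\tau^{2})$ random variables. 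Applying \pref{prop:bernstein_subexponential} yields that with probability at least $1-\delta_0$,
\[
\abs*{\tfrac{1}{n}\sum_{t=1}^{n}\tri*{u,x_t}^{2} - 1} \;\leq\; C\tau^{2}\sqrt{\tfrac{\log(2/\delta_0)}{n}} + C\tau^{2}\,\tfrac{\log(2/\delta_0)}{n}
\]
for some absolute $C$.

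Next I would take a $\tfrac14$-net $\cN$ of $S^{d-1}$ of cardinality at most $9^{d}$ (standard volumetric bound), apply the above concentration inequality with $\delta_0 = \delta/\abs*{\cN}$, and union bound over $\cN$, so that with probability $\geq 1-\delta$ every $u\in\cN$ satisfies the Bernstein bound with $\log(2/\delta_0) \leq d\log 9 + \log(2/\delta)$. The standard net-to-sphere approximation gives $\sup_{u\in S^{d-1}}\abs*{u^{\trn}(\empcov - I)u} \leq 2\sup_{u\in\cN}\abs*{u^{\trn}(\empcov-I)u}$, yielding
\[
\nrm*{\empcov - I}_{2} \;\leq\; C'\tau^{2}\sqrt{\tfrac{d + \log(2/\delta)}{n}} \;+\; C'\tau^{2}\,\tfrac{d+\log(2/\delta)}{n}
\]
for some $C'$. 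For $n \geq d + \log(2/\delta)$ the second term is dominated by the first; for smaller $n$ the right-hand side of \pref{eq:isotropic_covariance_error} exceeds $1$ and the conclusion is vacuous, so absorbing constants we can arrange the final bound to take the form $\nrm*{\empcov - I}_{2} \leq \veps$ with $\veps \leq 50\tau^{2}\sqrt{(d+\log(2/\delta))/n}$.

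\paragraph{Passing to square roots.}
The operator norm bound gives $1-\veps \leq \eigmin(\empcov) \leq \eigmax(\empcov) \leq 1+\veps$. Assuming $\veps \leq 1$ (again, otherwise the statement is vacuous since $\eigmin^{1/2} \geq 0$), the scalar inequalities $\sqrt{1-\veps} \geq 1-\veps$ and $\sqrt{1+\veps} \leq 1+\veps$ for $\veps\in[0,1]$ yield the claimed bounds on $\eigmin^{1/2}(\empcov)$ and $\eigmax^{1/2}(\empcov)$.

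\paragraph{Expected difficulty.}
The main conceptual step is the conversion of the subgaussian assumption on $x$ into a subexponential concentration inequality for the squared one-dimensional marginals, which is immediate from \pref{prop:subgaussian_closure}. The only mildly delicate part is tracking constants carefully enough to obtain the explicit prefactor $50$, which boils down to routine bookkeeping of the Bernstein constant, the covering number $9^{d}$, and the net-to-sphere factor $2$; I would not grind through this in the body of the proof beyond noting that the implicit constants can be tuned to satisfy the stated inequality.
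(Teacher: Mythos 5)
Your route---reduce to $\sup_{u\in S^{d-1}}\abs{u^{\trn}(\empcov-I)u}$, apply Bernstein to the centered subexponential variables $\tri{u,x_t}^{2}-1$, and union bound over a $1/4$-net---is precisely the standard proof of the concentration result that the paper simply cites (Vershynin's equation (5.23) together with his Lemma 5.36), so up to self-containment you are following the same approach. The gap is in how you dispose of the small-sample regime. When $\veps>1$ the proposition is \emph{not} vacuous: the lower bound $1-\veps\leq\eigmin^{1/2}(\empcov)$ is indeed trivial, but the upper bound $\eigmax^{1/2}(\empcov)\leq 1+\veps$ still has content, and your parenthetical (``otherwise the statement is vacuous since $\eigmin^{1/2}\geq 0$'') only covers the lower half. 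Worse, the intermediate claim you propose to carry into that regime---a pure operator-norm bound $\nrm*{\empcov-I}_{2}\leq\veps$ with $\veps\leq 50\tau^{2}\sqrt{(d+\log(2/\delta))/n}$---is false when $n\lesssim d$: already for $n=1$ and $x\sim N(0,I_d)$ one has $\nrm*{\empcov-I}_{2}\geq\nrm*{x_1}_{2}^{2}-1\approx d$ with high probability, which far exceeds $50\tau^{2}\sqrt{d+\log(2/\delta)}$ for large $d$. So ``absorbing constants'' cannot remove the linear Bernstein term $\tau^{2}(d+\log(2/\delta))/n$; it genuinely survives, and your argument as written does not establish the $\eigmax$ half of the claim in that regime.

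The repair is to keep the bound in the form $\nrm*{\empcov-I}_{2}\leq\veps\vee\veps^{2}$, which implies the square-root sandwich $1-\veps\leq\eigmin^{1/2}(\empcov)\leq\eigmax^{1/2}(\empcov)\leq 1+\veps$ in all regimes (this is exactly the role of the Lemma~5.36 step in the paper's proof): for $\veps\leq 1$ your conversion applies verbatim, and for $\veps>1$ the lower bound is trivial while $\eigmax(\empcov)\leq 1+\veps^{2}\leq(1+\veps)^{2}$ gives the upper bound. The $\veps^{2}$ slack is precisely what absorbs the linear Bernstein term: since $\Sigma=I$ and $x\sim\subG_d(\tau^{2})$ force $\tau^{2}\geq 1/2$ (take $p=2$ in the moment definition---a fact you also need, implicitly, for your claim that small $n$ forces $\veps>1$), in the regime $(d+\log(2/\delta))/n\geq 1$ one has $\veps^{2}=2500\tau^{4}(d+\log(2/\delta))/n\geq 1250\,\tau^{2}(d+\log(2/\delta))/n$, which dominates the two-term net/Bernstein bound for the absolute constants involved, while for $(d+\log(2/\delta))/n\leq 1$ the linear term is dominated by the square-root term and your original argument goes through. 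With this adjustment (and the constant bookkeeping you defer, which is also how the paper obtains the factor $50$ from Vershynin's explicit constants), the proof is correct.
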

\begin{proof}[\pfref{prop:isotropic_covariance_error}]
Tracking constants carefully, equation (5.23) in
\cite{vershynin2012introduction} 
%\akcomment{We should be careful about the version here. On the arxiv
%version I think it's (5.25)?} 
establishes that with probability at least $1-2e^{-\frac{ct^{2}}{\tau^{4}}}$, 
\[
\nrm*{\empcov-I}_{\specnorm} \leq{} \veps\vee\veps^2,
\]
where $\veps= C\tau^{2}\sqrt{\frac{d}{n}} + \frac{t}{\sqrt{n}}$, and $c>10^{-3}$ and $C\leq{} 25$ are absolute constants. The result follows by \cite{vershynin2012introduction}, Lemma 5.36.
\end{proof}

\begin{proposition}
\label{prop:covariance_error}
Let $\Sigma$ be positive definite, and suppose $\empcov$ is such that 
\[
1-\veps \leq{}
\lambda_{\mathrm{min}}^{1/2}(\Sigma^{-1/2}\empcov\Sigma^{-1/2}) \leq{}
\lambda_{\mathrm{max}}^{1/2}(\Sigma^{-1/2}\empcov\Sigma^{-1/2}) \leq{}
1+ \veps,
\]
where $\veps\leq{}1/2$. Then the following inequality holds:
\[
1-2\veps \leq{}
    \lambda_{\mathrm{min}}^{1/2}(\Sigma^{1/2}\empcov^{-1}\Sigma^{1/2})
    \leq{}
    \lambda_{\mathrm{max}}^{1/2}(\Sigma^{1/2}\empcov^{-1}\Sigma^{1/2})
    \leq{} 1+ 2\veps,
\]
and furthermore, 
\begin{align}
  \nrm*{\empcov-\cov}_{2} &\leq{} \eigmax(\Sigma)\cdot{}3\veps,\\
  \nrm*{\empcov^{-1}-\cov^{-1}}_{2} &\leq{}
                                      \frac{6\veps}{\eigmin(\Sigma)},\\
\nrm*{\cov^{-1/2}\empcov\cov^{-1/2} - I}_2 &\leq{} 3\veps,\\
  \nrm*{\cov^{1/2}\empcov^{-1}\cov^{1/2} - I}_2 &\leq{} 6\veps.
\end{align}
% \dfcomment{to do: update constants}  
\end{proposition}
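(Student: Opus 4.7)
The plan is to reduce everything to eigenvalue bounds on the single conjugated matrix $M \defeq \Sigma^{-1/2}\empcov\Sigma^{-1/2}$ and its inverse $M^{-1} = \Sigma^{1/2}\empcov^{-1}\Sigma^{1/2}$, and then peel off the $\Sigma^{1/2}$ factors at the end to recover operator-norm bounds on $\empcov - \cov$ and $\empcov^{-1} - \cov^{-1}$.

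First I would square the hypothesis to obtain $(1-\veps)^2 \leq \eigmin(M) \leq \eigmax(M) \leq (1+\veps)^2$, and then, using $\veps \leq 1/2$ (so $\veps^2 \leq \veps$), deduce $\eigmin(M) \geq 1 - 2\veps$ and $\eigmax(M) \leq 1 + 3\veps$. Since $M$ is symmetric positive definite, this immediately gives the fourth inequality
\[
\nrm*{\cov^{-1/2}\empcov\cov^{-1/2} - I}_2 = \max\bigl\{\eigmax(M)-1,\,1-\eigmin(M)\bigr\} \leq 3\veps.
\]
Next I would use the fact that the eigenvalues of $M^{-1}$ are the reciprocals of those of $M$: the inequality $1/(1-\veps) \leq 1 + 2\veps$ (equivalent to $2\veps^2 \leq \veps$, valid for $\veps \leq 1/2$) together with the trivial $1/(1+\veps) \geq 1 - \veps \geq 1 - 2\veps$ yields the claimed bound
\[
1 - 2\veps \leq \eigmin^{1/2}(M^{-1}) \leq \eigmax^{1/2}(M^{-1}) \leq 1 + 2\veps.
\]
Squaring this bound and again absorbing $\veps^2 \leq \veps$ gives $\eigmax(M^{-1}) \leq 1 + 6\veps$ and $\eigmin(M^{-1}) \geq 1 - 4\veps$, which establishes the fifth inequality $\nrm{\cov^{1/2}\empcov^{-1}\cov^{1/2} - I}_2 \leq 6\veps$.

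Finally, to pass to the unconjugated matrices I would use the identities $\empcov - \cov = \cov^{1/2}(M - I)\cov^{1/2}$ and $\empcov^{-1} - \cov^{-1} = \cov^{-1/2}(M^{-1} - I)\cov^{-1/2}$, together with the submultiplicativity of the spectral norm, to conclude
\[
\nrm*{\empcov - \cov}_2 \leq \eigmax(\cov)\cdot \nrm{M - I}_2 \leq 3\veps\,\eigmax(\cov), \qquad \nrm*{\empcov^{-1} - \cov^{-1}}_2 \leq \frac{\nrm{M^{-1} - I}_2}{\eigmin(\cov)} \leq \frac{6\veps}{\eigmin(\cov)}.
\]

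There is no real technical obstacle here; the only place one must be careful is matching the stated constants, specifically ensuring that $\veps \leq 1/2$ is used at exactly the right points (to turn $(1+\veps)^2$ into $1+3\veps$ and to invert $1-\veps$ into $1+2\veps$), so that the factors $2$, $3$, and $6$ in the conclusion come out as stated rather than as larger absolute constants.
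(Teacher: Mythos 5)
Your proof is correct and follows essentially the same route as the paper: invert the eigenvalue bounds using $(1-\veps)^{-1}\leq 1+2\veps$ and $(1+\veps)^{-1}\geq 1-2\veps$, convert eigenvalue bounds into operator-norm bounds on the conjugated matrices, and peel off the $\Sigma^{\pm 1/2}$ factors via the sandwich identities. The only difference is cosmetic: where you square the eigenvalue bounds and absorb $\veps^{2}\leq\veps$ by hand to get the $3\veps$ and $6\veps$ constants, the paper cites Lemma 5.36 of \cite{vershynin2012introduction} (the approximate-isometry lemma), which encapsulates exactly that calculation.
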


% \begin{lemma}
% \label{lem:inverse_covariance_estimation}
% Let $\delta>0$ be fixed. Whenever
% $n\geq{}\frac{5000\tau^{4}(d+\log(2/\delta))}{\eigmin^{2}(\cov)}$, it
% holds that with probability at least $1-\delta$, $\empcov^{-1}$ is
% well-defined and satisfies
% \begin{equation}
% \nrm*{\empcov^{-1}-\cov^{-1}}_{2}\leq{} 300\eigmin^{-2}(\cov)\tau^{2}\sqrt{\frac{d+\log(2/\delta)}{n}}.
% \end{equation}
% \dfcomment{To do: Update organization}
% \end{lemma}
\begin{proof}[\pfref{prop:covariance_error}]
% Consider the distribution over random vectors $x'\ldef{}\Sigma^{-1/2}x$, and let $\empcov'$ denote the empirical covariance under this distribution. Since $\En\brk*{x'x'^{\trn}}=I$, we may apply \pref{prop:isotropic_covariance_error}, which implies that with probability at least $1-\delta$,
% \[
% 1-\veps \leq{} \lambda_{\mathrm{min}}^{1/2}(\empcov') \leq{} \lambda_{\mathrm{max}}^{1/2}(\empcov') \leq{} 1+ \veps,
% \]
% where $\veps \leq{}
% 50\eigmin^{-1}(\cov)\tau^{2}\sqrt{\frac{d+\log(2/\delta)}{n}}$, and we
% have used that the subgaussian parameter of $x'$ is at most
% $\eigmin^{-1}(\cov)$ times that of $x$. Note that this implies that $\empcov^{-1}$ is
% well-defined. To prove the spectral norm bound, we equivalently write the inequalities above as
% \begin{equation}
% \label{eq:inverse_covariance_multiplicative}
% 1-\veps \leq{} \lambda_{\mathrm{min}}^{1/2}(\Sigma^{-1/2}\empcov\Sigma^{-1/2}) \leq{} \lambda_{\mathrm{max}}^{1/2}(\Sigma^{-1/2}\empcov\Sigma^{-1/2}) \leq{} 1+ \veps.
% \end{equation}
To begin, note that the assumed inequality immediately implies that
$\empcov^{-1}$ is well-defined and that
\[
1-2\veps \leq{} \lambda_{\mathrm{min}}^{1/2}(\Sigma^{1/2}\empcov^{-1}\Sigma^{1/2}) \leq{} \lambda_{\mathrm{max}}^{1/2}(\Sigma^{1/2}\empcov^{-1}\Sigma^{1/2}) \leq{} 1+ 2\veps,
\]
using the elementary fact that $(1-\veps)^{-1}\leq{}1+2\veps$ and
$(1+\veps)^{-1}\geq{}1-2\veps$ when $\veps\leq{}1/2$. This inequality
and the assumed inequality, together with Lemma 5.36 of \cite{vershynin2012introduction} imply that
\[
\nrm*{\cov^{-1/2}\empcov\cov^{-1/2} - I}_2 \leq{} 3\veps,\quad\text{and}\quad\nrm*{\cov^{1/2}\empcov^{-1}\cov^{1/2} - I}_2 \leq{} 6\veps.
\]
Finally, observe that we can write
\[
\nrm*{\empcov-\cov}_{2}=\nrm*{\cov^{1/2}(\cov^{-1/2}\empcov\cov^{-1/2} - I)\cov^{1/2}}_2
\leq{} \eigmax(\cov)\cdot \nrm*{\cov^{-1/2}\empcov\cov^{-1/2} - I}_2,
\]
and
\[
\nrm*{\empcov^{-1}-\cov^{-1}}_{2}=\nrm*{\cov^{-1/2}(\cov^{1/2}\empcov^{-1}\cov^{1/2} - I)\cov^{-1/2}}_2
\leq{} \frac{1}{\eigmin(\cov)}\cdot\nrm*{\cov^{1/2}\empcov^{-1}\cov^{1/2} - I}_2.
\]
This establishes the result.
\end{proof}

\subsection{Agnostic contextual bandit algorithm for Natarajan classes (\textsf{Exp4-IX})}
\label{app:cb}
\begin{algorithm}
\begin{algorithmic}
\STATE \textbf{Input:} policy class $\Pi$ with Natarajan dimension $d$, maximum number of rounds $T$, confidence parameter $\delta$.
\FOR{$k=0,1,\ldots$}
\STATE Explore uniformly at random for $n_k = \sqrt{2^k d \log\rbr{\frac{TK}{\delta}}}$ rounds.
\STATE $t\gets{} 2^k+n_k$, $\eta_k \gets{} \sqrt{\frac{d \log\rbr{\frac{TK}{\delta}}}{2^k K}}$.
\STATE Let $\Pi_k \in \Pi$ be the set of unique policies witnessed by $x_{2^k}, \ldots, x_{t-1}$ (choose a representative from each equivalence class).
\STATE Initialize $P_t$ to be the uniform distribution over $\Pi_k$.
\WHILE{$t < 2^{k+1}$}
\STATE Receive $x_t$, sample an action $a_t \sim P_t(\cdot | x_t)$, and observe $\ell_t(a_t)$.
\STATE Compute $P_{t+1}$ such that $P_{t+1}(\pi) \propto P_t(\pi)\exp\rbr{ \frac{-2\eta_k\rbr{\frac{\ell_t(a_t)}{b}+1}\one\cbr{a_t = \pi(x_t)}}{P_t(a_t | x_t)+\eta_k}}, ~\forall \pi \in \Pi_k$.
\STATE $t\gets{} t+1$.
\ENDWHILE
\ENDFOR
\end{algorithmic}
\caption{\textsf{Exp4-IX} with Natarajan class and anytime guarantee}
\label{alg:Exp4}
\end{algorithm}
Here we present a variant of the \textsf{Exp4-IX}
algorithm~\citep{neu2015explore}, originally proposed for achieving
high-probability regret bounds for contextual bandits with a finite
policy class and bounded non-negative losses. There are three main differences in the variant we present here (see \pref{alg:Exp4} for the pseudocode).

First, for our application we need an ``anytime'' regret guarantee that holds for any time $T' \leq T$. This can be simply achieved by a standard doubling trick. 
Specifically, we run the algorithm on an exponential epoch schedule (that is, epoch $k$ lasts for $2^k$ rounds) and restart \textsf{Exp4-IX} at the beginning of each epoch with new parameters.

Second, our policy class is infinite, but with a finite Natarajan
dimension. For the two-action case with a VC class,
\cite{beygelzimer2011contextual} gave a solution to this problem for
stochastic contextual bandits. We extend their approach to Natarajan
classes. Specifically, in epoch $k$ we spend the first $n_k$ rounds collecting contexts (while picking actions arbitrarily).
Then we form a finite policy subset $\Pi_k$ by picking one representative from each equivalence class (that is, all policies that behave exactly the same on these contexts), and play \textsf{Exp4-IX} using this finite policy class $\Pi_k$ for the rest of the epoch.

Finally, in our setting losses are subgaussian and potentially unbounded. However, since with high probability they are bounded essentially by the subgaussian parameter (see \pref{prop:loss_bounded}), we simply pick $b = O\rbr{\tau\log^{1/2}\rbr{\frac{TK}{\delta}}}$ such that with probability at least $1 - \delta/3$, $\max_{a\in\cA,t\in\brk*{T}}\abs*{\ls_t(a)}\leq{} b$, and transform every loss $\ell(a)$ as $\ell(a)/b + 1$, which with high probability falls in $[0, 2]$.
The rest of the algorithm is the same as the original \textsf{Exp4-IX}.
Note that we use the notation $P_t(a|x_t)$ to denote $\sum_{\pi: \pi(x_t)=a}P_t(\pi)$.

The regret guarantee for \pref{alg:Exp4} is as follows.

\begin{proposition}
\label{prop:exp4}
With probability at least $1-\delta$, \pref{alg:Exp4} ensures that for all $T' \in \crl*{1, \ldots, T}$ and $\pi \in \Pi$,
\[
\sum_{t=1}^{T'} \ell_t(a_t) - \ell_t(\pi(x_t)) = O\rbr{\tau\sqrt{T'Kd}\log\rbr{\frac{TK}{\delta}}}
\]
\end{proposition}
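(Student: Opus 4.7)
The algorithm adds three wrinkles to the vanilla Exp4-IX analysis of \cite{neu2015explore}: a doubling-epoch schedule for an anytime guarantee, a Natarajan-dimension-based discretization of the infinite class $\Pi$, and truncation of subgaussian losses. The plan is to handle each modification in turn and then combine them epoch-by-epoch. I would allot a failure budget of $\delta/3$ to each of the three components below.

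First, for the truncation, by \pref{prop:subgaussian_equivalence} and a union bound over the $TK$ loss values, the event $\cE_b = \{\max_{t,a}|\ell_t(a)| \leq b\}$ holds with probability at least $1-\delta/3$ for $b = \Theta(\tau \log^{1/2}(TK/\delta))$. Conditioned on $\cE_b$, the transformed losses $\ell_t(a)/b + 1$ lie in $[0,2]$, and the original high-probability Exp4-IX analysis applies verbatim to any finite comparator class. For the discretization within an epoch $k$ of length $2^k$: after $n_k$ uniform-exploration rounds, $\Pi_k$ is formed as one representative per behavior class on those contexts. By Natarajan's lemma, $|\Pi_k| \leq (n_k K)^{O(d)}$, so $\log|\Pi_k| = O(d\log(n_k K))$. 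A standard double-sample / symmetrization argument for Natarajan classes then yields the key approximation lemma: with probability at least $1-\delta/(3T)$ over the sampled contexts, every $\pi \in \Pi$ has a representative $\pi' \in \Pi_k$ satisfying $\Pr_{x\sim \Dcal}[\pi(x) \neq \pi'(x)] \leq \tilde{O}(d/n_k)$, where the probability is taken with respect to fresh contexts drawn independently of the sample used to build $\Pi_k$.

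Combining the pieces, within epoch $k$ the regret against any fixed $\pi \in \Pi$ decomposes into three contributions: the uniform-exploration cost, which is at most $2 b n_k$ on $\cE_b$; the Exp4-IX high-probability regret against $\pi' \in \Pi_k$ on the remaining rounds, which is bounded by $\tilde{O}(b\sqrt{2^k K d \log(TK/\delta)})$ via the finite-class guarantee applied with $\log|\Pi_k|$; and an approximation-error term at most $2^k \cdot 2b \cdot \tilde{O}(d/n_k)$ from swapping $\pi$ for $\pi'$. The choice $n_k = \sqrt{2^k d \log(TK/\delta)}$ balances the first and third terms at $\tilde{O}(b\sqrt{2^k d})$, which is then dominated by the Exp4-IX term. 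Summing the per-epoch bound over the $O(\log T)$ epochs covering any prefix of length $T' \leq T$ (with a union bound and using $\sum_{k:2^k \leq T'}\sqrt{2^k} = O(\sqrt{T'})$) yields the claimed anytime regret $\tilde{O}(\tau \sqrt{T' K d} \log(TK/\delta))$.

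The main obstacle is the uniform approximation lemma for the representative set. With only $n_k$ sampled contexts, one must cover every $\pi \in \Pi$ by some $\pi' \in \Pi_k$ \emph{simultaneously}, so that Exp4-IX's comparator can be freely swapped for its representative on the remaining $2^k - n_k$ rounds of the epoch. This requires the double-sample / symmetrization argument together with a Sauer-style growth bound for Natarajan classes; a crude union-bound-over-covering would degrade the dependence on $d$ and destroy the balance between the exploration, regret, and approximation terms. A secondary technical point is ensuring that the restart-per-epoch structure does not introduce cross-epoch dependencies that break the finite-class Exp4-IX analysis, which is immediate here because the contexts and losses are i.i.d.\ and each epoch's policy set $\Pi_k$ is chosen from its own fresh exploration window.
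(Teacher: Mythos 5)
Your proposal matches the paper's proof: condition on a truncation event with $b = \Theta\prn{\tau\log^{1/2}(TK/\delta)}$, apply the finite-class \expix{} high-probability bound within each doubling epoch with $\log|\Pi_k| = O(d\log(n_k K))$ via Sauer's lemma for Natarajan classes, control the discretization cost at $\tilde{O}\prn{b\,2^k d/n_k} = \tilde{O}\prn{b\sqrt{2^k d}}$ using the choice of $n_k$ (the paper cites \citet{beygelzimer2011contextual} for this step rather than re-deriving it), and sum over the $O(\log T)$ epochs. The only cosmetic difference is that you state the approximation step as a uniform disagreement-probability bound for every $\pi\in\Pi$ against its representative in $\Pi_k$, whereas the paper states it as a bound comparing the empirical minimum over $\Pi_k$ to that over $\Pi$; both follow from the same double-sample/growth-function argument, so the approach is essentially identical.
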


\begin{proof}[\pfref{prop:exp4}]
We condition on the event $\max_{a\in\cA,t\in\brk*{T}}\abs*{\ls_t(a)}\leq{} b$ which happens with probability at least $1-\delta/3$.
Following~\citep{neu2015explore}, with probability at least $1-\delta/3$ we have for any $T'$ and any $\pi \in \Pi_k$ (where $k$ is the epoch containing $T'$),
\[
\sum_{t=2^k}^{T'} \ell_t(a_t) - \ell_t(\pi(x_t)) = O\rbr{b n_k + b\sqrt{2^kK\rbr{\log|\Pi_k| + \log\rbr{\frac{TK}{\delta}}}}}.
\]
Sauer's lemma for classes with finite Natarajan dimension~\citep{bendavid1995characterizations, haussler1995generalization} gives $\log|\Pi_k| \leq d\log\rbr{\frac{n_ke(K+1)^2}{2d}}$.
Together with the choice of $n_k$ this gives
\[
\sum_{t=2^k}^{T'} \ell_t(a_t) - \ell_t(\pi(x_t)) = O\rbr{b\sqrt{2^k K d \log\rbr{\frac{TK}{\delta}}}}.
\]
On the other hand, following~\cite{beygelzimer2011contextual} we have with probability at least $1-\delta/3$,
\begin{align*}
\min_{\pi\in \Pi_k} \sum_{t=2^k}^{T'} \ell_t(\pi(x_t)) 
&\leq \min_{\pi\in \Pi} \sum_{t=2^k}^{T'} \ell_t(\pi(x_t)) + O\rbr{\frac{b 2^kd}{n_k}\log\rbr{\frac{TK}{\delta}}} \\
&\leq \min_{\pi\in \Pi} \sum_{t=2^k}^{T'} \ell_t(\pi(x_t)) + O\rbr{b\sqrt{2^k d \log\rbr{\frac{TK}{\delta}}}}.
\end{align*}
Combining these inequalities gives
\[
\sum_{t=2^k}^{T'} \ell_t(a_t) \leq \min_{\pi\in \Pi} \sum_{t=2^k}^{T'} \ell_t(\pi(x_t)) + O\rbr{b\sqrt{2^k Kd \log\rbr{\frac{TK}{\delta}}}}.
\]
Summing up regret in each epoch, using $b =
O\rbr{\tau\log^{1/2}\rbr{\frac{TK}{\delta}}}$, and applying a union
bound leads to the result.
\end{proof}

\subsection{Natarajan dimension for linear policy classes}

\begin{proposition}[\cite{daniely2015multiclass}]
\label{prop:linear_natarajan}
Let $\phi(x,a)\in\bbR^{d}$ be a fixed feature map and consider the policy class
\[
\Pi = \crl*{ x\mapsto{}\argmax_{a\in\brk*{K}}\tri*{\beta,\phi(x,a)}\mid{}\beta\in\bbR^{d}}.
\]
The Natarajan dimension of $\Pi$ is at most $O(d\log{}d)$.
\end{proposition}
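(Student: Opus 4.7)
The plan is to bound the Natarajan dimension by reducing Natarajan-shattering in $\Pi$ to shattering by homogeneous halfspaces in $\bbR^d$, whose VC dimension is known to equal $d$.

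To carry this out, I would first fix a putative Natarajan-shattered set $x_1,\ldots,x_n$ with witnesses $f_0,f_1:[n]\to[K]$ satisfying $f_0(i)\neq f_1(i)$ for all $i$, and define the contrast vectors $v_i\defeq\phi(x_i,f_1(i))-\phi(x_i,f_0(i))\in\bbR^d$. The key observation is that for each $T\subseteq[n]$, a witnessing parameter $\beta_T\in\bbR^d$ must select $f_1(i)$ over $f_0(i)$ on indices $i\in T$ and the reverse on $i\notin T$; since both $f_0(i)$ and $f_1(i)$ arise as the full argmax over all $K$ actions for the relevant $\beta_T$, each in particular dominates the other in the claimed direction, yielding $\mathrm{sign}(\langle\beta_T,v_i\rangle)=+1$ iff $i\in T$. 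This exhibits $\{v_1,\ldots,v_n\}$ as a set shattered by the homogeneous halfspace class $\{v\mapsto\mathrm{sign}\langle\beta,v\rangle:\beta\in\bbR^d\}$, which forces $n\leq d$ and in fact gives a sharper bound than the stated $O(d\log d)$.

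The one care-point I anticipate is handling ties in the argmax, since the sign of $\langle\beta_T,v_i\rangle$ is unambiguous only when strict inequalities hold. This is routine and can be dealt with by fixing a deterministic tie-breaking rule for $\pi_\beta$ and applying an infinitesimal perturbation of each $\beta_T$ that preserves the shattering pattern while making all relevant inner products nonzero. As a fallback that yields the stated $O(d\log d)$ bound more directly (without the clean VC reduction), I would instead bound the growth function of $\Pi$ at $n$ points via a Warren-type hyperplane arrangement estimate applied to the $N=n\binom{K}{2}$ linear forms $\beta\mapsto\langle\beta,\phi(x_i,a)-\phi(x_i,a')\rangle$, which produces at most $(eN/d)^d$ distinct labelings; requiring $2^n\leq(enK^2/(2d))^d$ for shattering then gives $n=O(d\log(nK/d))$, which simplifies to $O(d\log d)$ after standard manipulation.
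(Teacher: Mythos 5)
The paper does not actually prove this proposition; it is imported from \citet{daniely2015multiclass}, and your fallback argument (counting sign patterns of the $n\binom{K}{2}$ comparison forms $\beta\mapsto\tri{\beta,\phi(x_i,a)-\phi(x_i,a')}$ via a hyperplane-arrangement bound) is essentially the argument behind that citation, and it is correct, with one small caveat: as written it gives $n=O(d\log(dK))$ rather than $O(d\log d)$. To remove the $K$ you should count only the $n$ forms associated with the witness pairs $(f_0(i),f_1(i))$, observing that, under a consistent tie-breaking rule, two parameter vectors realizing different shattering patterns must induce different three-valued sign vectors on these $n$ forms.

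Your primary argument is a genuinely different and sharper route, but the tie-handling step is not sound as stated. The claim that one can apply an infinitesimal perturbation of each $\beta_T$ that preserves the shattering pattern while making all relevant inner products nonzero can fail: with a fixed tie-breaking rule, the set of $\beta$ realizing a required output pattern is an intersection of open and closed halfspaces, and the constraints coming from the other points can force this set to lie entirely inside a hyperplane $\{\beta:\tri{\beta,v_i}=0\}$. For instance, with lexicographic tie-breaking and two actions, take $\phi(x_1,1)=(0,0)$, $\phi(x_1,2)=(1,0)$, $\phi(x_2,1)=(1,0)$, $\phi(x_2,2)=(0,0)$: requiring action $1$ at both points forces $\beta_1=0$, so no perturbation yields strict signs while preserving the outputs. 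The repair is to use the asymmetry the tie-break itself provides rather than forcing strictness on both sides: if the required action at $x_i$ is the one the tie-break disfavors within the pair $\{f_0(i),f_1(i)\}$, then $\tri{\beta_T,v_i}$ is automatically strictly signed, while if it is the favored one, the weak inequality suffices. Flipping $u_i=\pm v_i$ so that the strict side is always the positive side, Natarajan shattering yields, for every $S\subseteq[n]$, some $\beta$ with $\tri{\beta,u_i}>0$ for $i\in S$ and $\tri{\beta,u_i}\le 0$ otherwise; a linear-dependence (Radon-type) argument, or equivalently the VC dimension $d$ of the class $u\mapsto\mathbf{1}\{\tri{\beta,u}>0\}$, then gives $n\le d$. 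So your intended conclusion (a bound of $d$, stronger than the stated $O(d\log d)$, for any consistent tie-breaking) does hold, but via this one-sided-strictness argument rather than the perturbation; absent that repair, your fallback counting argument is what actually establishes the proposition.
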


\section{Proofs from \pref{sec:model_selection_main}}
\label{app:main}

\subsection{Square loss residual estimation}
In this section we give self-contained results on estimating the
square loss in a linear regression setup, extending the results of
\cite{dicker2014variance} and \cite{kong2018estimating}. Our main
result here is the sample complexity bound for \estimateresidual{}
described in \pref{sec:model_selection_main}

We first recall the abstract setting. We receive pairs $(x_1,y_1),\ldots,(x_n, y_n)$ i.i.d. from a distribution $\cD\in\Delta(\bbR^{d}\times{}\bbR)$, where $x\sim\subg_d(\tau^2)$ and $y\sim\subg(\sigma^2)$. Define $\Sigma\defeq\En\brk*{xx^{\trn}}$, and assume $\Sigma\psdgt{}0$. Let $\betas\in\bbR^{d}$ be the predictor that minimizes prediction error:
\[
\betas\defeq\argmin_{\beta\in\bbR^{d}}\En\prn*{\tri*{\beta,x}-y}^{2}.
\]
%% \sout{The problem we solve is as follows. }
Suppose $x$ can be partitioned into features $x=(x^{\expone},x^{\exptwo})$, where $x^{\expone}\in\bbR^{d_1}$ and $x^{\exptwo}\in\bbR^{d_2}$, and $d_1+d_2=d$. We define $\betas_1$ to be the optimal predictor when we regress only onto the features $x^{\expone}$:
\[
\betas_1\defeq\argmin_{\beta\in\bbR^{d_1}}\En\prn*{\tri{\beta,x^{\expone}}-y}^{2}.
\]
Our goal is to estimate the residual error incurred by restricting to the features $x^{\expone}$:
\[
\cE\defeq\En\prn*{\tri{\betas_1,x^{\expone}}-\tri*{\betas,x}}^{2}.
\]
\estimateresidual (\pref{alg:estimator})
 estimates $\cE$ with sample complexity
sublinear in $d$ whenever good estimates for the matrices $\Sigma$ and
$\Sigma_1\defeq\En\brk*{x^{\expone}x^{\expone\trn}}$ are
available.\footnote{Note that $\Sigma_1\psdgt{}0$.}
The performance is stated in \pref{thm:residual_estimation}. The result here is a more general version
of~\pref{thm:residual_est_main}, which is
proven as a corollary at the end of the section.

\begin{theorem}
\label{thm:residual_estimation}
Suppose the correlation matrix estimates $\empcov$ and $\empcov_1$ are such that
\[
1-\veps \leq{} \lambda_{\mathrm{min}}^{1/2}(\Sigma^{-1/2}\empcov\Sigma^{-1/2}) \leq{} \lambda_{\mathrm{max}}^{1/2}(\Sigma^{-1/2}\empcov\Sigma^{-1/2}) \leq{} 1+ \veps,
\]
and
\[
1-\veps \leq{} \lambda_{\mathrm{min}}^{1/2}(\Sigma_1^{-1/2}\empcov_1\Sigma^{-1/2}_1) \leq{} \lambda_{\mathrm{max}}^{1/2}(\Sigma^{-1/2}_1\empcov_1\Sigma^{-1/2}_1) \leq{} 1+ \veps,
\]
where $\veps\leq{}1/2$. Then \estimateresidual guarantees that with probability at least
$1-\delta$,
\begin{equation}
  \label{eq:residual_estimation}
  \abs*{\wh{\cE}-\cE} \leq{} \frac{1}{2}\cE 
+
O\prn*{\frac{\eigmax(\cov)}{\eigmin^{2}(\cov)}\cdot\frac{\sigma^{2}\tau^{2}d^{1/2}\log^{2}(2d/\delta)}{n}
+ 
\frac{\eigmax(\cov)}{\eigmin^{2}(\cov)}\nrm*{\En\brk*{xy}}_{2}^{2}\cdot\veps^{2}
}.
\end{equation}
\end{theorem}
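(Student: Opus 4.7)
The plan is to prove \pref{thm:residual_estimation} by decomposing the error into a deterministic bias from substituting $\empcov,\empcov_1$ for $\cov,\cov_1$ and a stochastic U-statistic concentration error around the biased target. The starting point is a clean representation of $\cE$: the normal equations give $\betas=\cov^{-1}\En\brk*{xy}$ and $\betas_1=\cov_1^{-1}\En\brk*{x^{(1)}y}$, and embedding $\betas_1$ into $\bbR^d$ by zero-padding yields exactly $D^{\pinv}\En\brk*{xy}$ for the population analogue $D$ of $\wh D$. Setting $R=D^{\pinv}-\cov^{-1}$ and $v=R\En\brk*{xy}$, this gives $\cE=\nrm*{\cov^{1/2}v}_2^2$. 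Writing $\wh v = \wh R\En\brk*{xy}$, $A=\empcov^{1/2}\wh R$, $\wh\mu = A\En\brk*{xy}$, and $\tcE\defeq \nrm*{\wh\mu}_2^2$, the U-statistic $\wh\cE$ is unbiased for $\tcE$ over the samples, so it suffices to bound $\abs*{\tcE-\cE}$ (deterministic) and $\abs*{\wh\cE-\tcE}$ (random).

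For the bias, \pref{prop:covariance_error} applied to both $\cov$ and $\cov_1$ gives $\nrm*{\cov^{-1/2}(\empcov-\cov)\cov^{-1/2}}_2\lesssim\veps$ and $\nrm*{\wh R - R}_2 \lesssim \veps/\eigmin(\cov)$ (the latter by expanding $\wh R - R$ as a difference of inverse perturbations in the two blocks). Writing
\[
\tcE-\cE = \wh v^{\trn}(\empcov-\cov)\wh v + (\wh v - v)^{\trn}\cov(\wh v + v)
\]
and bounding each term using $\cov$-weighted Cauchy--Schwarz (for instance, $\wh v^{\trn}(\empcov-\cov)\wh v \leq \nrm*{\cov^{-1/2}(\empcov-\cov)\cov^{-1/2}}_2 \nrm*{\cov^{1/2}\wh v}_2^2$), then applying AM-GM against $\nrm*{\cov^{1/2}v}_2^2 = \cE$, yields
\[
\abs*{\tcE-\cE} \leq \tfrac14\cE + O\prn*{\tfrac{\eigmax(\cov)}{\eigmin^2(\cov)}\nrm*{\En\brk*{xy}}_2^2\veps^2}.
\]

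For the random error, set $w_s=Ax_sy_s-\wh\mu$ and use the identity
\[
\wh\cE-\tcE = \frac{1}{\binom{n}{2}}\sum_{s<t}\tri*{w_s,w_t} + \frac{2}{n}\sum_{s=1}^n\tri*{w_s,\wh\mu}
\]
to split into a degenerate second-order U-statistic and a linear sum. The linear term is a sum of i.i.d. centered $\subE(\nrm*{A^{\trn}\wh\mu}_2\tau\sigma)$ variables by \pref{prop:subgaussian_closure}, so \pref{prop:bernstein_subexponential} controls it by $\tilde{O}(\nrm*{A^{\trn}\wh\mu}_2\tau\sigma\sqrt{\log(1/\delta)/n})$, and AM-GM against $\nrm*{\wh\mu}_2^2=\tcE$ absorbs this into $\cE/4$ plus lower-order terms. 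For the degenerate U-statistic, each $w_s$ has second moment satisfying $\nrm*{\En\brk*{w_s w_s^{\trn}}}_2\lesssim \tau^2\sigma^2\nrm*{A}_2^2$ (using $\nrm*{\En\brk*{xx^{\trn}y^2}}_2\lesssim\tau^2\sigma^2$ via Cauchy--Schwarz on fourth moments), hence Frobenius norm at most $\sqrt{d}\tau^2\sigma^2\nrm*{A}_2^2$. A Hanson--Wright-style bound for U-statistics of products of subgaussian vectors then gives concentration at rate $\tilde{O}(\sqrt{d}\tau^2\sigma^2\nrm*{A}_2^2/n)$, and plugging in $\nrm*{A}_2^2\lesssim \eigmax(\cov)/\eigmin^2(\cov)$ (computed from the block structure of $R$) produces the first error term of \pref{eq:residual_estimation}.

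The main obstacle is extracting the $\sqrt{d}$ dimension scaling for the degenerate U-statistic: a coordinate-wise application of \pref{prop:bernstein_subexponential} would introduce an additional $\sqrt{d}$ factor and degrade the rate to $d/n$, which would be insufficient for model selection and would wipe out the entire point of the estimator. A secondary technical point is that the bias absorption in the second paragraph must route the perturbation through the normalized form $\cov^{-1/2}(\empcov-\cov)\cov^{-1/2}$ rather than $\nrm*{\empcov-\cov}_2$, so that absorption into $\cE/4$ only requires $\veps\lesssim 1$ and not a condition-number-dependent smallness condition.
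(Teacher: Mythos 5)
Your overall architecture matches the paper's: the same representation $\cE=\nrm*{\cov^{1/2}R\En\brk*{xy}}_2^2$, the same split into a deterministic bias from $(\empcov,\empcov_1)$ and a stochastic error around $\En\brk{\wh\cE}$, Bernstein plus AM--GM for the non-degenerate (linear) part, and spectral perturbation bounds from \pref{prop:covariance_error} for the bias. The genuine gap is the step you yourself flag as the main obstacle: the degenerate U-statistic $\frac{1}{\binom{n}{2}}\sum_{s<t}\tri*{w_s,w_t}$ with $w_s=\empcov^{1/2}\wh{R}x_sy_s-\En\brk*{\empcov^{1/2}\wh{R}xy}$. You assert that ``a Hanson--Wright-style bound for U-statistics of products of subgaussian vectors'' yields the $\tilde{O}(\sqrt{d}\,\sigma^2\tau^2\nrm{A}_2^2/n)$ rate, justified only by a bound on $\nrm*{\En\brk*{w_sw_s^{\trn}}}_F$. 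This does not close the argument: standard Hanson--Wright requires independent subgaussian coordinates, whereas here the coordinates of $w_s$ are dependent and $w_s$ is only subexponential (a product of subgaussians), and a second-moment/Frobenius bound alone cannot deliver a high-probability deviation bound for such chaos. The needed statement is precisely the paper's \pref{lem:ustatistic_subgaussian}, which is proved by decoupling (de la Pe\~na--Gin\'e), conditional Bernstein for $\sum_i\tri*{z_i-\En z, Z_i}$, and a maximal inequality controlling $\max_i\nrm*{Z_i}_2\lesssim \lambda\sqrt{dn\log(2d/\delta)}$ (\pref{lem:subexponential_max}); this is the heart of the sublinear-in-$d$ rate and is the one ingredient your proposal leaves unproved and uncited in any precise form.

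A secondary issue is the bias step. Your decomposition puts the perturbation quadratically against $\wh v$: the term $\wh v^{\trn}(\empcov-\cov)\wh v$ is bounded by $\nrm*{\cov^{-1/2}(\empcov-\cov)\cov^{-1/2}}_2\,\nrm*{\cov^{1/2}\wh v}_2^2\lesssim \veps\,\cE + \frac{\eigmax(\cov)}{\eigmin^{2}(\cov)}\nrm*{\En\brk*{xy}}_2^2\veps^{2}$, and under the hypothesis $\veps\le 1/2$ the coefficient $O(\veps)$ on $\cE$ can be of order one, so the claimed absorption into $\frac14\cE$ does not follow; you would only get $O(1)\cdot\cE$ unless $\veps$ is assumed small beyond what the theorem requires. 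The paper avoids this by arranging every appearance of the perturbation as a cross term against a single factor of $\sqrt{\cE}=\nrm*{\cov^{1/2}R\mu}_2$ (routing the remaining perturbation factors to $\nrm*{\mu}_2$ via $\nrm*{\wh R}_2$ and $\nrm*{R-\wh R}_2$), so AM--GM yields $\frac18\cE+O\prn*{\frac{\eigmax(\cov)}{\eigmin^{2}(\cov)}\nrm*{\mu}_2^2\veps^2}$ for any $\veps\le 1/2$. This is a constants-level defect (fixable by restructuring the decomposition as in the paper), but as written your bias bound does not give the stated $\frac12\cE$ multiplicative error under the stated assumption.
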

\wontfix{Note to self: $\log^{2}(d/\delta)$ can be tightened to
  $\log(d/\delta)$ once $n \geq{} \log(d/\delta)$.}
% \begin{corollary}
% \label{cor:residual_estimation}
%   Suppose we take $\empcov$ and $\empcov_1$ to be the empirical
%   correlation matrices formed from i.i.d. examples
%   $x_1,\ldots,x_m$. Then once
%   $m\geq{}C(d+\log(2/\delta))\tau^{4}/\eigmin(\Sigma)$, \estimateresidual using $n$ samples guarantees that with
%   probability at least $1-\delta$,
% \begin{equation}
%   \label{eq:residual_estimation2}
%   \abs*{\wh{\cE}-\cE} \leq{} \frac{1}{2}\cE 
% +
% O\prn*{\frac{\sigma^{2}\tau^{4}}{\eigmin^{2}(\cov)}\cdot\frac{d^{1/2}\log^{2}(2d/\delta)}{n}
% + 
% \frac{\tau^{6}}{\eigmin^{4}(\cov)}\cdot\frac{d\log(2/\delta)}{m}\cdot \nrm*{\En\brk*{xy}}_{2}^{2}
% }.
% \end{equation}
% \end{corollary}

\begin{proof}[\pfref{thm:residual_estimation}]
We begin by giving an expression for $\cE$ that will make the choice
of estimator more transparent. Let \[D= \left(\begin{array}{ll}
\cov_1&0_{d_1\times{}d_2}\\
0_{d_2\times{}d_1}&0_{d_2\times{}d_2}\end{array}
\right),\] and let $R=D^{\pinv}-\Sigma^{-1}$. Observe that by first-order
conditions, we may take $\betas=\Sigma^{-1}\En\brk*{xy}$ and
$\betas_1=\Sigma_1^{-1}\En\brk*{x^{\expone}y}$. Moreover, for any $x$,
we may write
$\tri*{\betas_1,x^{\expone}}=\tri*{D^{\pinv}\En\brk*{xy},x}$. Consequently,
we have
\begin{align*}
\cE&=\En\prn*{\tri{D^{\pinv}\En\brk*{xy},x}-\tri*{\Sigma^{-1}\En\brk*{xy},x}}^{2} 
=\En\tri{R\En\brk*{xy},x}^{2} 
%% &=
%%   \tri*{\Sigma^{1/2}R\En\brk*{xy},\Sigma^{1/2}(D^{\pinv}-\Sigma^{-1})\En\brk*{xy}}\\
=\nrm*{\Sigma^{1/2}R\En\brk*{xy}}_{2}^{2}.
\end{align*}
With this representation, it is clear that if $\empcov=\cov$ and $\empcov_1=\cov_1$, then
$\wh{\cE}$ is an unbiased estimator for $\cE$. Our proof has two
parts: We first show that $\wh{\cE}$ concentrates around its
expectation, then bound the bias due to $\empcov$ and $\empcov_1$.

For concentration, we appeal to
\pref{lem:ustatistic_subgaussian}. 
Note that $\eigmin(\cov_1)\geq{}\eigmin(\cov)$ and
$\eigmax(\cov_1)\leq{}\eigmax(\cov)$; this can be seen using the
variational representation for the eigenvalues. Consequently by \pref{prop:covariance_error} we have that
\begin{align*}
  \eigmax(\empcov)\vee\eigmax(\empcov_1)&\leq{}O(\eigmax(\cov)),\quad\text{and},\quad
  \eigmin(\empcov)\wedge\eigmin(\empcov_1)\geq{}\Omega\prn*{\eigmin(\cov)}.
\end{align*}
% \akcomment{I got tripped up here briefly, so we may want to explain. We know that $\eigmin(\cov_1) \geq \eigmin(\cov)$ simply because in the variational definition we are now minimizing over a restricted set of vectors.}
This implies that  % \akcomment{Lost constants, since previous line had $O(\cdot)$ and $\Omega(\cdot)$. I think they should be $1+3\varepsilon$ and $1/(1+6\varepsilon)$.}
\[
\nrm*{\empcov^{1/2}\wh{R}}_{\specnorm}
\leq{}O\prn*{\eigmax^{1/2}(\cov)/\eigmin(\cov)},
\]
and so by \pref{prop:subgaussian_closure}, it follows that the random
variable $\empcov^{1/2}\wh{R}xy-\En\brk*{\empcov^{1/2}\wh{R}xy}$ has
subexponential parameter of order $O(\sigma\tau \eigmax^{1/2}(\cov)/\eigmin(\cov))$. Thus, by
\pref{lem:ustatistic_subgaussian}, we have that with probability at
least $1-\delta$,
\[
\abs*{\wh{\cE}-\En\brk*{\wh{\cE}}} \leq{}
O\prn*{\frac{\eigmax(\cov)}{\eigmin^{2}(\cov)}\frac{\sigma^{2}\tau^{2} d^{1/2}\log^{2}(2d/\delta)}{n} +
\frac{\eigmax^{1/2}(\cov)}{\eigmin(\cov)}\frac{\sigma\tau\nrm*{\empcov^{1/2}\wh{R}\En\brk*{xy}}_{2}\log(2/\delta)}{\sqrt{n}}
}.
\]
But note that since
$\nrm*{\empcov^{1/2}\wh{R}\En\brk*{xy}}_{2}=\sqrt{\En\brk*{\wh{\cE}}}$,
we can apply the AM-GM inequality to deduce
\[
\abs*{\wh{\cE}-\En\brk*{\wh{\cE}}} \leq{}
\frac{1}{8}\En\brk*{\wh{\cE}} + 
O\prn*{\frac{\eigmax(\cov)}{\eigmin^{2}(\cov)}\cdot\frac{\sigma^{2}\tau^{2}d^{1/2}\log^{2}(2d/\delta)}{n}}.
\]
% and so
% \[
% \abs*{\wh{\cE}-\En\brk*{\wh{\cE}}} \leq{}
% \frac{1}{8}\En\brk*{\wh{\cE}} + 
% O\prn*{\frac{\eigmax(\cov)}{\eigmin^{2}(\cov)}\cdot\frac{\sigma^{2}\tau^{2}d^{1/2}\log^{2}(2d/\delta)}{n}}.
% \]
We now bound the error from $\En\brk*{\wh{\cE}}$ to $\cE$. 
With the shorthand $\mu=\En\brk*{xy}$, observe that we have
\begin{align*}
\En\brk*{\wh{\cE}}-\cE
&= \tri*{\wh{R}\empcov\wh{R}\mu,\mu}-\tri*{R\cov{}R\mu,\mu} \\
&=2\tri*{(\wh{R}-R)\empcov{}R\mu,\mu}
  + \tri*{R(\empcov-\cov)R\mu,\mu} + \tri*{(\wh{R}-R)\empcov (\wh{R}-R)\mu,\mu}.
% &= \tri*{(R-\wh{R})\Sigma{}R\mu,\mu} +
%   \tri*{(R-\wh{R})\empcov{}R\mu,\mu}
%   + \tri*{\wh{R}(\cov-\empcov)R\mu,\mu} + \tri*{(R-\wh{R})\empcov (R-\wh{R})\mu,\mu}.
\end{align*}
Applying Cauchy-Schwarz to each term, we get an upper bound of
\begin{align*}
\abs*{\En\brk*{\wh{\cE}}-\cE}&\leq
% \nrm*{\Sigma^{1/2}(R-\wh{R})\mu}_{2}\nrm*{\Sigma^{1/2}R\mu}_{2}
% +
  2\nrm*{\Sigma^{-1/2}\empcov(R-\wh{R})\mu}_{2}\nrm*{\Sigma^{1/2}R\mu}_{2}
+
  \nrm*{\Sigma^{-1/2}(\cov-\empcov)R\mu}_{2}\nrm*{\Sigma^{1/2}R\mu}_{2} \\
&~~~~+ \nrm*{\empcov^{1/2} (R-\wh{R})\mu}_{2}^{2}.
\end{align*}
% \begin{align*}
% \abs*{\En\brk*{\wh{\cE}}-\cE}&\leq\nrm*{\Sigma^{1/2}(R-\wh{R})\mu}_{2}\nrm*{\Sigma^{1/2}R\mu}_{2}
% +
%   \nrm*{\Sigma^{-1/2}\empcov(R-\wh{R})\mu}_{2}\nrm*{\Sigma^{1/2}R\mu}_{2}
%   \\ &~~~~+ 
%   \nrm*{\Sigma^{-1/2}(\cov-\empcov)\wh{R}\mu}_{2}\nrm*{\Sigma^{1/2}R\mu}_{2}
% + \nrm*{\empcov^{1/2} (R-\wh{R})\mu}_{2}^{2}.
% \end{align*}
Since $\nrm*{\Sigma^{1/2}R\mu}_{2}=\sqrt{\cE}$, we can apply the AM-GM
inequality to each of the first two terms to conclude that
\begin{align}
\label{eq:variance_est_bias} &\abs*{\En\brk*{\wh{\cE}}-\cE} 
\leq \frac{1}{8}\cE
+ O\prn*{
% \nrm*{\Sigma^{1/2}(R-\wh{R})\mu}_{2}^{2}
\nrm*{\Sigma^{-1/2}\empcov(R-\wh{R})\mu}_{2}^{2}
+ \nrm*{\Sigma^{-1/2}(\cov-\empcov)\wh{R}\mu}_{2}^{2}
+ 
\nrm*{\empcov^{1/2} (R-\wh{R})\mu}_{2}^{2}
  }.%\notag
\end{align}
To proceed, we first collect a number of spectral properties, all of
which follow from the assumptions in the theorem statement and \pref{prop:covariance_error}:
\begin{align*}
  \nrm*{\cov^{-1/2}\empcov}_{\specnorm}
= \nrm*{\cov^{-1/2}\empcov\cov^{-1/2}\cov^{1/2}}_{\specnorm}\leq{}(1+\veps)\nrm*{\cov^{1/2}}_{\specnorm}
&\leq{}O(\eigmax^{1/2}(\cov)),\\
  \nrm*{\wh{R}}_{\specnorm}&\leq{}O\prn*{1/\eigmin(\cov)},\\
  \nrm*{R-\wh{R}}_{\specnorm}
\leq{} \nrm*{\empcov_{1}^{-1}-\cov_{1}^{-1}}_{\specnorm} +
    \nrm*{\empcov^{-1}-\cov^{-1}}_{\specnorm}
                                       &\leq{}O\prn*{\veps/\eigmin(\cov)},\\
\nrm*{\Sigma^{-1/2}\empcov\Sigma^{-1/2}-I}_{2} &\leq{} O(\veps).
\end{align*}
% % \akcomment{The first one above does not seem immediate. Maybe we can include this short proof?
% \begin{align*}
% \nrm*{\cov^{-1/2}\empcov} \leq \nrm*{\cov^{-1/2}\empcov - \cov^{1/2}} + \nrm*{\cov^{1/2}} \leq \nrm*{(\cov^{-1/2}\empcov\cov^{-1/2} - I)\cov^{1/2}} + \nrm*{\cov^{1/2}} \leq (1+3\veps)\eigmax^{1/2}(\cov)
% \end{align*}
% }
We now bound the terms in \pref{eq:variance_est_bias} one by
one. Using the spectral bounds above, we have
%%% Note to self: This part can possible be improved by using that we
%%% know that $\mu$ is basically $\Sigma\beta^{\star}$.
\begin{align*}
%   \nrm*{\Sigma^{1/2}(R-\wh{R})\mu}_{2}^{2}
% \leq{}\nrm*{\Sigma}_{\specnorm}\nrm*{\mu}_{2}^{2}\cdot\nrm*{R-\wh{R}}_{\specnorm}^{2}
% &\leq{} O\prn*{
% \frac{\eigmax(\cov)}{\eigmin^{2}(\cov)}\nrm*{\mu}_{2}^{2}\cdot\veps^{2}
% },\\
  \nrm*{\Sigma^{-1/2}\empcov(R-\wh{R})\mu}_{2}^{2}
\leq{}\nrm*{\Sigma^{-1/2}\empcov}_{\specnorm}^{2}\nrm*{\mu}_{2}^{2}\cdot\nrm*{R-\wh{R}}_{\specnorm}^{2}
&\leq{} O\prn*{
\frac{\eigmax(\cov)}{\eigmin^{2}(\cov)}\nrm*{\mu}_{2}^{2}\cdot\veps^{2}
},\\
\nrm*{\Sigma^{-1/2}(\cov-\empcov)\wh{R}\mu}_{2}^{2}
\leq{}\nrm*{\cov}_{2}\nrm*{\wh{R}}_{2}^{2}\nrm*{\mu}_{2}^{2}\cdot
  \nrm*{\Sigma^{-1/2}\empcov\Sigma^{-1/2}-I}_{2}^{2}
&\leq{} O\prn*{
\frac{\eigmax(\cov)}{\eigmin^{2}(\cov)}\nrm*{\mu}_{2}^{2}\cdot\veps^{2}
},\\
  \nrm*{\empcov^{1/2}(R-\wh{R})\mu}_{2}^{2}
\leq{}\nrm*{\empcov}_{\specnorm}\nrm*{\mu}_{2}^{2}\cdot\nrm*{R-\wh{R}}_{\specnorm}^{2}
&\leq{} O\prn*{
\frac{\eigmax(\cov)}{\eigmin^{2}(\cov)}\nrm*{\mu}_{2}^{2}\cdot\veps^{2}
}.
\end{align*}
We conclude that
\[
\abs*{\En\brk*{\wh{\cE}}-\cE}
\leq{} \frac{1}{8}\cE + O\prn*{
\frac{\eigmax(\cov)}{\eigmin^{2}(\cov)}\nrm*{\mu}_{2}^{2}\cdot\veps^{2}
},
\]
which yields the result.
\end{proof}

\begin{lemma}
\label{lem:ustatistic_subgaussian}
Let $z$ be a random variable such that $z-\En\brk*{z}\sim\subE_d(\lambda)$, and let $z_1,\ldots,z_n$ be i.i.d. copies.
Define 
\[
W=\sum_{i<j}\tri*{z_i,z_j}.
\]
Then with probability at least $1-\delta$,
\[
\abs*{W-\En\brk*{W}} \leq{} O\prn*{\lambda^{2}d^{1/2}n\log^{2}(2d/\delta) + \lambda{}n^{3/2}\nrm*{\En\brk*{z}}_{2}\log(2/\delta)}.
\]

\end{lemma}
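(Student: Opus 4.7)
The plan is to center the summands and split the resulting expression into a linear piece and a degenerate $U$-statistic. Writing $z_i = \mu + \xi_i$ with $\mu \defeq \En[z]$ and $\xi_i \sim \subE_d(\lambda)$ i.i.d.\ mean-zero (centering preserves the subexponential parameter up to a constant by \pref{prop:subgaussian_closure}), a direct expansion of $\langle z_i,z_j\rangle$ and the fact that $\En[W] = \binom{n}{2}\|\mu\|_2^2$ give
\[
W - \En[W] \;=\; (n-1)\sum_{i=1}^{n}\langle \mu,\xi_i\rangle \;+\; \sum_{i<j}\langle \xi_i,\xi_j\rangle.
\]
The two pieces can be bounded separately and combined by a union bound.

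The linear term is straightforward: each summand $\langle \mu,\xi_i\rangle$ is mean-zero with $\langle \mu,\xi_i\rangle \sim \subE(O(\lambda\|\mu\|_2))$, so Bernstein's inequality (\pref{prop:bernstein_subexponential}) applied to the $n$ i.i.d.\ summands, multiplied by $(n-1)$, gives
\[
(n-1)\Bigl|\sum_i\langle \mu,\xi_i\rangle\Bigr|
\;\leq\; O\bigl(\lambda\|\mu\|_2\, n^{3/2}\log(2/\delta)\bigr)
\]
with probability at least $1-\delta/2$, which matches the $\|\En[z]\|_2$ term in the lemma.

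The main obstacle is controlling the degenerate $U$-statistic $U_n \defeq \sum_{i<j}\langle \xi_i,\xi_j\rangle$. I would invoke the standard tail-level decoupling inequality for symmetric degenerate $U$-statistics (de la Pe\~na--Montgomery-Smith), which yields
\[
\Pr(|U_n|>t) \;\leq\; C\cdot \Pr\bigl(|\langle u,v\rangle| > t/C\bigr),
\]
where $u \defeq \sum_i\xi_i$, $v \defeq \sum_j\xi_j'$ and $\{\xi_j'\}$ is an independent copy of $\{\xi_j\}$, using that $\sum_{i,j}\langle \xi_i,\xi_j'\rangle = \langle u,v\rangle$. Conditional on $u$, the sum $\langle u,v\rangle = \sum_j\langle u,\xi_j'\rangle$ consists of i.i.d.\ mean-zero $\subE(\lambda\|u\|_2)$ summands, so a second application of Bernstein yields
\[
|\langle u,v\rangle| \leq O\bigl(\lambda\|u\|_2\sqrt{n\log(4/\delta)} + \lambda\|u\|_2\log(4/\delta)\bigr)
\]
with conditional probability $\geq 1-\delta/4$. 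Meanwhile, \pref{lem:subexponential_max} applied to the partial sums of $\{\xi_i\}$ controls $\|u\|_2 \leq O(\lambda\sqrt{dn\log(8d/\delta)} + \lambda\sqrt{d}\log(8d/\delta))$ with probability $\geq 1-\delta/4$. Multiplying the two bounds, the dominant cross term is $\lambda^2\sqrt{d}\, n\,\sqrt{\log(4/\delta)\log(8d/\delta)} = O(\lambda^2\sqrt{d}\, n\log(2d/\delta))$, and all remaining cross terms are absorbed into $O(\lambda^2\sqrt{d}\log^2(2d/\delta))$, which together establish $|U_n| \leq O(\lambda^2 d^{1/2}n\log^2(2d/\delta))$ with probability $\geq 1-\delta/2$.

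The chief difficulty is the $U$-statistic step. A naive Cauchy--Schwarz bound $|U_n| \leq \tfrac{1}{2}\|u\|_2^2$ loses a factor of $\sqrt{d}$ (yielding $\lambda^2 dn$ rather than $\lambda^2\sqrt{d}\,n$), so decoupling is essential: it turns the squared-norm bound $\|u\|_2^2$ into the bilinear form $\langle u,v\rangle$, where the second sum behaves like $n$ subexponentials of scale $\lambda\|u\|_2/\sqrt{n}$ instead of $\lambda\|u\|_2$, delivering the desired $\sqrt{d}$ rate. Combining the bound on $U_n$ with the linear-term bound via the triangle inequality and a union bound over the two $\delta/2$ failure events completes the proof.
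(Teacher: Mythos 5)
Your proposal is correct and follows essentially the same route as the paper: center to split $W-\En\brk*{W}$ into the linear term $(n-1)\sum_i\tri*{\mu,\xi_i}$ (handled by Bernstein) and the degenerate part $\sum_{i<j}\tri*{\xi_i,\xi_j}$, then decouple and combine a conditional Bernstein bound with a norm bound on a sum of subexponential vectors. The only organizational difference is that the paper keeps the decoupled sum in the form $\sum_{i<j}\tri*{\xi_i,\xi'_j}=\sum_i\tri*{\xi_i,Z_i}$ with partial sums $Z_i=\sum_{j>i}\xi'_j$ and invokes \pref{lem:subexponential_max} to control $\max_i\nrm*{Z_i}_2$, whereas you collapse to the bilinear form $\tri*{u,v}$ and only need $\nrm*{u}_2$. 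One small inaccuracy to patch: the de la Pe\~na--Montgomery-Smith inequality decouples the \emph{off-diagonal} sum, so the decoupled statistic is $\sum_{i\neq j}\tri*{\xi_i,\xi'_j}=\tri*{u,v}-\sum_i\tri*{\xi_i,\xi'_i}$ rather than $\tri*{u,v}$ itself; the diagonal correction is mean-zero and, conditioning on the $\xi'_i$ and using $\max_i\nrm*{\xi'_i}_2=O(\lambda\sqrt{d}\log(nd/\delta))$ together with Bernstein over $i$, it is of order $\lambda^{2}\sqrt{dn}$ up to logarithms, hence lower order, so your stated bound goes through.
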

\begin{proof}[\pfref{lem:ustatistic_subgaussian}]
 First observe that we can write
\begin{align*}
\abs*{W-\En\brk*{W}}&=\abs*{
\sum_{i<j}\tri{z_i,z_j} - \tri*{\En\brk*{z},\En\brk*{z}}
} \\
&=\abs*{
\sum_{i<j}\tri{z_i-\En\brk*{z},z_j-\En\brk*{z}} + \sum_{i<j}\tri{z_i-\En\brk*{z},\En\brk*{z}} + \sum_{i<j}\tri{z_j-\En\brk*{z},\En\brk*{z}} } \\
&\leq{}\underbrace{\abs*{
\sum_{i<j}\tri{z_i-\En\brk*{z},z_j-\En\brk*{z}}
}}_{S_1}
+
\underbrace{(n-1)\abs*{
\sum_{i=1}^{n}\tri{z_i-\En\brk*{z},\En\brk*{z}}
}}_{S_2}.
\end{align*}
% \akcomment{For $S_2$, doesn't each term just appear $n-1$ times?
%   Consider $z_1$. It appears $n-1$ in the second term, but zero times
%   in the third term. $z_2$ appears $n-2$ times in the second term and
%   just once in the first term. Anyway I think this is inconsequential
%   for the proof.}

We bound $S_2$ first. Define $B=\nrm*{\En\brk*{z}}_2$. Observe that for each $i$, the summand is subexponential: $\tri*{z_i-\En\brk*{z}, \En\brk*{z}}\sim{}\subE(\lambda{}B)$. Consequently, Bernstein's inequality implies that with probability at least $1-\delta$,
\[
S_2 \leq{} O\prn*{\lambda{}Bn^{3/2}\prn*{\log(2/\delta)}^{1/2} + 2\lambda{}Bn\log(2/\delta)}.
\]
For $S_1$, we first apply a decoupling inequality. Let
$z'_1,\ldots,z'_n$ be a sequence of independent copies of
$z_1,\ldots,z_n$. Then by Theorem 3.4.1 of
\cite{delapena1998decoupling}, there are universal constants $C,c>0$
such that for any $t\geq{}0$,
\[
\Pr(S_1>t) \leq{} C\Pr\prn*{
\abs*{
\sum_{i<j}\tri{z_i-\En\brk*{z},z'_j-\En\brk*{z}}
}>t/c
}.
\]
We write
\[
\sum_{i<j}\tri{z_i-\En\brk*{z},z'_j-\En\brk*{z}}
=\sum_{i=1}^{n}\tri*{z_i-\En\brk*{z},\sum_{j=i+1}^{n}z'_j-\En\brk*{z}}
= \sum_{i=1}^{n}\tri*{z_i-\En\brk*{z},Z_i},
\]
where $Z_i=\sum_{j=i+1}^{n}z'_j-\En\brk*{z}$.
% , the value is
% $\sum_{i=1}^{n}\tri*{z_i-\En\brk*{z},Z_i}$ \dfcomment{unclear,
%   rephrase}. 
Now condition on $z'_1,\ldots,z'_n$. Then $\tri*{z_i-\En\brk*{z},Z_i}\sim\subE(\lambda\nrm*{Z_i}_{2})$. Thus, by Bernstein's inequality, we have that with probability at least $1-\delta$, over the draw of $z_1,\ldots,z_n$,
\[
\abs*{\sum_{i<j}\tri{z_i-\En\brk*{z},z'_j-\En\brk*{z}}}
\leq{} O\prn*{\lambda{}\prn*{n\log(2/\delta)}^{1/2} + \lambda{}\log(2/\delta)}\cdot\max_{i\in\brk*{n}}\nrm*{Z_i}_{2}.
\]
% \dfcomment{To do: Move this chernoff part to a self-contained lemma}
% We now bound $\max_{i\in\brk*{n}}\nrm*{Z_i}_{2}$. Observe that for any $s>0$, the sequence $\prn*{e^{s\nrm*{Z_{n-i}}_{2}}}_{i=1}^{n}$ is a submartingale \akcomment{Not sure how to verify this, due to the norm?}. Thus, applying the Chernoff method along with Doob's maximal inequality, we have
% \[
% \Pr\prn*{\max_{i\in\brk*{n}}\nrm*{Z_i}_{2}>t}
% \leq{} \En\brk*{\exp\prn*{s\nrm*{\sum_{i=2}^{n}z'_i-\En\brk*{z}}_{2}-st}}.
% \]
% We apply the bound
% \[
% \En\brk*{\exp\prn*{s\nrm*{\sum_{i=2}^{n}z'_i-\En\brk*{z}}_{2}}}
% \leq{}
% \En\brk*{\exp\prn*{s\sqrt{d}\nrm*{\sum_{i=2}^{n}z'_i-\En\brk*{z}}_{\infty}}}
% \leq{}
% \sum_{k=1}^{d}\En\brk*{\exp\prn*{s\sqrt{d}\abs*{\tri*{\sum_{i=2}^{n}z'_i-\En\brk*{z},e_k}}}}
% \]
% Since $z'_i-\En\brk*{z}\sim\subE_d(\lambda)$ and $\crl*{z'_i}$ are independent, the latter quantity is bounded by
% \[
% 2d\cdot\exp\prn*{Cs^{2}d\lambda^{2}n},
% \]
% so long as $s\leq{}1/c\lambda{}\sqrt{d}$, for absolute constants
% $C,c>0$. We set
% $s\propto\frac{t}{d\lambda^{2}n}\wedge\frac{1}{\lambda{}\sqrt{d}}$ to conclude
% \[
% \Pr\prn*{\max_{i\in\brk*{n}}\nrm*{Z_i}_{2}>t}\leq{} 2d\cdot{}e^{-C\prn*{\frac{t^{2}}{d\lambda^{2}n}\vee\frac{t}{\lambda{}\sqrt{d}}}}.
% \]
Next, using \pref{lem:subexponential_max}, we have that with probability at least $1-\delta$,
\[
\max_{i\in\brk*{n}}\nrm*{Z_i}_{2}
\leq{} O\prn*{\lambda\sqrt{dn\log(2d/\delta)} + \lambda{}d^{1/2}\log(2d/\delta)}.
\]
Thus, by union bound, after grouping terms we get that with probability at least $1-\delta$,
\[
\abs*{\sum_{i<j}\tri{z_i-\En\brk*{z},z'_j-\En\brk*{z}}}\leq{} O\prn*{\lambda^{2}d^{1/2}n\log^{2}(2d/\delta)},
\]
and
\[
S_2 \leq{} O\prn*{\lambda^{2}d^{1/2}n\log^{2}(2d/\delta)}.
\]
Taking a union bound yields the final result.
\end{proof}

\begin{proof}[\pfref{thm:residual_est_main}]
Consider the distribution over random vectors $x'\ldef{}\Sigma^{-1/2}x$, and let $\empcov'$ denote the empirical covariance under this distribution. Since $\En\brk*{x'x'^{\trn}}=I$, we may apply \pref{prop:isotropic_covariance_error}, which implies that with probability at least $1-\delta$,
\[
1-\veps \leq{} \lambda_{\mathrm{min}}^{1/2}(\empcov') \leq{} \lambda_{\mathrm{max}}^{1/2}(\empcov') \leq{} 1+ \veps,
\]
where $\veps \leq{}
50\eigmin^{-1}(\cov)\tau^{2}\sqrt{\frac{d+\log(2/\delta)}{m}}$, and we
have used that the subgaussian parameter of $x'$ is at most
$\eigmin^{-1}(\cov)$ times that of $x$. We can equivalently write this
expression as
\[
1-\veps \leq{} \lambda_{\mathrm{min}}^{1/2}(\Sigma^{-1/2}\empcov\Sigma^{-1/2}) \leq{} \lambda_{\mathrm{max}}^{1/2}(\Sigma^{-1/2}\empcov\Sigma^{-1/2}) \leq{} 1+ \veps.
\]
Note also that once
$m\geq{}C(d+\log(2/\delta))\tau^{4}/\eigmin(\Sigma)$ for some universal
constant $C$, we have $\veps\leq{}1/2$. Applying the same reasoning to
$\empcov_1$ and taking a union bound, then appealing to
\pref{thm:residual_estimation} yields the result. We use that
$\eigmax(\cov)\leq{}\tau^{2}$ to simplify the final expression.
\end{proof}

\subsection{Proof of \pref{thm:model_selection_main}}
% !TEX root = paper.tex

\subsubsection{Basic technical results}
In this section we prove some utility results that bound the
scale of various random variables appearing in the analysis of \mainalgo{}.

\begin{proposition}
\label{prop:loss_subgaussian}
For all $a$, $\ls(a)\sim\subG(4\tau^{2})$.  %\vspace{-1.5em}
\end{proposition}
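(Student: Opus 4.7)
The plan is to decompose $\ls(a)$ into the conditional mean plus noise and show that each piece is subgaussian under the moment-based definition, then combine the two bounds via Minkowski's inequality. Concretely, by realizability we can write
\[
\ls(a) = \underbrace{\tri{\betas,\phi^{\ms}(x,a)}}_{=:\,Y} \;+\; \underbrace{\bigl(\ls(a)-\En\brk*{\ls(a)\mid x}\bigr)}_{=:\,Z},
\]
and the strategy is to bound $Y$ and $Z$ separately in the $\subG$ moment norm, then apply the triangle inequality in $L^p$.

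First I would handle $Y$. By the vector subgaussian assumption $\phi^{\ms}(x,a)\sim\subG_{d_{\ms}}(\tau_{\ms}^{2})$ applied to the unit vector $\betas/\nrm*{\betas}_2$ (the case $\betas=0$ is trivial), we get $\tri{\betas,\phi^{\ms}(x,a)}\sim\subG(\nrm*{\betas}_2^{2}\tau_{\ms}^{2})$. Using the convention $\nrm*{\betas}_2\leq B\leq 1$ and $\tau_{\ms}\leq\tau$, this yields $Y\sim\subG(\tau^{2})$, i.e.\ $(\En\abs*{Y}^{p})^{1/p}\leq \tau\sqrt{p}$ for all $p\geq 1$.

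Next I would handle $Z$. The assumption gives $Z\mid x\sim\subG(\sigma^{2})$ pointwise in $x$, so $\En\brk*{\abs*{Z}^{p}\mid x}\leq (\sigma\sqrt{p})^{p}$. Taking expectations in $x$ and using the tower property preserves the bound, so $Z\sim\subG(\sigma^{2})$ marginally, and $(\En\abs*{Z}^{p})^{1/p}\leq\sigma\sqrt{p}\leq\tau\sqrt{p}$ since $\sigma\leq\tau$ by convention. Finally, Minkowski's inequality gives
\[
\prn*{\En\abs*{Y+Z}^{p}}^{1/p} \;\leq\; \prn*{\En\abs*{Y}^{p}}^{1/p} + \prn*{\En\abs*{Z}^{p}}^{1/p} \;\leq\; 2\tau\sqrt{p},
\]
so $\ls(a)=Y+Z\sim\subG(4\tau^{2})$ by the moment-based definition of $\subG$ stated in the preliminaries.

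There is no real obstacle here; the only thing one has to be slightly careful about is that $Z$ is given as conditionally subgaussian, so one needs the short tower-property argument to upgrade this to a marginal subgaussian bound before applying Minkowski. Everything else is a direct application of the definitions and the assumed bounds on $\nrm*{\betas}_2$, $\tau_{\ms}$, and $\sigma$.
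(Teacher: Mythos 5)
Your proof is correct and follows essentially the same route as the paper: decompose $\ls(a)$ into $\tri{\betas,\phi^{\ms}(x,a)}$ plus the conditional noise, bound each piece by $\tau$ in the subgaussian norm using $B\leq 1$, $\tau_{\ms}\leq\tau$, and $\sigma\leq\tau$, and combine by the triangle inequality to get parameter $(2\tau)^{2}=4\tau^{2}$. Your explicit Minkowski/tower-property bookkeeping is just an unpacking of the paper's appeal to the triangle inequality for subgaussian parameters, so there is nothing substantively different to compare.
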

\begin{proof}%[\pfref{prop:loss_subgaussian}]
We have $\ls(a) = \tri{\betas,\phi^{\ms}(x)} + \veps_a$, where $\veps_a=\ell(a) - \EE[\ell(a)\mid x]$. Note that $\tri{\betas,\phi^{\ms}(x)}$ has subgaussian parameter $B^{2}\tau^{2}$, and $\veps_a$ has subgaussian parameter $\sigma^{2}$, so the triangle inequality for subgaussian parameters implies that the parameter of $\ls(a)$ is at most $(B\tau+\sigma)^{2}\leq{}4\tau^{2}$, where we have used the assumption that $B\leq{}1$ and $\sigma\leq{}\tau$. %\vspace{-.6em}
\end{proof}
\begin{proposition}
\label{prop:loss_bounded}
With probability at least $1-\delta$,
$\max_{a\in\cA,t\in\brk*{T}}\abs*{\ls_t(a)}\leq{}O\prn*{\tau\sqrt{\log(KT/\delta)}}$. %\vspace{-1.5em}
\end{proposition}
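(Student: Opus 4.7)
The plan is to combine the subgaussian tail bound from \pref{prop:subgaussian_equivalence} with \pref{prop:loss_subgaussian} and a union bound over the $KT$ random variables $\{\ls_t(a)\}_{a\in\cA,t\in[T]}$.

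First, I would invoke \pref{prop:loss_subgaussian}, which establishes that for every action $a\in\cA$, the marginal distribution of $\ls(a)$ (under $(x,\ls)\sim\cD$) satisfies $\ls(a)\sim\subG(4\tau^2)$. Since $(x_t,\ls_t)\sim\cD$ i.i.d., this implies that each individual random variable $\ls_t(a)$ is subgaussian with parameter $4\tau^2$, so the first bullet of \pref{prop:subgaussian_equivalence} gives
\[
\Pr\prn*{\abs*{\ls_t(a)}>u} \leq 2\exp\prn*{-\frac{u^2}{4c_1\tau^2}}
\]
for some absolute constant $c_1>0$ and every $u\geq 0$.

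Next, I would take a union bound over the $KT$ pairs $(a,t)\in\cA\times[T]$, which yields
\[
\Pr\prn*{\max_{a\in\cA,\,t\in[T]}\abs*{\ls_t(a)}>u} \leq 2KT\exp\prn*{-\frac{u^2}{4c_1\tau^2}}.
\]
Setting the right-hand side equal to $\delta$ and solving for $u$ gives $u = \tau\sqrt{4c_1\log(2KT/\delta)} = O\prn*{\tau\sqrt{\log(KT/\delta)}}$, which is exactly the claimed bound.

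There is no real obstacle here: both of the ingredients (subgaussianity of $\ls(a)$ and the standard subgaussian tail) are already in hand, and the only step is a two-line union bound. The only minor care needed is to note that the $KT$ random variables need not be independent (across $t$ they are, but across $a$ within a single $t$ they may be correlated through $x_t$), yet the union bound does not require independence.
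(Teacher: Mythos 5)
Your proof is correct and follows essentially the same route as the paper, which also combines \pref{prop:loss_subgaussian} with a subgaussian (Hoeffding-type) tail bound and a union bound over all $KT$ pairs $(a,t)$. Your remark that the union bound needs no independence across actions is a nice touch but not a point of divergence.
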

\begin{proof}
  Immediate consequence of \pref{prop:loss_subgaussian}, along with Hoeffding's inequality and a union bound.
\end{proof}

\subsubsection{Square loss translation}
We first introduce some additional notation. Let
$\Ls_m=\min_{\pi\in\Pi_m}L(\pi)$. Recall that
\begin{align*}
\betas_m=\argmin_{\beta\in\bbR^{d_m}}\frac{1}{K}\sum_{a\in\cA}\En_{x\sim\cD}\prn*{\tri*{\beta,\phi^{m}(x,a)}-\ls(a)}^{2}. 
\end{align*}
We
let $\pisq_m(x)=\argmin_{a\in\cA}\tri*{\betas_m,\phi^{m}(x,a)}$ be the
induced policy.

\begin{proposition}
  \label{prop:beta_basics}
  For all $m\in\brk*{M}$, $\betas_m$ is uniquely defined, and $\betas_m=(\betas,\mb{0}_{d_m-d_{\ms}})$ for all $m\geq{}\ms$. As a consequence:
  \begin{itemize}[topsep=0pt,partopsep=0pt,parsep=0pt]
  \item $\pisq_m=\index{$\pis$}\pis$ for all $m\geq{}\ms$.
  \item $\cE_{i,\ms}=\cE_{i,j}$ for all $i\in\brk*{M}$ and all $j\geq\ms$.
  \item $\cE_{i,j}=0$ for all $j>i\geq{}\ms$.
  \end{itemize}
\end{proposition}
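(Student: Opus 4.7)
The plan is to establish uniqueness first, then identify the explicit form of $\beta^\star_m$ for $m\geq\ms$ via a bias--variance argument, and finally read off the three bulleted consequences as essentially immediate corollaries.

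For uniqueness, I would observe that the map $\beta \mapsto \frac{1}{K}\sum_a \En(\tri{\beta,\phi^m(x,a)}-\ls(a))^2$ is a quadratic in $\beta$ whose Hessian is exactly $2\cov_m$. Under the standing assumption $\eigmin(\cov_m)\geq\gamma^2>0$, this quadratic is strongly convex, hence has a unique minimizer, so $\betas_m$ is well defined. Combined with the analogous statement for $\cov_i$ in the definition of $\cE_{i,j}$, this also justifies the footnote claim that $\cE_{i,j}$ is always uniquely defined.

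For the explicit form, I would use realizability. Write $\beta_m^0 \defeq (\betas,\mathbf{0}_{d_m-d_{\ms}})$. Because the feature maps are nested, $\tri{\beta_m^0,\phi^m(x,a)}=\tri{\betas,\phi^{\ms}(x,a)}=\En[\ls(a)\mid x]$ for every $a$. Decomposing the population square loss in the usual bias--variance form,
\begin{align*}
\En(\tri{\beta,\phi^m(x,a)}-\ls(a))^2
&=\En(\tri{\beta,\phi^m(x,a)}-\En[\ls(a)\mid x])^2 + \En(\ls(a)-\En[\ls(a)\mid x])^2 \\
&=\En(\tri{\beta-\beta_m^0,\phi^m(x,a)})^2 + \En(\ls(a)-\En[\ls(a)\mid x])^2,
\end{align*}
the second term is constant in $\beta$ and the first is nonnegative and vanishes at $\beta=\beta_m^0$. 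Strong convexity (again via $\eigmin(\cov_m)\geq\gamma^2>0$) shows that $\beta_m^0$ is the unique minimizer, so $\betas_m = (\betas,\mathbf{0}_{d_m-d_{\ms}})$ as claimed.

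The three consequences are then direct. For $\pisq_m=\pis$ when $m\geq\ms$, the identity $\tri{\betas_m,\phi^m(x,a)}=\tri{\betas,\phi^{\ms}(x,a)}=f^\star(x,a)$ shows the induced argmin coincides with $\pi_{f^\star}=\pis$. For $\cE_{i,\ms}=\cE_{i,j}$ when $j\geq\ms$, the same identity yields $\tri{\betas_j,\phi^j(x,a)}=\tri{\betas_{\ms},\phi^{\ms}(x,a)}$ almost surely, so the integrand defining $\cE_{i,j}$ in \pref{eq:square_loss_gap} agrees with the one defining $\cE_{i,\ms}$. For $\cE_{i,j}=0$ when $j>i\geq\ms$, both $\tri{\betas_i,\phi^i(x,a)}$ and $\tri{\betas_j,\phi^j(x,a)}$ equal $f^\star(x,a)$, so the squared integrand is identically zero. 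There is no real obstacle here; the only mildly delicate point is verifying strong convexity of the population objective, which is immediate from the assumed lower bound on $\eigmin(\cov_m)$.
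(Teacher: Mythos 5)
Your proof is correct, and the identification step is done by a slightly different route than the paper's. Where you use a bias--variance decomposition --- splitting the population square loss into the excess-risk term $\frac{1}{K}\sum_a\En\prn{\tri{\beta-\beta_m^0,\phi^m(x,a)}}^2=(\beta-\beta_m^0)^\trn\cov_m(\beta-\beta_m^0)$ plus a $\beta$-independent noise term, and noting that the candidate $\beta_m^0=(\betas,\mb{0}_{d_m-d_{\ms}})$ zeroes out the former --- the paper instead works from the first-order conditions: it writes the unique minimizer in closed form as $\betas_m=\cov_m^{-1}\cdot\frac{1}{K}\sum_{a}\En\brk{\phi^m(x,a)\ls(a)}$ and then uses realizability plus nestedness to show $\frac{1}{K}\sum_a\En\brk{\phi^m(x,a)\ls(a)}=\cov_m(\betas,\mb{0}_{d_m-d_{\ms}})$, so the inverse cancels. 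Both arguments rest on the same ingredients (realizability, nested feature maps, and strong convexity from $\eigmin(\cov_m)\geq\gamma^2>0$, which you also correctly use for uniqueness), so the difference is cosmetic for this proposition; the one thing the paper's normal-equation representation buys is that it is reused verbatim in \pref{prop:norm_bounds} to bound $\nrm{\betas_m}_2\leq\tau/\gamma$, whereas your decomposition does not directly yield that bound (nor does it need to here). Your handling of the three bulleted consequences matches the paper's, which treats them as immediate from the identity $\tri{\betas_m,\phi^m(x,a)}=f^\star(x,a)$ for $m\geq\ms$ and the definitions of $\pisq_m$ and $\cE_{i,j}$.
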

\begin{proof}[\pfref{prop:beta_basics}]
That each $\betas_{m}$ is uniquely defined follows from the assumption that $\eigmin(\cov_m)>0$, since this implies that the optimization problem \pref{eq:square_loss_min} is strongly convex.

To show that $\betas_m=(\betas,\mb{0}_{d_m-d_{\ms}})$ when $m\geq{}\ms$, observe that by first order conditions, $\betas_m$ is uniquely defined via
\[
\textstyle\betas_m=\cov_m^{-1}\cdot \frac{1}{K}\sum_{a\in\cA}\En_{(x,\ls)\sim\cD}\brk*{\phi^{m}(x,a)\ls(a)}.
\]
But note that for each $a\in\cA$, the realizability assumption \pref{eq:realizable} implies that
\begin{align*}
\En_{(x,\ls)\sim\cD}\brk*{\phi^{m}(x,a)\ls(a)}&=\En_{x\sim\cD}\brk*{\phi^{m}(x,a)\tri{\phi^{\ms}(x,a),\betas}}\\
&=\En_{x\sim\cD}\brk*{\phi^{m}(x,a)\phi^{m}(x,a)^{\trn}(\betas,\mb{0}_{d_{m}-d_{\ms}})},
\end{align*}
where the last equality follows from the nested feature map assumption. Combining this with the preceding identity, we have
\[
\textstyle\betas_m=\cov_m^{-1}\cov_{m}(\betas,\mb{0}_{d_{m}-d_{\ms}}) = (\betas,\mb{0}_{d_{m}-d_{\ms}}).
\]
The remaining claims now follow immediately from the definition of $\pisq$ and $\cE_{i,j}$.
\end{proof}

\begin{proposition}
\label{prop:norm_bounds}
For all $a\in\cA$ and $m\in\brk*{M}$, we have
$\nrm*{\En\brk*{\phi^{m}(x,a)\ls(a)}}_{2}\leq{}\tau^{2}$. For all
$m\in\brk*{M}$, we have $\nrm*{\betas_m}_{2}\leq{}\tau/\gamma$, and so $\pisq_m\in\Picompact_m$.
\end{proposition}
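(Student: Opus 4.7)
The plan is to establish the three claims in order: first the norm bound on $\En\brk*{\phi^{m}(x,a)\ls(a)}$, then the norm bound on $\betas_m$ built on top of it, and finally the membership $\pisq_m\in\Picompact_m$ by direct inspection of the definitions. All three claims should follow from short, self-contained calculations that rely only on the subgaussian facts collected in \pref{app:subgaussian} and the identities derived from the first-order conditions of the square loss program \pref{eq:square_loss_min}.

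For the first claim, I would pair $\En\brk*{\phi^{m}(x,a)\ls(a)}$ with an arbitrary unit vector $\theta\in\bbR^{d_m}$, observe the inner product equals $\En\brk*{\tri{\theta,\phi^m(x,a)}\ls(a)}$, and apply Cauchy--Schwarz to split into $\sqrt{\En\tri{\theta,\phi^m(x,a)}^{2}}$ and $\sqrt{\En\ls(a)^{2}}$. The assumption $\phi^m(x,a)\sim\subG(\tau_m^{2})$ together with the fact that a centered subgaussian vector satisfies $\En\brk*{zz^{\trn}}\preceq\tau^{2}I$ controls the first factor by $\tau$, and \pref{prop:loss_subgaussian} together with the subgaussian moment bound controls the second factor by $O(\tau)$. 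Taking the supremum over unit $\theta$ gives the stated bound $\tau^{2}$ (absorbing constants into the convention $\sigma\leq\tau$, $B\leq1$). For the second claim, the key step is to avoid the naive inversion $\betas_m=\cov_m^{-1}\cdot{}\tfrac{1}{K}\sum_a\En\brk*{\phi^m(x,a)\ls(a)}$, which would only yield a bound of order $\tau^{2}/\gamma^{2}$. Instead I will leverage the non-negativity of the optimal square loss: using the first-order condition $\cov_m\betas_m = \tfrac{1}{K}\sum_a\En\brk*{\phi^m(x,a)\ls(a)}$ to substitute the cross term in the expansion of $L_m(\betas_m)$ yields the identity
\begin{equation*}
L_m(\betas_m) = \tfrac{1}{K}\sum_a\En\brk*{\ls(a)^{2}} - \tri{\betas_m,\cov_m\betas_m}.
\end{equation*}
Since $L_m(\betas_m)\geq{}0$, combining with $\tri{\betas_m,\cov_m\betas_m}\geq\gamma^{2}\nrm*{\betas_m}_{2}^{2}$ and $\En\ls(a)^{2}=O(\tau^{2})$ gives $\nrm*{\betas_m}_{2}\leq O(\tau/\gamma)$. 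For the third claim, $\pisq_m(x)=\argmin_a\tri{\betas_m,\phi^m(x,a)}$ is by construction induced by a coefficient vector of norm at most $\tau/\gamma$, which is exactly the defining condition for membership in $\Picompact_m$.

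The only substantive obstacle is getting the $\tau/\gamma$ scaling (rather than the loose $\tau^{2}/\gamma^{2}$) in the second claim, and the non-negativity trick above is tailored precisely to overcome it. Everything else is bookkeeping: the first claim is a textbook Cauchy--Schwarz/subgaussian calculation, and the third is immediate from the definitions of $\pisq_m$ and $\Picompact_m$. No delicate constants need to be tracked since the paper works under the conventions $\sigma\leq\tau$ and $B\leq 1$ and absorbs absolute constants into the stated bounds.
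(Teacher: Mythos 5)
Your proposal takes a genuinely different route for the norm bound on $\betas_m$, and the idea is sound, but it loses exactly the constant that the third claim needs. The paper proves both bounds by first using realizability to replace $\ls(a)$ by its conditional mean inside the expectation, i.e.\ $\En\brk*{\phi^{m}(x,a)\ls(a)}=\En\brk*{\phi^{m}(x,a)\tri{\betas,\phi^{\ms}(x,a)}}$, and only then applying Cauchy--Schwarz: for the first claim this gives $\sqrt{\En\tri{\theta,\phi^{m}(x,a)}^{2}\cdot\En\tri{\betas,\phi^{\ms}(x,a)}^{2}}\leq\tau^{2}$ (using $\nrm{\betas}_{2}\leq B\leq 1$), and for the second it applies Cauchy--Schwarz under the averaged measure $\frac{1}{K}\sum_{a}\En_{x}$ to $\betas_m=\frac{1}{K}\sum_{a}\En\brk*{\cov_m^{-1}\phi^{m}(x,a)\tri{\betas,\phi^{\ms}(x,a)}}$, producing the factors $\sqrt{\tri{\cov_m^{-1}\theta,\theta}}\leq 1/\gamma$ and $\tau$. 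Because the noise $\veps_a=\ls(a)-\En\brk{\ls(a)\mid x}$ is eliminated before any bounding, the constants come out as exactly $\tau^{2}$ and $\tau/\gamma$. Your energy argument (first-order condition plus non-negativity of the optimal risk, giving $\tri{\betas_m,\cov_m\betas_m}\leq\frac{1}{K}\sum_{a}\En\ls(a)^{2}$) is correct and is an attractive alternative that avoids the $\cov_m^{-1}$ manipulations; likewise your direct Cauchy--Schwarz for the first claim is fine up to constants.

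The gap is that both of your bounds retain the noise term: $\frac{1}{K}\sum_a\En\ls(a)^{2}$ contains $\En\veps_a^{2}$, so you only obtain $\nrm{\betas_m}_{2}\leq c\,\tau/\gamma$ for an absolute constant $c>1$ (e.g.\ $\sqrt{2}$ from $\En\ls(a)^{2}\leq\tau^{2}B^{2}+\sigma^{2}\leq 2\tau^{2}$), and similarly $O(\tau^{2})$ rather than $\tau^{2}$ in the first claim. The constant in the first claim is harmless downstream, but the third claim is a hard membership statement: $\Picompact_m$ is defined with radius exactly $\tau/\gamma$, and $\pisq_m\in\Picompact_m$ is what allows the \expix{} regret bound to compete with $\pisq_m$ in the main analysis, so ``absorbing constants'' is not available unless you also change the algorithm's definition of $\Picompact_m$. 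The fix within your framework is to invoke realizability once more: since $\veps_a$ has conditional mean zero given $x$, it is orthogonal under $\frac{1}{K}\sum_a\En_x$ to every linear function of $\phi^{m}(x,a)$, so $\tri{\betas_m,\cov_m\betas_m}$ is the squared $L^2$ norm of the projection of the noiseless target $\tri{\betas,\phi^{\ms}(x,a)}$ and hence is at most $\frac{1}{K}\sum_a\En\tri{\betas,\phi^{\ms}(x,a)}^{2}\leq\tau^{2}B^{2}\leq\tau^{2}$. This recovers $\nrm{\betas_m}_{2}\leq\tau/\gamma$ exactly, after which your conclusion $\pisq_m\in\Picompact_m$ follows as you state.
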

\begin{proof}
For the first claim, we use realizability to write
\[
\En\brk*{\phi^{m}(x,a)\ls(a)}
= \En\brk*{\phi^{m}(x,a)\tri{\phi^{\ms}(x,a),\betas}}
\]
Hence any $\theta\in\bbR^{d_m}$ with $\nrm*{\theta}_{2}\leq{}1$, we
have
\begin{align*}
\tri*{\En\brk*{\phi^{m}(x,a)\ls(a)},\theta} &= 
\En\brk*{\tri{\phi^{m}(x,a),\theta}\tri{\phi^{\ms}(x,a),\betas}}\\
&\leq{} \sqrt{\En\tri{\phi^{m}(x,a),\theta}^{2}\cdot \En\tri{\phi^{\ms}(x,a),\betas}^{2}}\\&\leq{}\tau^{ 2},
\end{align*}
where the first inequality is Cauchy-Schwarz and the second follows
because $\nrm*{\betas}_{2}\leq{}1$ by assumption and all feature maps
belong to $\subG_{d_m}(\tau^{2})$.
For the second claim, recall as in the proof of
\pref{prop:beta_basics} that $\betas_m$ is uniquely defined as
\[
\textstyle\betas_m=\cov_m^{-1}\cdot
\frac{1}{K}\sum_{a\in\cA}\En_{(x,\ls)\sim\cD}\brk*{\phi^{m}(x,a)\ls(a)}
= 
\frac{1}{K}\sum_{a\in\cA}\En_{(x,\ls)\sim\cD}\brk*{\cov_m^{-1}\phi^{m}(x,a)\tri{\phi^{\ms}(x,a),\betas}}.
\]
Following the same approach as for the first claim, for any any
$\theta\in\bbR^{d_m}$ with $\nrm*{\theta}_{2}\leq{}1$, we have
\begin{align*}
\textstyle
  \tri*{\theta,\betas_m}
&=
  \frac{1}{K}\sum_{a\in\cA}\En_{(x,\ls)\sim\cD}\brk*{\tri*{\cov_m^{-1}\phi^{m}(x,a),\theta}\tri{\phi^{\ms}(x,a),\betas}} \\
&\leq
  \sqrt{\frac{1}{K}\sum_{a\in\cA}\En_{(x,\ls)\sim\cD}\tri*{\cov_m^{-1}\phi^{m}(x,a),\theta}^{2}\cdot
\frac{1}{K}\sum_{a\in\cA}\En_{(x,\ls)\sim\cD}\tri{\phi^{\ms}(x,a),\betas}^{2}}
  \\
&=
  \sqrt{\tri*{\Sigma_{m}^{-1}\theta,\theta}\cdot
\frac{1}{K}\sum_{a\in\cA}\En_{(x,\ls)\sim\cD}\tri{\phi^{\ms}(x,a),\betas}^{2}}\\
&\leq{} \tau/\gamma,
\end{align*}
where we again have used that $\nrm*{\betas}_{2}\leq{}1$.
\end{proof}

\begin{lemma}
  \label{lem:square_loss_translation}
  For all $i\in\brk*{M}$, and all $j\geq{}\ms$,
  \begin{equation}
    \label{eq:square_loss_translation}
    \Delta_{i,j}\leq{}L(\pisq_i) - L(\pisq_j)\leq{}\sqrt{4K\cdot\cE_{i,j}}.
  \end{equation}
\end{lemma}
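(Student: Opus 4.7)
The plan is to handle the two inequalities separately, using Propositions~\ref{prop:beta_basics} and~\ref{prop:norm_bounds} for the first and a standard squared-loss-to-policy translation for the second.

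For the first inequality $\Delta_{i,j} \leq L(\pisq_i) - L(\pisq_j)$: by Proposition~\ref{prop:norm_bounds}, $\pisq_i \in \Picompact_i$, so $L(\pisq_i) \geq L^\star_i$. By Proposition~\ref{prop:beta_basics}, for $j \geq \ms$ we have $\pisq_j = \pis$, and since $\pis$ is the globally optimal policy and belongs to $\Picompact_j$ (as $j \geq \ms$), we have $L(\pisq_j) = L(\pis) = L^\star_j$. Subtracting gives $L(\pisq_i) - L(\pisq_j) \geq L^\star_i - L^\star_j = \Delta_{i,j}$.

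For the second inequality, I will use realizability together with the standard observation that the policy regret is bounded by the pointwise regression error. Define $f_i(x,a) = \tri{\betas_i, \phi^i(x,a)}$ and $f_j(x,a) = \tri{\betas_j, \phi^j(x,a)}$. Since $j \geq \ms$, Proposition~\ref{prop:beta_basics} and realizability give $f_j(x,a) = f^\star(x,a) = \EE[\ell(a)\mid x]$, so $L(\pi) = \EE[f_j(x,\pi(x))]$ for every policy $\pi$. Writing $\pi_1 = \pisq_i$ and $\pi_2 = \pisq_j = \pis$, for each fixed $x$ we decompose
\begin{align*}
f_j(x,\pi_1(x)) - f_j(x,\pi_2(x))
&= \bigl(f_j(x,\pi_1(x)) - f_i(x,\pi_1(x))\bigr) + \bigl(f_i(x,\pi_1(x)) - f_i(x,\pi_2(x))\bigr) \\
&\quad + \bigl(f_i(x,\pi_2(x)) - f_j(x,\pi_2(x))\bigr),
\end{align*}
and observe that the middle term is non-positive because $\pi_1(x) = \argmin_a f_i(x,a)$. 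Hence $f_j(x,\pi_1(x)) - f_j(x,\pi_2(x)) \leq 2 \max_{a\in\cA} |f_i(x,a) - f_j(x,a)|$.

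Taking expectations, applying $\max_a z_a \leq \sqrt{\sum_a z_a^2}$, and then Jensen's inequality gives
\begin{align*}
L(\pisq_i) - L(\pisq_j)
&\leq 2\,\EE_x \sqrt{\sum_{a\in\cA} (f_i(x,a)-f_j(x,a))^2} \\
&\leq 2 \sqrt{\sum_{a\in\cA} \EE_x (f_i(x,a) - f_j(x,a))^2}
= 2\sqrt{K \cdot \cE_{i,j}},
\end{align*}
which is the desired bound. No step should present serious difficulty; the only subtle point is invoking Proposition~\ref{prop:beta_basics} to identify $f_j$ with the Bayes regressor $f^\star$, which is what makes $L$ equal to the expected value of $f_j$ and thereby allows the pointwise regression comparison to translate directly into a policy gap bound.
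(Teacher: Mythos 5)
Your proposal is correct and follows essentially the same route as the paper: the first inequality is obtained exactly as in the paper from $\pisq_i\in\Picompact_i$ and $\pisq_j=\pis$ (via \pref{prop:norm_bounds} and \pref{prop:beta_basics}), and the second uses the same three-term decomposition whose middle term is non-positive by the definition of $\pisq_i$, yielding a bound by $2\En_x\max_a\abs{f_i-f_j}$. The only cosmetic difference is that you apply $\max_a z_a\leq\sqrt{\sum_a z_a^2}$ before Jensen while the paper applies Jensen first, which is immaterial.
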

\begin{proof}[\pfref{lem:square_loss_translation}]
Observe that we have
\[
\Ls_i-\Ls_j \leq{} L(\pisq_i) - L(\pi^{\star}_j) = L(\pisq_i) - L(\pisq_j),
\]
where the inequality holds because $\pisq_i\in\Picompact_i$ and the
equality follows from \pref{prop:beta_basics} and  the assumption that $j\geq\ms$. Using realizability along with the representation for $\betas_j$ from \pref{prop:beta_basics}, we write
\[
L(\pisq_i) - L(\pisq_j)=\En_{x\sim\cD}\brk*{
\tri*{\betas_j,\phi^{j}(x,\pisq_i(x))}-
\tri*{\betas_j,\phi^{j}(x,\pisq_j(x))}
}
\]
For each $x$, we have
\[
\tri*{\betas_i,\phi^{i}(x,\pisq_j(x))}-\tri*{\betas_i,\phi^{i}(x,\pisq_i(x))}\geq{}0,
\]
which follows from the definition of $\pisq_i$. We add this inequality to the preceding equation to get
\begin{align*}
L(\pisq_i) - L(\pisq_j)&\leq{}\En_{x\sim\cD}\brk*{
\tri*{\betas_j,\phi^{j}(x,\pisq_i(x))}-\tri*{\betas_i,\phi^{i}(x,\pisq_i(x))}}\\
&~~~~+\En_{x\sim\cD}\brk*{\tri*{\betas_i,\phi^{i}(x,\pisq_j(x))}-\tri*{\betas_j,\phi^{j}(x,\pisq_j(x))}
}\\
&\leq{} 2\En_{x\sim\cD}\max_{a}\abs*{
\tri*{\betas_i,\phi^{i}(x,a)}-\tri*{\betas_j,\phi^{j}(x,a)}
}.
\end{align*}
Lastly, using Jensen's inequality, we have
\begin{align*}
  \En_{x\sim\cD}\max_{a}\abs*{
\tri*{\betas_i,\phi^{i}(x,a)}-\tri*{\betas_j,\phi^{j}(x,a)}
}
&\leq{} \textstyle
\sqrt{\En_{x\sim\cD}\max_{a}\prn*{
\tri*{\betas_i,\phi^{i}(x,a)}-\tri*{\betas_j,\phi^{j}(x,a)}
}^{2}}\\
&\leq{} \textstyle
\sqrt{K\cdot{}\frac{1}{K}\sum_{a\in\cA}\En_{x\sim\cD}\prn*{
\tri*{\betas_i,\phi^{i}(x,a)}-\tri*{\betas_j,\phi^{j}(x,a)}
}^{2}}\\
&=\sqrt{K\cdot\cE_{i,j}}.\tag*\qedhere
\end{align*}
\end{proof}

\subsubsection{Decomposition of regret}
To proceed with the analysis we require some additional notation. We let $N$ be the number of values that
$\mh$ takes on throughout the execution of the algorithm (i.e., $N$ is
the number of candidate policy classes that are tried), and let $\mh_k$ for $k\in\brk*{N}$ denote the $k$th such
value. We let $\cI_k\subseteq\brk*{T}$ denote the interval for which
$\mh=\mh_k$, and let $T_k$ denote the first timestep in this interval.

We let $S_t$ denote the value of the set $S$ at step $t$ (after
uniform exploration has occurred, if it occurred). We let
$\cIbar_k=\cI_k\setminus{}S_T$ denote the rounds in interval $k$ in
which uniform exploration did not occur.

We let
$\wh{\cE}_{ij}(t)$ the random variable defined by running
\estimateresidual using the dataset
$H_j(t)=\crl*{(\phi^{j}(x_s,a_s),\ls_s(a
_s))}_{s\in{}S_t}$ and empirical
second moment matrices $\empcov_i$ and $\empcov_j$ at time $t$. Note
that $\wh{\cE}_{i,j}(t)$ is well-defined even for pairs $(i,j)$ for which
the algorithm does not invoke \estimateresidual at time $t$.

We partition the intervals as follows: Let $k_0$ be the first interval
containing $t\geq{}T_{\ms}^{\min}$, and let $k_{1}$ be the first
inteval for which $k\geq{}\ms$. We will eventually show that with high
probability $k_1\geq{}N$, or in
other words, once the algorithm reaches a class containing the optimal
policy it never leaves (if it reaches such a class, that is).

Let $\Reg(\cI_k)$ denote the regret to $\ms$ incurred throughout
interval $k$. We bound regret to $\pis$ as
\begin{equation}
\Reg\leq{} \sum_{k=1}^{k_0-1}\Reg(\cI_k) +
\sum_{k=k_0}^{k_1-1}\Reg(\cI_k) +
\sum_{k=k_1}^{N}\Reg(\cI_k).\label{eq:regret_decomposition}
\end{equation}
% \begin{equation}
% \Reg\leq{} \Reg(S_T) + \sum_{k=1}^{k_0-1}\Reg(\cIbar_k) +
% \sum_{k=k_0}^{k_1-1}\Reg(\cIbar_k) +
% \sum_{k=k_1}^{N}\Reg(\cIbar_k).\label{eq:regret_decomposition}
% \end{equation}

The main result in this subsection is \pref{lem:good_event} which
shows that with high probability, the estimators $\wh{\cE}_{i,j}$ and
\expix{} instances invoked by the algorithm behave as
expected, and various quantities arising in the regret analysis are
bounded appropriately.
\begin{lemma}
\label{lem:good_event}
Let $A$ be the event that the following properties hold:
\begin{enumerate}
\item For all $a\in\cA$ and $t\in\brk*{T}$,
  $\abs*{\ls_t(a)}\leq{}O\prn*{\tau\sqrt{\log(KT/\delta_0)}}$.\hfill \textnormal{(event $A_1$)}
\item For all $t\geq{}\Tmin_1$,
  $\frac{1}{8}K^{\expexp}t^{1-\expexp}\leq{}\abs*{S_t}\leq{}4K^{\expexp}t^{1-\expexp}$. \hfill
  \textnormal{(event $A_2$)}
\item For all $i < j$, for all $t\geq{}\Tmin_j$,
  $\abs*{\wh{\cE}_{i,j}(t)-\cE_{i,j}}\leq{}\frac{1}{2}\cE_{i,j} +
  \alpha_{j,t}$.  \hfill
  \textnormal{(event $A_3$)}
\item For all $k$,
$
\sum_{t\in\cIbar_k}\ls_t(a_t)-\ls_t(\pisq_{\mh_k}(x_t))
\leq{}O\prn*{\tau\sqrt{d_{\mh_k}\abs*{\cIbar_k}\cdot{}K\log^{2}(TK/\delta_0)\log{}d_{\mh_k}}}$.\\
\hspace*{0pt}\hfill
  \textnormal{(event $A_4$)}
% \]
\item For all $i,j\in\brk*{M}$ and all intervals $\cI=\brk*{t_1,t_2}$,
\[
\sum_{t\in\cI}\ls_t(\pisq_i(x_t))-\ls_t(\pisq_j(x_t)) \leq{}
\abs*{\cI}\cdot{}(L(\pisq_i) - L(\pisq_j)) +O\prn*{
\tau\sqrt{\abs*{\cI}\log(2/\delta_0)}
}.
\]
\hfill
  \textnormal{(event $A_5$)}
\end{enumerate}
When $C_1$ and $C_2$ are sufficiently large constants, event $A$ holds
with probability at least $1-\delta$.
\end{lemma}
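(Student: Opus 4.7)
The plan is to establish each event $A_i$ with probability at least $1-\delta/5$ and then combine them via a single union bound; the choice $\delta_0=\delta/(10M^2T^2)$ is precisely tuned so that all further union bounds (over the $M$ classes, the $T$ time steps, and the pairs of interval endpoints) are absorbed cheaply. Event $A_1$ is immediate from~\pref{prop:loss_bounded}. For event $A_2$, I would write $\abs*{S_t}=\sum_{s\leq t}\mathbb{1}\crl*{s\in S}$ as a sum of independent Bernoullis with means $\mu_s=(K/s)^{\kappa}\wedge 1$, whose cumulative expectation is of order $K+K^{\kappa}t^{1-\kappa}$; a multiplicative Chernoff bound with a union over $t\leq T$ then yields both directions of the inequality. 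The term $\log^{1/(1-\kappa)}(2/\delta_0)$ inside $T_1^{\mathrm{min}}$ is precisely what is needed for the expectation $K^{\kappa}t^{1-\kappa}$ to dominate the deviation $\sqrt{K^{\kappa}t^{1-\kappa}\log(T/\delta_0)}$, and the additive $K$ accounts for the indices on which $\mu_s=1$.

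Event $A_3$ is the technical core and requires~\pref{thm:residual_est_main}. At time $t$, the labeled samples fed into \estimateresidual{} are $\crl*{(\phi^{j}(x_s,a_s),\ls_s(a_s))}_{s\in S_t}$; since $a_s$ is drawn uniformly from $\cA$ on exploration rounds, the feature vector $\phi^{j}(x_s,a_s)$ has covariance exactly $\cov_j$ and subgaussian parameter $\tau^2$, while $y_s=\ls_s(a_s)$ is subgaussian with parameter $O(\tau^2)$ by~\pref{prop:loss_subgaussian}. The unlabeled samples used to build $\empcov_j$ are formed from all $t$ contexts averaged across actions, so $m=t$; the condition $t\geq T_j^{\mathrm{min}}$ is designed so that $t\gtrsim\tau^{4}d_j\log(2/\delta_0)/\gamma^2$, which is exactly the preamble of~\pref{thm:residual_est_main}. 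Substituting the labeled sample count $n=\abs*{S_t}\asymp K^{\kappa}t^{1-\kappa}$ (guaranteed by event $A_2$), together with the bound $\nrm*{\En\brk*{xy}}_{2}\leq\tau^{2}$ from~\pref{prop:norm_bounds}, into the deviation estimate~\pref{eq:residual_est_main} produces exactly the two terms comprising $\alpha_{j,t}$, for a suitable absolute constant $C_1$. A union bound over the $O(M^2)$ pairs $(i,j)$ and over $t\leq T$ completes the step.

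For event $A_4$ I would apply~\pref{prop:exp4} independently to each of the at most $M$ fresh \expix{} instances spawned by the algorithm. By~\pref{prop:linear_natarajan} the Natarajan dimension of $\Picompact_m$ is $\Otilde(d_m)$, and because \expix{} is updated only on rounds in $\cIbar_k$, its regret on the $k$-th instance is bounded, against any policy in $\Picompact_{\mh_k}$, by $\Otilde(\tau\sqrt{K d_{\mh_k}\abs*{\cIbar_k}\log(TK/\delta_0)})$; the target policy $\pisq_{\mh_k}$ lies in $\Picompact_{\mh_k}$ by~\pref{prop:norm_bounds}. Event $A_5$ follows from Azuma-Hoeffding applied to the i.i.d.\ mean-zero subgaussian differences $\ls_t(\pisq_i(x_t))-\ls_t(\pisq_j(x_t))$, together with a union bound over the $O(M^2T^2)$ triples $(i,j,\cI)$. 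The main obstacle is the bookkeeping in step $A_3$: one must carefully verify that the definition of $\alpha_{j,t}$ matches what~\pref{thm:residual_est_main} delivers when the label subgaussian parameter is $O(\tau^2)$ and when the unlabeled sample count $m=t$ differs from the labeled sample count $n\asymp K^{\kappa}t^{1-\kappa}$---this mismatch in sample sizes is exactly why $\alpha_{j,t}$ contains two distinct terms with different $t$-scalings, and tracking the factors $\tau,\gamma,\sigma$ precisely through the two terms is the delicate part.
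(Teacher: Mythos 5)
Your proposal follows essentially the same route as the paper's proof: $A_1$ via \pref{prop:loss_bounded}; $A_2$ by concentration of the Bernoulli exploration indicators (the paper uses Bernstein rather than multiplicative Chernoff, an immaterial difference); $A_3$ by applying \pref{thm:residual_est_main} to the i.i.d.\ exploration samples with $n=\abs*{S_t}$ from $A_2$, $m=t$, $\nrm*{\En\brk*{xy}}_2\leq\tau^2$ from \pref{prop:norm_bounds}, and the subgaussian loss bound from \pref{prop:loss_subgaussian}; $A_4$ via \pref{prop:exp4}, \pref{prop:linear_natarajan}, and containment of $\pisq_{\mh_k}$; $A_5$ via Hoeffding; all combined by a union bound absorbed by $\delta_0=\delta/(10M^2T^2)$. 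The parameter accounting you flag as delicate (how $\sigma^2=O(\tau^2)$, $\eigmin(\cov_j)\geq\gamma^2$, and the mismatched sample counts $n\asymp K^{\kappa}t^{1-\kappa}$ versus $m=t$ produce the two terms of $\alpha_{j,t}$) matches the paper's derivation, so the proposal is correct.
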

\begin{proof}[\pfref{lem:good_event}]
First, note that event $A_1$ holds with probability at least
$1-\delta/10$ by \pref{prop:loss_bounded}. 
% For the remainder of the
% proof, we let $\delta_0>0$ be arbitrary, then show how tuning this
% parameter as in the algorithm description leads to the conclusion of
% the lemma.

We now move on to $A_2$. For any fixed $t$, Bernstein's inequality implies that with
probability at least $1-\delta_0$,
\[
\abs*{S_t} \geq{} \En_t\abs*{S_t} - \sqrt{4
  \En\abs*{S_t}\log(2/\delta_0)} - \log(2/\delta_0) \geq{}
\frac{1}{2}\En\abs*{S_k} - 3\log(2/\delta_0),
\]
and likewise implies that $\abs*{S_t}\leq{}\frac{3}{2}\En\abs*{S_t} +
3\log(2/\delta_0)$. Next, note that
\[
\textstyle\En\abs*{S_t}=\sum_{s=1}^{t}\prn*{1\wedge\frac{K^{\expexp}}{s^{\expexp}}}.
\]
It follows that
$\En\abs*{S_t}\leq{}2K^{\expexp}t^{1-\expexp}$.\wontfix{I didn't
  check this constant :p} We also
have $\En\abs*{S_t}\geq{}K^{\expexp}(t^{1-\expexp}-2K^{1-\expexp})$,
which is lower bounded by $K^{\expexp}t^{1-\expexp}/2$ once
$t\geq{}4^{\frac{1}{1-\expexp}}K$, and in particular once
$t\geq{}\Tmin_1$ whenever $C_2$ is sufficiently large.
If we union bound over all
$t$, these results together imply that once
$t\geq{}(30\log(2/\delta_0)/K^{\expexp})^{\frac{1}{1-\expexp}}$ (which is implied by
$t\geq{}\Tmin_1$ when $C_2$ is large enough), then with probability at least
$1-\delta_0T\geq{}1-\delta/10$, for all $t$,
\[
\frac{1}{8}K^{\expexp}t^{1-\expexp}\leq{}\abs*{S_t}\leq{}4K^{\expexp}t^{1-\expexp}.
\]
For $A_3$, let $t$ and $i<j$ be fixed. Note that conditioned on the
size of $\abs*{S_t}$, the examples
$\crl*{\phi^{j}(x_s,a_s),\ls_s(a_s))}_{s\in{}S_t}$ are
i.i.d. Consequently, \pref{thm:residual_est_main} implies that with probability at least $1-\delta_0$,
\[
  \abs*{\wh{\cE}_{i,j}(t)-\cE_{i,j}} \leq{} \frac{1}{2}\cE_{i,j} 
+
O\prn*{\frac{\tau^{6}}{\gamma^{4}}\cdot\frac{d_j^{1/2}\log^{2}(2d_j/\delta_0)}{\abs*{S_t}}
+ 
\frac{\tau^{10}}{\gamma^{8}}\cdot\frac{d_j\log(2/\delta_0)}{t}
},
\]
where we have used \pref{prop:loss_subgaussian} to show that
$\ls_s(a_s)\sim\subG(4\tau^{2})$ and used
\pref{prop:norm_bounds} to show that
$\nrm*{\frac{1}{K}\sum_{a\in\cA}\En\brk*{\phi^{j}(x,a)\ls(a)}}_{2}\leq{}\tau^{2}$. Conditioned
on $A_2$, we have $\abs*{S_t}=\Omega(K^{\expexp}t^{1-\expexp})$ once
$t\geq{}\Tmin_1$, and so when $C_1$ is a sufficiently large absolute constant,
$\abs*{\wh{\cE}_{i,j}-\cE_{i,j}} \leq{} \frac{1}{2}\cE_{i,j} +
\alpha_{j,t}$. By union bound, we get that conditioned on event $A_2$,
event $A_3$ holds with probability at least $1-M^{3}T\delta_0\geq{}1-\delta/10$.

To prove $A_4$, we appeal to \pref{prop:exp4}. To do so, we
verify the following facts: 1. Losses belong to $\subG(4\tau^{2})$
(\pref{prop:loss_subgaussian}) 2. Each policy class $\Picompact_m$ is
compact and contains $\pisq_m$ (compactness is immediate, containment
of $\pisq_m$ follows from \pref{prop:norm_bounds}) 3. The Natarajan
dimension of $\Picompact_m$ is at most $O(d_m\log(d_m))$
(\pref{prop:linear_natarajan}).

Now let $k\in\brk*{N}$ be fixed. Since \pref{prop:exp4} provides
an anytime regret guarantee for \expix{}, and since the
context-loss pairs fed into the algorithm still follow the
distribution $\cD$ (the step at which we perform uniform exploration
does not alter the distribution). Consequently, conditioned on the
history up until time $T_k$, we have that with probability at
least
$1-\delta_0$,
\[
\sum_{t\in\cIbar_k}\ls_t(a_t)-\ls_t(\pisq_{\mh_k}(x_t))
\leq{}O\prn*{\tau\sqrt{d_{\mh_k}\abs*{\cIbar_k}\cdot{}K\log^{2}(TK/\delta_0)\log{}d_{\mh_k}}},
\]
since $\cIbar_k$ is precisely the set of rounds for which the \expix{}
instance was active in epoch $k$. Taking a union bound
over all $M$ \expix{} instances and all possible starting times for
each instances, we have that with probability at least
$1-MT\delta_0\geq{}1-\delta/10$, the inequality above holds for all
$k$. 

For $A_5$, let $i,j\in\brk*{M}$ and the interval $\cI=\crl*{t_1,\ldots,t_2}\subset\brk*{T}$
be fixed. The, since $\ls_t(a)\sim\subG(4\tau^{2})$ for all
$a$. Hoeffding's inequality implies that with probability at least
$1-\delta_{0}$,
\[
\sum_{t\in\cI}\ls_t(\pisq_i(x_t))-\ls_t(\pisq_j(x_t)) \leq{}
\abs*{\cI}\cdot{}(L(\pisq_i) - L(\pisq_j)) +O\prn*{
\tau\sqrt{\abs*{\cI}\log(2/\delta_0)}
}.
\]
By a union bound over all $i,j$ pairs and all such intervals, we have
that $A_5$ occurs with probability at least $1-M^{2}T^{2}\delta_0\geq{}1-\delta/10$.
Taking a union bound over events $A_1$ through $A_5$ leads to the final result.
\end{proof}

\subsubsection{Final bound}
We now use the regret decomposition \pref{eq:regret_decomposition}
in conjunction with
\pref{lem:good_event} to prove the theorem. We use $\Otilde$ to
suppress factors logarithmic in $K$, $T$, $M$, and $\log(1/\delta)$.

Condition on the event $A$ in \pref{lem:good_event}, which happens with probability at least
$1-\delta$ so long as $C_1$ and $C_2$ are sufficiently large absolute
constants.

We begin from the regret decomposition
\[
\Reg\leq{} \sum_{k=1}^{k_0-1}\Reg(\cI_k) +
\sum_{k=k_0}^{k_1-1}\Reg(\cI_k) +
\sum_{k=k_1}^{N}\Reg(\cI_k).
\]
We first handle regret from intervals before $k_0$, which is the
simplest case. Observe that that $\sum_{i=1}^{k_0-1}\abs*{\cI_k}\leq{}\Tmin_{\ms}$. Combined with
event $A_1$, this implies that 
\begin{equation}
\sum_{k=1}^{k_0-1}\Reg(\cI_k) \leq{} 2T_{k_0}\cdot\max_{a\in\cA,t\in\brk*{T}}\abs*{\ls_t(a)} \leq{} \Otilde\prn*{
\frac{\tau^{3}}{\gamma^{2}}\cdot{}d_{\ms}\log^{3/2}(2/\delta)
+\tau\log^{2}(2/\delta) + \tau{}K\log^{1/2}(2/\delta)},\label{eq:main1}
\end{equation}
whenever $\expexp\leq{}1/3$. For every other interval, we bound the regret as follows:
\begin{align*}
\Reg(\cI_k) &= \sum_{t\in\cI_k}\ls_{t}(a_t) - \ls_{t}(\pis(x_t))\\
&= \sum_{t\in\cI_k}\ls_{t}(a_t) - \ls_{t}(\pisq_{\ms}(x_t))
  \quad\text{(using \pref{prop:beta_basics})}\\
&= \sum_{t\in\cI_k}\ls_{t}(a_t) - \ls_{t}(\pisq_{\mh_k}(x_t))  + \sum_{t\in\cI_k}\ls_{t}(\pisq_{\mh_k}(x_t)) - \ls_{t}(\pisq_{\ms}(x_t)).
\end{align*}
For the first summation, using event $A_1$ and $A_4$, we have
\begin{align*}
\sum_{t\in\cI_k}\ls_{t}(a_t) - \ls_{t}(\pisq_{\mh_k}(x_t))
&\leq{} \sum_{t\in\cIbar_k}\ls_{t}(a_t) - \ls_{t}(\pisq_{\mh_k}(x_t))
  +
  \Otilde\prn*{\abs*{\cI_k\cap{}S_{T}}\cdot{}\tau\sqrt{\log(2/\delta)}}\\
&\leq{} \Otilde\prn*{\tau\sqrt{d_{\mh_k}\abs*{\cI_k}\cdot{}K\log^{2}(2/\delta)}} + \Otilde\prn*{\abs*{\cI_k\cap{}S_{T}}\cdot{}\tau\sqrt{\log(2/\delta)}}.
\end{align*}
The second summation is exactly zero when $k\geq{}k_1$, otherwise we invoke event $A_5$ and
\pref{lem:square_loss_translation}, which imply
\begin{align*}
  \sum_{t\in\cI_k}\ls_{t}(\pisq_{\mh_k}(x_t)) -
  \ls_{t}(\pisq_{\ms}(x_t))
&\leq{} \abs*{\cI_k}\cdot{}(L(\pisq_{\mh_k}) - L(\pisq_{\ms_k})) +\Otilde\prn*{
\tau\sqrt{\abs*{\cI_k}\log(2/\delta)}
} \\
&\leq{} O\prn*{\abs*{\cI_k}\sqrt{K\cdot\cE_{\mh_k,\ms}}} +\Otilde\prn*{
\tau\sqrt{\abs*{\cI_k}\log(2/\delta)}
}.
\end{align*}
Combining these results, we get that
\begin{align}
&\sum_{k=k_0}^{k_1-1}\Reg(\cI_k) +
\sum_{k=k_1}^{N}\Reg(\cI_k)\notag\\
&\leq{}
\Otilde\prn*{\abs*{S_{T}}\cdot{}\tau\sqrt{\log(2/\delta)}}
+ \Otilde\prn*{
\sum_{k=k_1}^{N}\tau\sqrt{d_{\mh_k}\abs*{\cI_k}\cdot{}K\log^{2}(2/\delta)}
}\notag
\\
&~~~~+
\Otilde\prn*{\sum_{k=k_0}^{k_1-1}\tau\sqrt{d_{\mh_k}\abs*{\cI_k}\cdot{}K\log^{2}(2/\delta)}
+ \abs*{\cI_k}\sqrt{K\cdot\cE_{\mh_k,\ms}}
+\tau\sqrt{\abs*{\cI_k}\log(2/\delta)}}.\label{eq:main2}
\end{align}
From here we split into two cases.
\paragraph{Regret after $k_1$}
We first claim that if $\mh_{k}\geq{}\ms$, it must be the case that
$k=N$, or in other words, if it happens that we switch to a policy class containing
$\pis$, we never leave this class. Indeed, suppose that
$\mh_{k}\geq{}\ms$, and that at time $T_{k+1}$ we switched to $\mh_{k+1}\neq{}\mh_k$. Then
it must have been the case that for $t=T_{k+1}-1$, there was some $i$
for which
\[
\wh{\cE}_{\mh,i}(t) \geq{} 2\alpha_{i,t}.
\]
But since we have $t\geq{}\Tmin_{i}$, event $A_3$ then implies that
\[
\frac{3}{2}\cE_{\mh,i} \geq{} \wh{\cE}_{\mh,i}(t) - \alpha_{i,t}
\geq{}\alpha_{i,t} > 0,
\]
which is a contradiction because $\cE_{i,j}=0$ for all $\ms\leq{}i<j$
by \pref{prop:beta_basics}. We conclude that
\[
\sum_{k=k_1}^{N}\tau\sqrt{d_{\mh_k}\abs*{\cI_k}\cdot{}K\log^2(2/\delta)}
= \tau\sqrt{d_{\mh_{N}}\abs*{\cI_{N}}\cdot{}K\log^2(2/\delta)},
\]
in the case where $k_1=N$, and is zero otherwise. It remains to show
that if we happen to overshoot the class $\ms$ (i.e., $\mh_{N}>\ms$),
then $d_{\mh_{N}}$ is not too large relative to $d_{\ms}$. 

Suppose $\mh_{N}>\ms$, let $j\ldef\mh_{k_1}$ and consider
the epoch prior
to $N$. At the time $t=T_{N}-1$ at which we switched, the definition of $\mh$ implies
that we must have had
\[
\wh{\cE}_{\mh,\ms}(t) \leq{} 2\alpha_{\ms,t}\quad\text{and}\quad\wh{\cE}_{\mh,j}(t)\geq{}2\alpha_{j,t}.
\]
Using event $A_3$, this implies that
$\cE_{\mh,\ms}\leq{}6\alpha_{\ms,t}$ and
$\frac{3}{2}\cE_{\mh,j}\geq{}\alpha_{j,t}$. However,
\pref{prop:beta_basics} implies $\cE_{\mh,j}=\cE_{\mh,\ms}$,
and so we get that
$\alpha_{j,t}\leq{}9\alpha_{\ms,t}$. Expanding out the
definition for the $\alpha_{m,t}$, and defining
$c_1=\frac{\tau^{6}}{\gamma^{4}}\cdot\frac{1}{K^{\expexp}t^{1-\expexp}}$
and $c_2= \frac{\tau^{10}}{\gamma^{8}}\cdot{}\frac{\log(2/\delta_0)}{t}$,
this implies
\[
% C_1\cdot\prn*{
% \frac{\sqrt{d_j}}{t^{3/4}} + \frac{d_j}{t}
% } \leq{} 9C_1\cdot\prn*{
% \frac{\sqrt{d_{\ms}}}{t^{3/4}} + \frac{d_{\ms}}{t}
% },
C_1\cdot\prn*{c_1\cdot{}d_j^{1/2}\log^{2}(2d_j/\delta_0)+c_2\cdot{}d_j}
\leq{} 9C_1\cdot\prn*{c_1\cdot{}d_{\ms}^{1/2}\log^{2}(2d_{\ms}/\delta_0)+c_2\cdot{}d_{\ms}}
\]
or, simplifying,
\[
c_1\cdot{}d_{j}^{1/2}\log^{2}(2d_{j}/\delta_0) \leq{} 9c_1\cdot{}d_{\ms}^{1/2}\log^{2}(2d_{\ms}/\delta_0)+c_2\prn*{9d_{\ms}-d_{j}}
\]
Now consider two cases. If $d_j<9d_{\ms}$ we are done. If not, the
inequality above implies $d_{j}^{1/2}\log^{2}(2d_{j}/\delta_0) \leq{}
d_{\ms}^{1/2}\log^{2}(2d_{\ms}/\delta_0)$. We conclude that
$d_j=O(d_{\ms})$, so 
\begin{equation}
\sum_{k=k_1}^{N}\tau\sqrt{d_{\mh_k}\abs*{\cI_k}\cdot{}K\log^2(2/\delta)}
=
O\prn*{\tau\sqrt{d_{\ms}\abs*{\cI_{N}}\cdot{}K\log^2(2/\delta)}}.\label{eq:main3}
\end{equation}

\paragraph{Regret between $k_0$ and $k_1$}
Let $k_0\leq{}k<k_1$. We will bound the term
\[
\tau\sqrt{d_{\mh_k}\abs*{\cI_k}\cdot{}K\log^2(2/\delta)}
+ \abs*{\cI_k}\sqrt{K\cdot\cE_{\mh_k,\ms}}
\]
appearing in \pref{eq:main2}. For the first term, note that we
trivially have $d_{\mh_k}\leq{}d_{\ms}$. For the second,
consider $t=T_{k+1}-2$. Since we did not switch at this time, we have
$\wh{\cE}_{\mh_k,\ms}(t)\leq{}2\alpha_{\ms,t}$. Combined with
event $A_3$, since $t\geq{}T_{\ms}^{\min}-1$, this implies that
$\cE_{\mh_{k},\ms}\leq{}6\alpha_{\ms,t}$, and so
\begin{align}
\abs*{\cI_k}\sqrt{K\cdot\cE_{\mh_k,\ms}}
&\leq{} 
\Otilde\prn*{\frac{\tau^{3}}{\gamma^{2}}\cdot \abs*{\cI_k}\frac{K^{\frac{1}{2}(1-\expexp)}d_{\ms}^{1/4}\log(2/\delta)}{T_{k+1}^{\frac{1}{2}(1-\expexp)}}
  }
+ 
\Otilde\prn*{\frac{\tau^{5}}{\gamma^{4}}\cdot{}\abs*{\cI_k}\sqrt{\frac{Kd_{\ms}\log(2/\delta)}{T_{k+1}}}}\notag
  \\
&\leq{} 
\Otilde\prn*{\frac{\tau^{3}}{\gamma^{2}}\cdot K^{\frac{1}{2}(1-\expexp)}\abs*{\cI_k}^{\frac{1}{2}(1+\expexp)}d_{\ms}^{1/4}\log(2/\delta)
  }
+ 
\Otilde\prn*{\frac{\tau^{5}}{\gamma^{4}}\cdot{}\sqrt{\abs*{\cI_k}Kd_{\ms}\log(2/\delta)}}.\label{eq:main4}
\end{align}

\paragraph{Final result}
We combine equations \pref{eq:main1}, \pref{eq:main2},
\pref{eq:main3}, and \pref{eq:main4}, and use the bound on
$\abs*{S_T}$ from event $A_2$ to get
\begin{align*}
\Reg
&\leq {}
\Otilde\prn*{\sum_{k=1}^{N}\tau\sqrt{K \abs*{\cI_{k}} d_{\ms}\log^{2}(2/\delta)}
+\tau\sqrt{\log(2/\delta)}K^{\expexp}T^{1-\expexp}
+ \frac{\tau^{3}}{\gamma^{2}}\cdot{}d_{\ms}\log^{3/2}(2/\delta)}\\
&+~~~~ \Otilde\prn*{\sum_{k=1}^{N}\frac{\tau^{3}}{\gamma^{2}}\cdot
 K^{\frac{1}{2}(1-\expexp)}\abs*{\cI_k}^{\frac{1}{2}(1+\expexp)}d_{\ms}^{1/4}\log(2/\delta)
+ \sum_{k=1}^{N}\frac{\tau^{5}}{\gamma^{4}}\cdot{}\sqrt{K\abs*{\cI_k}d_{\ms}\log(2/\delta)}}. 
\end{align*}
Using that $\tau/\gamma\geq{}1$ and Jensen's inequality, this
simplifies to an upper bound of
\begin{align*}
% \Reg
&\leq {}
\Otilde\prn*{
\tau\sqrt{\log(2/\delta)}K^{\expexp}T^{1-\expexp}
+\frac{\tau^{3}}{\gamma^{2}}\cdot
 K^{\frac{1}{2}(1-\expexp)}(T\ms)^{\frac{1}{2}(1+\expexp)}d_{\ms}^{1/4}\log(2/\delta)
+ \frac{\tau^{5}}{\gamma^{4}}\cdot{}\sqrt{KT\ms{}d_{\ms}\log^{2}(2/\delta)}}. 
\end{align*}
For the choice $\kappa=1/3$, this becomes
\begin{align*}
 \Reg &\leq{} \Otilde\prn*{
\tau\sqrt{\log(2/\delta)}K^{1/3}T^{2/3}
+\frac{\tau^{3}}{\gamma^{2}}\cdot
  K^{1/3}(T\ms)^{2/3}d_{\ms}^{1/4}\log(2/\delta)
+
        \frac{\tau^{5}}{\gamma^{4}}\cdot{}\sqrt{KT\ms{}d_{\ms}\log^2(2/\delta)}}
  \\
&\leq{} \Otilde\prn*{
  \frac{\tau^{3}}{\gamma^{2}}\cdot
  (Kd_{\ms})^{1/3}(T\ms)^{2/3}\log(2/\delta)
+ \frac{\tau^{5}}{\gamma^{4}}\cdot{}\sqrt{KT\ms{}d_{\ms}\log^2(2/\delta)}}. 
\end{align*}
Whenever this regret bound is non-trivial, both terms above can be
upper bounded as
\[
 \Reg \leq{} \Otilde\prn*{
  \frac{\tau^{4}}{\gamma^{3}}\cdot
  (Kd_{\ms})^{1/3}(T\ms)^{2/3}\log(2/\delta)}.
\]
For the choice $\kappa=1/4$, we have
\begin{align*}
\Reg
&\leq {}
\Otilde\prn*{
\tau\sqrt{\log(2/\delta)}K^{1/4}T^{3/4}
+\frac{\tau^{3}}{\gamma^{2}}\cdot
  K^{3/8}(T\ms)^{5/8}d_{\ms}^{1/4}\log(2/\delta)
+ \frac{\tau^{5}}{\gamma^{4}}\cdot{}\sqrt{KT\ms{}d_{\ms}\log^2(2/\delta)}}. 
\end{align*}
To simplify the middle term above we consider two cases. If
$K^{1/8}d_{\ms}^{1/4}\leq{}(T\ms)^{1/8}$, then
$K^{3/8}(T\ms)^{5/8}d_{\ms}^{1/4}\leq{}K^{1/4}(T\ms)^{3/4}$. If this
does not hold, then we have
\[
K^{3/8}(T\ms)^{5/8}d_{\ms}^{1/4}
= K^{3/8}\sqrt{T\ms}d_{\ms}^{1/4}\cdot(T\ms)^{1/8}
\leq{} \sqrt{KT\ms{}d_{\ms}}.
\]
Combining these results and using again that $\tau/\gamma\geq{}1$, we have
\begin{align*}
\Reg
&\leq {}
\Otilde\prn*{
\frac{\tau^{3}}{\gamma^{2}}\cdot{}K^{1/4}(T\ms)^{3/4}\log(2/\delta)
+ \frac{\tau^{5}}{\gamma^{4}}\cdot{}\sqrt{KT\ms{}d_{\ms}}\log(2/\delta)}.\tag*\qed
\end{align*}

\subsection{Proofs for remaining results}

\begin{proof}[\pfref{thm:model_selection_log}]
This result is a fairly immediate consequence of
\pref{thm:model_selection_main}. Let $i\geq{}1$ be such that
$e^{i-1}\leq{}d_{\ms}\leq{}e^{i}$. Then the feature map
$\bar{\phi}^{i}$ must not have been removed by the duplicate removal
step. Moreover, since $\bar{\phi}^{i}$ was chosen to be the largest
feature map with dimension bounded by $e^{i}$, the policy class it
induces must contain the policy class induced by $\phi^{\ms}$ by nestedness, and the realizability assumption is preserved
by the new set of feature maps. Lastly, we observe that $i\leq{}\log(d_{\ms})\leq{}\log(T)$, and
that $d_{i}\leq{}e\cdot{}d_{\ms}$, leading to the
result.
\end{proof}

%%% Local Variables:
%%% mode: latex
%%% TeX-master: "paper"
%%% End:

%\subsection{Model selection with unknown covariance}

% \section{Proofs from \pref{sec:model_selection_extensions}}

%%% Local Variables:
%%% mode: latex
%%% TeX-master: "paper"
%%% End:

\end{document}